\icmltitlerunning{Best Arm Identification for Stochastic Rising Bandits}
\pgfplotsset{compat=1.18}
\begin{document}

\setlength{\abovedisplayskip}{3pt}
\setlength{\belowdisplayskip}{3pt}
\setlength{\textfloatsep}{5pt}

\twocolumn[
\icmltitle{Best Arm Identification for Stochastic Rising Bandits}

\begin{icmlauthorlist}
    \icmlauthor{Marco Mussi}{poli}
    \icmlauthor{Alessandro Montenegro}{poli}
    \icmlauthor{Francesco Trov\`o}{poli}
    \icmlauthor{Marcello Restelli}{poli}
    \icmlauthor{Alberto Maria Metelli}{poli}
\end{icmlauthorlist}

\icmlaffiliation{poli}{Politecnico di Milano, Milan, Italy}
\icmlcorrespondingauthor{Marco Mussi}{marco.mussi@polimi.it}

\icmlkeywords{Online Learning, Bandits, Stochastic Rising Bandits, Best Arm Identification, Error Probability, Upper Bound, Lower Bound}

\vskip 0.3in
]

\printAffiliationsAndNotice{}

\allowdisplaybreaks[4]

\begin{abstract}
    \settingname (SRBs) model sequential decision-making problems in which the expected reward of the available options increases every time they are selected. This setting captures a wide range of scenarios in which the available options are \emph{learning entities} whose performance improves (in expectation) over time (\eg online best model selection). While previous works addressed the regret minimization problem, this paper focuses on the \textit{fixed-budget Best Arm Identification} (BAI) problem for SRBs. In this scenario, given a fixed budget of rounds, we are asked to provide a recommendation about the best option at the end of the identification process. We propose two algorithms to tackle the above-mentioned setting, namely \ucbeshort, which resorts to a UCB-like approach, and \succrejectshort, which employs a successive reject procedure. Then, we prove that, with a sufficiently large budget, they provide guarantees on the probability of properly identifying the optimal option at the end of the learning process and on the simple regret. Furthermore, we derive a lower bound on the error probability, matched by our \succrejectshort (up to constants), and illustrate how the need for a sufficiently large budget is unavoidable in the SRB setting. Finally, we numerically validate the proposed algorithms in both synthetic and realistic environments.
\end{abstract}
\section{Introduction}\label{sec:intro}

Multi-Armed Bandits~\citep[MAB,][]{lattimore2020bandit} are a well-known framework for sequential decision-making. Given a time horizon, the learner chooses, at each round, an option (\ie arm) and observes a reward, which is a realization of an unknown distribution. The MAB problem is commonly studied in two flavors: \textit{regret minimization}~\citep{auer2002finite} and \textit{best arm identification}~\citep{bubeck2009pure}. In regret minimization, the goal is to control the cumulative loss w.r.t.~the optimal arm over a time horizon. Conversely, in best arm identification, the goal is to provide a recommendation about the best arm at the end of the time \emph{budget}. We are interested in the fixed-budget scenario, where we seek to minimize the error probability of recommending the wrong arm at the end of the time budget, no matter the loss incurred during learning. Furthermore, we require that these algorithms suffer a small \emph{simple regret}.

\vspace{-.1cm}

This work focuses on the \emph{\settingname} (\settingnameshort), a specific instance of the \emph{rested} bandit setting~\citep{tekin2012online} in which the expected reward of an arm increases whenever it is pulled. Online learning in such a scenario has been recently addressed from a regret minimization perspective by~\citet{metelli2022stochastic}, providing no-regret algorithms for \settingnameshort in both the rested and restless cases. 

\vspace{-.1cm}

\textbf{Motivating Example.}~~The SRB setting models several real-world scenarios where arms improve their performance over time. A classic example is the so-called \textit{Combined Algorithm Selection and Hyperparameter optimization}~\citep[CASH,][]{thornton2013auto,kotthoff2017auto,erickson2020autogluon,li2020efficient,zoller2021benchmark}, a problem of paramount importance in \emph{Automated Machine Learning}~\citep[AutoML,][]{feurer2015efficient,yao2018taking,hutter2019automated,mussi2023arlo}. In CASH, the goal is to identify the \emph{best learning algorithm} together with the \emph{best hyperparameter} configuration for a given machine learning task (\eg classification or regression). In this problem, every arm represents a hyperparameter tuner acting on a specific learning algorithm. A pull corresponds to a unit of time/computation in which we improve (on average) the hyperparameter configuration (via the tuner) for the corresponding learning algorithm.\footnote{Additional motivating examples are discussed in Appendix~\ref{apx:motivating_example}.} CASH was handled in a bandit \emph{Best Arm Identification} (BAI) fashion in~\citet{li2020efficient} and~\citet{cella2021best}. The former handles the problem by considering rising bandits with \emph{deterministic} rewards, failing to represent the uncertain nature of such processes. The latter, while allowing stochastic rewards, assumes that the expected rewards evolve according to a \emph{known} parametric functional class, whose parameters have to be learned.

\vspace{-.1cm}

\textbf{Original Contributions.}~~In this paper, we address the design of algorithms to solve the BAI task in the rested \settingnameshort setting when a \emph{fixed budget} is provided.\footnote{We focus on the rested setting only and, thus, from now on, we will omit \quotes{rested} in the setting name.} More specifically, we are interested in algorithms guaranteeing a sufficiently large probability of recommending the arm with the largest expected reward \emph{at the end} of the time budget (as if only this arm were pulled from the beginning) and able to control the simple regret. The main contributions of the paper are:

\vspace{-.08cm}

\begin{itemize}[noitemsep, leftmargin=*, topsep=-2pt]
    \item We propose two \emph{algorithms} for the fixed-budget BAI problem in the \settingnameshort setting: \ucbeshort (an optimistic approach, Section~\ref{sec:optimisticalgorithm}) and \succrejectshort (a phases-based rejection algorithm, Section~\ref{sec:succrejects}). First, we introduce specifically designed estimators required by the algorithms (Section~\ref{sec:estimators}). Then, we provide guarantees on the error probability of the misidentification of the best arm and on the simple regret.
    \item We derive an error probability \emph{lower bound} for the SRB setting, matched by our \succrejectshort (up to constant factors), highlighting the complexity of the problem and the need for a sufficiently large time budget (Section~\ref{sec:lb}). 
    \item Finally, we conduct \emph{numerical simulations} on synthetically generated data and a realistic online best model selection problem (Section~\ref{sec:numericalsimulations}).
\end{itemize}

\vspace{-.08cm}

The proofs of the statements are provided in Appendix~\ref{apx:all_proofs}.
\section{Problem Formulation}\label{sec:problemformulation}

\textbf{Stochastic Rising Bandits (SRBs).}~~We consider a rested Multi-Armed Bandit $\boldsymbol{\nu} = (\nu_i)_{i \in \dsb{K}}$ with a finite number of arms $K$.\footnote{
Let $y, z \in \Nat$, we denote with $\dsb{z} \coloneqq \{ 1, \, \ldots , \, z \}$, and with $\dsb{y, z} \coloneqq \{ y, \, \ldots , \, z \}$.}
Let $T \in \Nat$ be the time budget; at every round $t \in \dsb{T}$, the agent selects an arm $I_t \in \dsb{K}$, plays it, and observes a reward $x_{t} \sim \nu_{I_t} (N_{I_t,t})$, where $\nu_{I_t} (N_{I_t,t})$ is the reward distribution of arm $I_t$ at round $t$ and depends on the number of pulls performed so far $N_{i,t} \coloneqq \sum_{\tau=1}^t \mathds{1} \{I_\tau = i \}$.
The rewards are stochastic, formally $x_{t} \coloneqq \mu_{I_t} (N_{I_t,t}) + \eta_t$, where $\mu_{I_t}(\cdot)$ is the expected reward of arm $I_t$ and $\eta_t$ is a zero-mean $\sigma^2$-subgaussian noise, conditioned to the past.\footnote{
A zero-mean random variable $y \sim \nu$ is $\sigma^2$-subgaussian if it holds that $\mathbb{E}_{y \sim \nu}[e^{\xi y}] \leq e^{\frac{\sigma^2 \xi^2}{2}}$ for every $\xi \in \mathbb{R}$.
}
As customary, we assume that the expected rewards are bounded, formally $\mu_i(n) \in [0,1]$ for every $ i \in \dsb{K}$ and $ n \in \dsb{T}$. 
As in~\citet{metelli2022stochastic}, we focus on a particular family of rested bandits in which the expected rewards are monotonic \emph{non-decreasing} and \emph{concave}.

\vspace{-.06cm}

\begin{ass}[Non-decreasing and concave expected rewards]\label{ass:risingconcave}
Let $\boldsymbol{\nu}$ be a rested MAB, defining $\gamma_i(n) \coloneqq \mu_i (n+1) - \mu_i (n)$, for every $n \in \dsb{0,T}$ and every arm $i \in \dsb{K}$ the expected rewards are non-decreasing and concave:
\begin{align*}
    &\text{Non-decreasing:} \quad \gamma_i(n) \geq 0,  \\
    &\text{Concave:} \qquad \qquad \gamma_i(n+1) \leq \gamma_i(n).
\end{align*}
\end{ass}
Intuitively, the $\gamma_i(n)$ represents the \textit{increment} of the real process $\mu_i( \cdot )$ evaluated at the $n$\textsuperscript{th} pull.
Notice that concavity emerges in several settings, such as the best model selection and economics, representing the decreasing marginal returns~\citep{lehmann2001combinatorial,heidari2016tight}. A discussion on the learnability issues when the concavity assumption does not hold is provided in Appendix~\ref{apx:concavity}.

\textbf{Learning Problem.}~~The goal of fixed-budget BAI in the \settingnameshort setting is to select the arm providing the largest expected reward with a large enough probability given a fixed budget $T \in \mathbb{N}$. Unlike the stationary BAI problem~\citep{audibert2010best}, in which the optimal arm is not changing, in this setting, we need to decide \emph{when} to evaluate the optimality of an arm.
We define optimality by considering the largest expected reward after $T$ pulls. 
Formally, given a time budget $T$, the optimal arm $i^*(T) \in \dsb{K}$, which we assume unique, satisfies $ i^*(T) \coloneqq \argmax_{i \in \dsb{K}} \mu_i (T)$, 
where we highlighted the dependence on $T$ as, with different values of the budget, $i^*(T)$ may change.
Let $i \in \dsb{K} \setminus \{i^*(T)\}$ be a suboptimal arm, we define the suboptimality gap as $\Delta_i(T) \coloneqq \mu_{i^*(T)} (T) - \mu_i (T)$. We employ the notation $(i) \in \dsb{K}$ to denote the $i$\textsuperscript{th} best arm at time $T$ (breaking ties arbitrarily), \ie $\Delta_{(2)}(T) \le \dots \le \Delta_{(K)}(T)$.
Given a rested MAB $\bm{\nu}$ and an algorithm $\mathfrak{A}$ that recommends $\hat{I}^*(T)\in \dsb{K}$, we evaluate its performance with the \emph{error probability}, \ie the probability of recommending a suboptimal arm at the end of the time budget $T$: $$e_T(\bm{\nu},\mathfrak{A}) \coloneqq {\Prob}_{\bm{\nu},\mathfrak{A}} ( \hat{I}^*(T) \neq i^*(T) ).$$ 
Furthermore, we also consider the \emph{simple regret}, \ie the expected return difference between the optimal arm $i^*(T)$ pulled from the beginning and the recommended arm $\hat{I}^*(T)$ at the end of the learning process: $$r_T(\bm{\nu},\mathfrak{A}) \coloneqq \mu_{i^*(T)}(T) - \mu_{\hat{I}^*(T)}(N_{\hat{I}^*(T),T}).$$

\vspace{-0.05cm}

\textbf{Polynomial Increments.}~~We now provide a characterization of a specific class of polynomial functions to upper bound the increments $\gamma_i(n)$. 

\vspace{-0.01cm}

\begin{ass}[Polynomial $\gamma_i(n)$] \label{ass:boundedgamma}
Let $\boldsymbol{\nu}$ be a rested MAB, there exist $c > 0$ and $\beta > 1$ such that for every arm $ i \in \dsb{K}$ and number of pulls $n \in \dsb{0,T}$ it holds that $\gamma_i(n) \leq c n^{- \beta}$.
\end{ass}

\vspace{-0.1cm}

We anticipate that, even if our algorithms will not require this assumption, it will be used for deriving the lower bound and for providing more human-readable guarantees.
\section{Estimators}\label{sec:estimators}

In this section, we introduce the estimators of the arm expected reward employed by our algorithms. A visual representation of such estimators is provided in Figure~\ref{fig:estimators}.

Let $\varepsilon \in (0, 1/2)$, we employ an \emph{adaptive arm-dependent window} size $h(N_{i,t-1}) \coloneqq \lfloor \varepsilon N_{i,t-1} \rfloor$ to include the most recent samples only collected from arm $i \in \dsb{K}$, avoiding the use of samples that are no longer representative. In the following, we design a \emph{pessimistic} estimator and an \emph{optimistic} estimator of the expected reward of each arm at the end of the budget time $T$, \ie $\mu_i(T)$.\footnote{Na\"ively computing the estimators from their definition requires $\mathcal{O}(h(N_{i, t-1}))$ operations. An efficient way to incrementally update them, using $\mathcal{O}(1)$ operations, is provided in Appendix~\ref{apx:efficientupdate}.}

\textbf{\textcolor{vibrantBlue}{Pessimistic Estimator.}}~~The \textit{pessimistic} estimator $\hat{\mu}_i(N_{i,t-1})$ is a negatively biased estimate of $\mu_i(T)$ obtained assuming that the function $\mu_i(\cdot)$ remains constant up to time $T$. This corresponds to the minimum admissible value under Assumption~\ref{ass:risingconcave} (\textit{Non-decreasing} constraint).
This estimator is an average of the last $h(N_{i, t-1})$ observed rewards collected from the $i$\textsuperscript{th} arm, formally:
\begin{align}
    \hat{\mu}_i(N_{i,t-1}) \coloneqq  \frac{1}{h(N_{i,t-1}) }\sum_{\tau =N_{i, t-1} -h(N_{i,t-1}) +1 }^{N_{i, t-1}} x_\tau. \label{eq:mu_hat}
\end{align}
The estimator enjoys the following concentration property.
\begin{restatable}[Concentration of $\hat{\mu}_i$]{lemma}{concentrationmuhat} \label{lem:concentration_muhat}
Under Assumption~\ref{ass:risingconcave}, for every $a > 0$, simultaneously for every arm $i \in \dsb{K}$ and number of pulls $n \in \dsb{0,T}$, with probability at least $1 - 2 T K e^{-a/2}$ it holds that:
\begin{align*}
    - \hat{\beta}_i(n) - \hat{\zeta}_i(n) \le \hat{\mu}_i(n) - \mu_i(n) \le \hat{\beta}_i(n),
\end{align*}
where: 
\begin{align*}
\hat{\beta}_i(n) &\coloneqq \sigma \sqrt{\frac{a}{h(n)}}, \\ 
\hat{\zeta}_i(n) &\coloneqq \frac{1}{2} (2T - n + h(n) - 1) \ \gamma_i(n - h(n) + 1).
\end{align*}
\end{restatable}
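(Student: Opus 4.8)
The plan is to split the estimation error into a mean-zero stochastic fluctuation and a deterministic bias, and to control the two separately. Since the $h(n)$ rewards averaged in $\hat{\mu}_i(n)$ are exactly those collected at the $(n-h(n)+1)$-th through the $n$-th pull of arm $i$, their means are $\mu_i(n-h(n)+1),\dots,\mu_i(n)$, so $\mathbb{E}[\hat{\mu}_i(n)] = \frac{1}{h(n)}\sum_{m=n-h(n)+1}^{n}\mu_i(m)$. I would then write $\hat{\mu}_i(n) - \mu_i(n) = \left( \hat{\mu}_i(n) - \mathbb{E}[\hat{\mu}_i(n)] \right) + \left( \mathbb{E}[\hat{\mu}_i(n)] - \mu_i(n) \right)$ and treat each summand in turn.

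\textbf{Stochastic term.} Reindexing the noise by arm and pull count ($\eta_{i,m}$ attached to the $m$-th pull of arm $i$), the fluctuation equals $\frac{1}{h(n)}\sum_{m=n-h(n)+1}^{n}\eta_{i,m}$, an average over a \emph{deterministic} index range of conditionally $\sigma^2$-subgaussian martingale differences; by a standard iterated-conditioning argument it is itself $\frac{\sigma^2}{h(n)}$-subgaussian, so a Chernoff bound gives $\Prob\left( |\hat{\mu}_i(n) - \mathbb{E}[\hat{\mu}_i(n)]| > \hat{\beta}_i(n) \right) \le 2 e^{-a/2}$ with $\hat{\beta}_i(n) = \sigma\sqrt{a/h(n)}$. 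A union bound over the at most $K$ arms and $T$ relevant values of $n$ (the cases with $h(n)=0$ being vacuous) yields that, with probability at least $1 - 2TK e^{-a/2}$, the event $\mathcal{E} \coloneqq \{ |\hat{\mu}_i(n) - \mathbb{E}[\hat{\mu}_i(n)]| \le \hat{\beta}_i(n)\ \forall i, n \}$ holds.

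\textbf{Bias term.} On $\mathcal{E}$ it only remains to sandwich $\mathbb{E}[\hat{\mu}_i(n)] - \mu_i(n)$. For the upper side, non-decreasingness (Assumption~\ref{ass:risingconcave}) gives $\mu_i(m) \le \mu_i(n)$ for every $m \le n$, hence $\mathbb{E}[\hat{\mu}_i(n)] - \mu_i(n) \le 0$ and therefore $\hat{\mu}_i(n) - \mu_i(n) \le \hat{\beta}_i(n)$. For the lower side, I would expand $\mu_i(n) - \mu_i(m) = \sum_{k=m}^{n-1}\gamma_i(k)$ and use concavity (the increments $\gamma_i(k)$ are non-increasing) to bound every increment appearing in the window by $\gamma_i(n-h(n)+1)$, the largest one; averaging the resulting arithmetic sum $\sum_{m=n-h(n)+1}^{n}(n-m) = \frac{h(n)(h(n)-1)}{2}$ gives $\mu_i(n) - \mathbb{E}[\hat{\mu}_i(n)] \le \frac{h(n)-1}{2}\gamma_i(n-h(n)+1) \le \hat{\zeta}_i(n)$, the last inequality because $n \le T$ makes $2T-n \ge h(n)$ harmless. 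Combining this with the two-sided bound on $\mathcal{E}$ yields the announced inequalities.

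\textbf{Expected obstacle.} The delicate point is justifying the \emph{simultaneous}-over-$(i,n)$ concentration even though the window $\mathcal{T}_{i,t}$ is data-dependent: one cannot treat $\hat{\mu}_i(n)$ as an empirical mean of independent samples, and the fix is precisely the per-arm, per-pull reward-tape reindexing so that, for each fixed $(i,n)$, the partial sum entering $\hat{\mu}_i(n)$ runs over a deterministic set of martingale differences; this is routine in the rested-bandit literature but must be spelled out. The remainder — carrying the floor in $h(n)=\lfloor \varepsilon N_{i,t-1}\rfloor$ and the exact window endpoints through the arithmetic sums so that the constant in $\hat{\zeta}_i(n)$ comes out as stated — is purely mechanical.
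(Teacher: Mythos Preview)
Your decomposition into a deterministic bias plus a martingale fluctuation, with Hoeffding--Azuma applied on the per-arm reward tape and a union bound over $(i,n)$, is precisely the paper's argument: it invokes Lemma~\ref{lemma:secondEstimator} for the bias and Lemma~\ref{lemma:secondEstimatorConc} for the concentration, then union-bounds over the arms.

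One caveat worth flagging: the paper's proof actually compares $\hat\mu_i(n)$ to $\mu_i(T)$, not to $\mu_i(n)$ --- the printed $\mu_i(n)$ (and the sign on the lower $\hat\beta_i$) are typos, as the $T$ sitting inside $\hat\zeta_i(n)$ and the stated role of $\hat\mu_i$ as a pessimistic estimator of $\mu_i(T)$ make clear. Relative to $\mu_i(T)$ the bias bound reads $\mu_i(T)-\bar\mu_i(n)\le\tfrac12(2T-2n+h(n)-1)\,\gamma_i(n-h(n)+1)$, which is where the $T$-dependent $\hat\zeta_i$ genuinely comes from; your computation against $\mu_i(n)$ gives the strictly smaller $\tfrac{h(n)-1}{2}\gamma_i(\cdot)$ and then throws away the slack to reach $\hat\zeta_i(n)$, which is harmless for the literal statement but not the version used downstream.
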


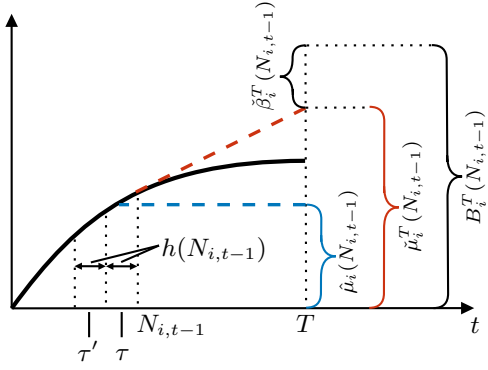
\begin{figure}[t]
    \centering
    \resizebox{0.8\linewidth}{!}{\tikzset{every picture/.style={line width=0.75pt}} 

\begin{tikzpicture}[x=0.75pt,y=0.75pt,yscale=-1,xscale=1]

\draw    (140.18,180.24) -- (140.14,43) ;
\draw [shift={(140.14,40)}, rotate = 89.99] [fill={rgb, 255:red, 0; green, 0; blue, 0 }  ][line width=0.08]  [draw opacity=0] (6.25,-3) -- (0,0) -- (6.25,3) -- cycle    ;
\draw    (140.18,180.24) -- (357.2,180.13) ;
\draw [shift={(360.2,180.13)}, rotate = 179.97] [fill={rgb, 255:red, 0; green, 0; blue, 0 }  ][line width=0.08]  [draw opacity=0] (6.25,-3) -- (0,0) -- (6.25,3) -- cycle    ;
\draw [line width=1.5]    (140.18,180.24) .. controls (172.5,135.75) and (204,110.25) .. (279.75,110) ;
\draw  [dash pattern={on 0.84pt off 2.51pt}]  (279.93,54.67) -- (280.22,179.87) ;
\draw  [dash pattern={on 0.84pt off 2.51pt}]  (200.18,127.71) -- (200.18,179.59) ;
\draw  [dash pattern={on 0.84pt off 2.51pt}]  (185.04,136.17) -- (185.02,180.08) ;
\draw  [dash pattern={on 0.84pt off 2.51pt}]  (170.18,147.45) -- (170.21,179.75) ;
\draw  [dash pattern={on 4.5pt off 4.5pt}, vibrantRed, very thick]  (279.86,84.86) -- (194.71,127.14) ;
\draw  [dash pattern={on 4.5pt off 4.5pt}, vibrantBlue, very thick]  (280,130.88) -- (191.25,130.88) ;
\draw[vibrantBlue]    (280.25,179.92) .. controls (284.92,179.94) and (287.26,177.62) .. (287.29,172.95) -- (287.32,165.43) .. controls (287.35,158.76) and (289.7,155.44) .. (294.37,155.47) .. controls (289.7,155.44) and (287.39,152.1) .. (287.42,145.43)(287.41,148.43) -- (287.46,137.91) .. controls (287.49,133.24) and (285.17,130.9) .. (280.5,130.87) ;
\draw[vibrantRed]   (310.25,180) .. controls (314.92,180.01) and (317.25,177.68) .. (317.26,173.01) -- (317.32,142.16) .. controls (317.33,135.49) and (319.67,132.16) .. (324.34,132.17) .. controls (319.67,132.16) and (317.35,128.83) .. (317.36,122.16)(317.36,125.16) -- (317.42,91.3) .. controls (317.43,86.63) and (315.1,84.3) .. (310.43,84.29) ;
\draw  [dash pattern={on 0.84pt off 2.51pt}]  (280.14,84.71) -- (310.28,84.64) ;
\draw  [dash pattern={on 0.84pt off 2.51pt}]  (280.57,55.07) -- (340.14,55) ;
\draw   (279.86,55.04) .. controls (275.79,55.07) and (273.77,57.11) .. (273.8,61.18) -- (273.8,61.18) .. controls (273.84,66.99) and (271.82,69.91) .. (267.75,69.94) .. controls (271.82,69.91) and (273.88,72.81) .. (273.92,78.62)(273.9,76.01) -- (273.92,78.62) .. controls (273.95,82.69) and (275.99,84.71) .. (280.06,84.68) ;
\draw  (340.82,180.13) .. controls (345.49,180.1) and (347.81,177.75) .. (347.78,173.08) -- (347.5,127.49) .. controls (347.46,120.82) and (349.77,117.47) .. (354.44,117.44) .. controls (349.77,117.47) and (347.42,114.16) .. (347.38,107.49)(347.4,110.49) -- (347.1,61.89) .. controls (347.07,57.22) and (344.73,54.9) .. (340.06,54.93) ;

\draw (357,183) node [anchor=north west][inner sep=0.75pt]  [font=\footnotesize] [align=left] {$t$};
\draw (274,182) node [anchor=north west][inner sep=0.75pt]  [font=\footnotesize] [align=left] {$T$};
\draw (188,198) node [anchor=north west][inner sep=0.75pt]  [font=\footnotesize] [align=left] {$\tau$}; 
\draw (170,194) node [anchor=north west][inner sep=0.75pt]  [font=\footnotesize] [align=left] {$\tau'$};


\draw (323,155) node [anchor=north west][inner sep=0.75pt]  [font=\scriptsize,rotate=-270]  {$\check{\mu}_{i}^{T}( N_{i,t-1})$};
\draw (294,175) node [anchor=north west][inner sep=0.75pt]  [font=\scriptsize,rotate=-270]  {$\hat{\mu}_{i}( N_{i,t-1})$};
\draw (198,182) node [anchor=north west][inner sep=0.75pt]  [font=\footnotesize]  {$N_{i,t-1}$};
\draw (253,88) node [anchor=north west][inner sep=0.75pt]  [font=\scriptsize,rotate=-270]  {$\check{\beta}_{i}^{T}( N_{i,t-1})$};
\draw (353,142) node [anchor=north west][inner sep=0.75pt]  [font=\scriptsize,rotate=-270]  {$B_{i}^{T}(N_{i,t-1})$};

\draw    (173.03,160.18) -- (182,160.2) ;
\draw [shift={(185,160.21)}, rotate = 180.16] [fill={rgb, 255:red, 0; green, 0; blue, 0 }  ][line width=0.08]  [draw opacity=0] (3.57,-1.72) -- (0,0) -- (3.57,1.72) -- cycle    ;
\draw [shift={(170.03,160.17)}, rotate = 0.16] [fill={rgb, 255:red, 0; green, 0; blue, 0 }  ][line width=0.08]  [draw opacity=0] (3.57,-1.72) -- (0,0) -- (3.57,1.72) -- cycle    ;
\draw    (188,160.22) -- (196.97,160.24) ;
\draw [shift={(199.97,160.25)}, rotate = 180.16] [fill={rgb, 255:red, 0; green, 0; blue, 0 }  ][line width=0.08]  [draw opacity=0] (3.57,-1.72) -- (0,0) -- (3.57,1.72) -- cycle    ;
\draw [shift={(185,160.21)}, rotate = 0.16] [fill={rgb, 255:red, 0; green, 0; blue, 0 }  ][line width=0.08]  [draw opacity=0] (3.57,-1.72) -- (0,0) -- (3.57,1.72) -- cycle    ;
\draw    (192.25,158.21) -- (210.25,150.33) ;
\draw    (177.83,158.38) -- (210.25,150.33) ;

\draw (210,145) node [anchor=north west][inner sep=0.75pt]  [font=\footnotesize]{$ h(N_{i,t-1})$};

\draw    (177,194) -- (177,182) ;
\draw    (192.5,194) -- (192.5,182) ;

\end{tikzpicture}}
    \caption{Graphical representation of the pessimistic $\hat{\mu}_i(N_{i,t-1})$ and the optimistic $\check{\mu}_i^T(N_{i,t-1})$ estimators.}
    \label{fig:estimators}
\end{figure}


As supported by intuition, we observe that the estimator is affected by a negative bias that is represented by $\hat{\zeta}_i(n)$ that vanishes as $n \rightarrow + \infty$ under Assumption~\ref{ass:risingconcave} with a rate that depends on the increment functions $\gamma_i(\cdot)$. Considering also the term $\hat{\beta}_i(n)$ and recalling that $h(n) = \mathcal{O}(n)$, under Assumption~\ref{ass:boundedgamma}, the concentration rate is $\mathcal{O}(n^{-1/2} + c T n^{-\beta})$.

\textbf{\textcolor{vibrantRed}{Optimistic Estimator.}}\footnote{This estimator has appeared first in~\cite{metelli2022stochastic} and is here adapted for the BAI task.}~~The \textit{optimistic} estimator $\check{\mu}_i^T(N_{i,t-1})$ is a positively biased estimation of $\mu_i(T)$ obtained assuming that function $\mu_i(\cdot)$ linearly increases up to time $T$. This corresponds to the maximum value admissible under Assumption~\ref{ass:risingconcave} (\textit{Concavity} constraint).
The estimator is constructed by adding to the pessimistic estimator $\hat{\mu}_i(N_{i,t-1})$ an estimate of the increment occurring in the next steps up to $T$. The latter uses the last $2 h(N_{i,t-1})$ samples to obtain an upper bound of such growth thanks to the concavity assumption, formally:
\begin{align} 
    &\check{\mu}_i^T(N_{i,t-1}) \coloneqq \hat{\mu}_i(N_{i,t-1}) + \nonumber \\ &+ \sum_{\tau =N_{i, t-1} -h(N_{i,t-1}) +1 }^{N_{i, t-1}} (T-\tau) \frac{x_\tau - x_{\tau-h(N_{i,t-1})}}{h(N_{i,t-1})^2}.
    \label{eq:mu_check}
\end{align}
The estimator displays the following concentration property.

\begin{restatable}[Concentration of $\check{\mu}_i^T$]{lemma}{concentrationmucheck} \label{lem:concentration_mucheck}
Under Assumption~\ref{ass:risingconcave}, for every $a > 0$, simultaneously for every arm $i \in \dsb{K}$ and number of pulls $n \in \dsb{0,T}$, with probability at least $1 - 2 T K e^{-a/10}$ it holds that:
\begin{align*}
    - \check{\beta}_i^T(n) \le \check{\mu}_i^T(n) - \mu_i(n)\le  \check{\beta}_i^T(n) + \check{\zeta}_i^T(n),
\end{align*}
where:
\begin{align*}
\check{\beta}_i^T(n) &\coloneqq \sigma \ (T-n+h(n)-1)\sqrt{\frac{a}{h(n)^3}}, \\
\check{\zeta}_i^T(n) &\coloneqq \frac{1}{2} (2T - n + h(n) - 1) \ \gamma_i(n - 2h(n) \! + \! 1).
\end{align*}
\end{restatable}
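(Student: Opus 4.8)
\emph{Proof approach (plan).} The plan is to mirror the proof of Lemma~\ref{lem:concentration_muhat}: decompose $\check{\mu}_i^T(n)-\mu_i(T)$ into a deterministic \emph{bias} term and a stochastic \emph{noise} term, bound the former with Assumption~\ref{ass:risingconcave} and the latter with a sub-Gaussian tail inequality, and finally take a union bound over the $K$ arms and the $T{+}1$ possible values of the pull count $n$. (For values of $n$ too small for the windows $\mathcal{T}_{i,t},\mathcal{S}_{i,t}$ to be well defined, the claim is vacuous; also, the statement's $\mu_i(n)$ and the design target $\mu_i(T)\ge \mu_i(n)$ are interchangeable here, since the monotone gap $\mu_i(T)-\mu_i(n)$ is non-negative and bounded by $(T-n)\gamma_i(n)\le\check{\zeta}_i^T(n)$ by concavity, so it is absorbed on either side.)

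\textbf{Decomposition and bias.} Substituting $x_\tau=\mu_i(N_{i,\tau})+\eta_\tau$ into~\eqref{eq:mu_hat} and~\eqref{eq:mu_check}, write $\check{\mu}_i^T(n)=\overline{\mu}+\overline{\eta}$, where $\overline{\mu}$ is the same expression with every $x_\tau$ replaced by $\mu_i(N_{i,\tau})$ and $\overline{\eta}$ is the induced linear combination of the noises $\{\eta_\tau:\tau\in\mathcal{T}_{i,t}\}\cup\{\eta_{\tau'}:(\tau,\tau')\in\mathcal{S}_{i,t}\}$; since $\mathcal{S}_{i,t}$ pairs the last $h(n)$-window with the \emph{second}-last one, these index sets are disjoint and $\overline{\eta}$ mixes $2h(n)$ \emph{distinct} conditionally $\sigma^2$-subgaussian terms. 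For the lower bound $\overline{\mu}-\mu_i(T)\ge 0$: for each window index $j$ the backward finite difference $\bigl(\mu_i(N_{i,j})-\mu_i(N_{i,j}-h(n))\bigr)/h(n)$ is an average of increments $\gamma_i(\cdot)$ over an interval to the \emph{left} of $N_{i,j}$, hence $\ge\gamma_i(N_{i,j})$ by concavity, which is itself $\ge$ the average increment of $\mu_i$ over $[N_{i,j},T]$; therefore $\mu_i(N_{i,j})+(T-N_{i,j})\cdot(\text{slope})\ge\mu_i(T)$ term by term, and averaging over the window preserves it. For the upper bound $\overline{\mu}-\mu_i(T)\le\check{\zeta}_i^T(n)$: drop the non-negative $\mu_i(T)-\mu_i(N_{i,j})$, use concavity again to get $\mu_i(N_{i,j})-\mu_i(N_{i,j}-h(n))\le h(n)\,\gamma_i(N_{i,j}-h(n))\le h(n)\,\gamma_i(n-2h(n)+1)$ (as $N_{i,j}-h(n)\ge n-2h(n)+1$ and $\gamma_i$ is non-increasing), multiply by $T-N_{i,j}\le T-n+h(n)-1$, and average over the $h(n)$ window indices; since $h(n)\le n$ one has $T-n+h(n)-1\le \tfrac12(2T-n+h(n)-1)$, which gives exactly $\check{\zeta}_i^T(n)$.

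\textbf{Noise, union bound, assembly.} Collecting coefficients: a last-window noise $\eta_\tau$ with $N_{i,\tau}=N_{i,j}$ enters $\overline{\eta}$ with weight $\tfrac{1}{h(n)}+\tfrac{T-N_{i,j}}{h(n)^2}$ and its partner second-last-window noise with weight $-\tfrac{T-N_{i,j}}{h(n)^2}$; using $T-N_{i,j}\le T-n+h(n)-1$, the sum of squared coefficients is at most $c\,(T-n+h(n)-1)^2/h(n)$ for an absolute constant $c$ (this crude step is what degrades the exponent from $a/2$ to $a/10$). The sub-Gaussian tail bound then gives ${\Prob}(|\overline{\eta}|>\check{\beta}_i^T(n))\le 2e^{-a/10}$ for each fixed $(i,n)$, and a union bound over $i\in\dsb{K}$ and $n\in\dsb{0,T}$ yields failure probability $2TKe^{-a/10}$. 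On the complementary event the three estimates above combine to give $-\check{\beta}_i^T(n)\le\check{\mu}_i^T(n)-\mu_i(T)\le\check{\beta}_i^T(n)+\check{\zeta}_i^T(n)$ simultaneously for all $i$ and $n$, which is the claim.

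\textbf{Expected main obstacle.} The delicate part is the noise step together with the disjointness observation: one must track precisely which rounds feed $\overline{\eta}$ — the \emph{same} round appears in both $\hat{\mu}_i$ and the correction sum, so the weights partly reinforce and partly cancel — and then bound the variance proxy tightly enough to land on $\check{\beta}_i^T(n)$ with a clean constant. On the deterministic side, the conceptual crux is the sign of the bias: it is exactly concavity turning a backward difference into an over-estimate of the forward increment that makes $\check{\mu}_i^T$ genuinely optimistic for $\mu_i(T)$.
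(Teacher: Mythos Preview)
Your proposal is correct and follows essentially the same route as the paper. The paper's proof simply invokes two auxiliary lemmas: one (Lemma~\ref{lemma:firstEstimator}) showing that the deterministic analogue $\widetilde{\mu}_i^T(n)$ of $\check{\mu}_i^T(n)$ satisfies $0\le \widetilde{\mu}_i^T(n)-\mu_i(T)\le \check{\zeta}_i^T(n)$ by concavity, and one (Lemma~\ref{lemma:firstEstimatorConc}) giving the Hoeffding--Azuma tail bound $\Prob(|\check{\mu}_i^T(n)-\widetilde{\mu}_i^T(n)|>\check{\beta}_i^T(n))\le 2e^{-a/10}$ for each fixed pull count, followed by a union bound over $n\in\dsb{0,T}$ and $i\in\dsb{K}$; your decomposition, bias argument, weight computation, and union bound reproduce exactly these steps inline. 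Two small bookkeeping remarks: the paper averages $T-l$ over the window to get the tighter factor $\tfrac12(2T-2n+h-1)$ before relaxing to $\check{\zeta}_i^T(n)$, whereas you take the crude maximum $T-n+h-1$ first, but both land on the same final bound; and your stated variance proxy should carry $h(n)^3$ in the denominator (or, equivalently, be written for $h(n)\,\overline{\eta}$ as the paper does), though this does not affect the resulting exponent $a/10$.
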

Differently from the pessimistic estimation, the optimistic one displays a positive vanishing bias $\check{\zeta}_i^T(n)$.
Under Assumption~\ref{ass:boundedgamma}, we observe that the overall concentration rate is $\mathcal{O}(Tn^{-3/2} + cTn^{-\beta})$.
\section{Optimistic Algorithm: \ucbe}\label{sec:optimisticalgorithm}

We now introduce and analyze \ucbe (\ucbeshort) an \textit{optimistic} algorithm for the BAI task with a fixed budget for SRBs.
The algorithm explores by means of a UCB-like approach and, for this reason, makes use of the optimistic estimator $\check{\mu}_i^T$ plus a bound to account for the uncertainty of the estimation. In \ucbeshort, the choice of considering the optimistic estimator is natural and obliged since the pessimistic estimator is affected by negative bias and cannot be used to deliver optimistic estimates.

\textbf{Algorithm.}~~\ucbeshort (Algorithm~\ref{alg:optimistic}) requires as input an exploration parameter $a \ge 0$, the window size $\varepsilon \in (0,1/2)$, the time budget $T$, and the number of arms $K$.
At first, it initializes to zero the counters $N_{i,0}$, and sets to $+\infty$ the upper bounds $B_i^T(N_{i,0})$ of all the arms (Line~\ref{line:ucb_init1}).
Subsequently, at each round $t \in \dsb{T}$, the algorithm selects the arm $I_t$ with the largest upper confidence bound (Line~\ref{line:ucb_ucb}):
{\thinmuskip=1mu
\medmuskip=1mu
\thickmuskip=1mu
\begin{equation}\label{eq:ucbebound}
\resizebox{.91\linewidth}{!}{$\displaystyle
    I_t \in \argmax_{i \in \dsb{K}} B_{i}^T(N_{i,t-1}) \coloneqq \check{\mu}_{i}^T(N_{i,t-1}) + \check{\beta}_{i}^T(N_{i,t-1}),$}
\end{equation}
}
with:
{\thinmuskip=1mu
\medmuskip=1mu
\thickmuskip=1mu
\begin{equation*}\resizebox{.99\linewidth}{!}{$\displaystyle
    \check{\beta}_i^T (N_{i,t-1}) \coloneqq \sigma \ (T-N_{i,t-1} + h(N_{i,t-1})-1)
    \sqrt{\frac{a}{h(N_{i,t-1})^3}},$}
\end{equation*}
}%
where $\check{\beta}_i^T(N_{i, t-1})$ represents the exploration bonus (a graphical representation is reported in Figure~\ref{fig:estimators}).
Once the arm is chosen, the algorithm plays it and observes the feedback $x_t$ (Line~\ref{line:ucb_play}).
Then, the optimistic estimate $\check{\mu}_{I_t}^T(N_{I_t,t})$ and the exploration bonus $\check{\beta}_{I_t}^T(N_{I_t,t})$ of the selected arm $I_t$ are updated (Lines~\ref{line:ucb_update_mu}-\ref{line:ucb_update_beta}).
The procedure is repeated until the algorithm reaches the time budget $T$.
The final recommendation is performed using the last computed values of the bounds $B_{i}^T( N_{i,T} )$, returning the arm $\hat{I}^*(T)$ corresponding to the largest upper confidence bound (Line~\ref{line:ucb_reccomendation}).

\vspace{-.03cm}

\textbf{Bound on the Error Probability and Simple Regret of \ucbeshort.}~~We now provide bounds on the error probability and on the simple regret of \ucbeshort. We start with an analysis that considers no further characterization on the increments $\gamma_i(\cdot)$, beyond Assumption~\ref{ass:risingconcave} and, then, we provide a more explicit result under Assumption~\ref{ass:boundedgamma}.

\begin{restatable}[]{thr}{ErrorBoundUCB}\label{thr:ub_error}
Let $\bm{\nu}$ be a rested MAB. Let $\iota \in [0,1]$, under Assumption~\ref{ass:risingconcave}, let $a^*(\iota)$ be the largest positive value of $a$ satisfying: 
\begin{align}
    (1-\iota) T - \sum_{i \neq i^*(T)} y_i(a) > 0, \label{eq:ucb_can_be_learned}
\end{align}
where for every $i \in \dsb{K}$, $y_i(a)$ is the largest integer for which it holds:
\begin{align}
    \underbrace{T \gamma_i(\lfloor (1-2\varepsilon)y \rfloor )}_{({A})} + \underbrace{2T \sigma \sqrt{\frac{a}{\lfloor\varepsilon y\rfloor^3 }}}_{({B})} \geq \Delta_i(T). \label{eq:implicit_NiT}
\end{align}
If $a^*(\iota)$ exists, then for every $a \in \left[ 0, a^*(\iota) \right]$ the error probability of \emph{\ucbeshort} is bounded by:
\begin{align}
    e_T(\bm{\nu},\text{\emph{\ucbeshort}}) \leq 2 T K \exp{\left(-\frac{a}{10}\right)}. \label{eq:ucb_error_prob_ub}
\end{align}
Furthermore, the simple regret is bounded by:
\begin{equation}\label{eq:simpleRegretUCB}
\begin{aligned}
    r_T(\bm{\nu},\text{\emph{\ucbeshort}}) &\le e_T(\bm{\nu},\text{\emph{\ucbeshort}}) \\
    & \quad + \mu_{i^*(T)}(T) - \mu_{i^*(T)}(\lfloor \iota T \rfloor).
\end{aligned}
\end{equation}
\end{restatable}

Some comments are in order. Let us start by commenting on the error probability when $\iota=0$.
First, $a^*(0)$ is defined implicitly, depending on the constants $\sigma$, $T$, the increments $\gamma_i(\cdot)$, and the suboptimality gaps $\Delta_i(T)$. In principle, there might exist no $a^*(0) > 0$ fulfilling condition in Equation~\eqref{eq:ucb_can_be_learned} (this can happen, for instance, when the budget $T$ is not large enough), and, in such a case, we are unable to provide theoretical guarantees on the error probability of \ucbeshort. Second, the result presented in Theorem~\ref{thr:ub_error} holds for generic increasing and concave expected reward functions (\ie Assumption~\ref{ass:risingconcave} only). This shows that, as expected, the error probability decreases when the exploration parameter $a$ increases until we reach the threshold $a^*(0)$. Intuitively, the value of $a^*(0)$ sets the maximum amount of exploration we should use for learning. Clearly, as we increase $\iota$, the value of $a^*(\iota)$ decreases. The role of $\iota$ emerges in the simple regret bound. Indeed, it decomposes into the sum of: ($i$) the error probability, in the best case $a=a^*(\iota)$ of order $\mathcal{O}(TKe^{-a^*(\iota)/10})$, which is increasing in $\iota$ and ($ii$) the optimal arm $i^*(T)$ expected reward difference $\mu_{i^*(T)}(T) - \mu_{i^*(T)}(\lfloor\iota T\rfloor)$ evaluated in $T$ and $\iota T$, which is decreasing in $\iota$. Thus, we experience a trade-off in the choice of $\iota$ that takes the role of tightening the simple regret bound and is completely absent in the algorithm.

\vspace{-.05cm}

Under Assumption~\ref{ass:boundedgamma}, \ie the polynomial characterization of the increment $\gamma_i(\cdot)$, we derive a result providing the time budget $T$ under which $a^*$ exists and its explicit value. 

\begin{restatable}[]{coroll}{ErrorBoundUCBbeta}\label{coroll:ub_error_beta}
    Let $\bm{\nu}$ be a rested MAB. Let $\iota \in [0,1]$, under Assumptions~\ref{ass:risingconcave} and~\ref{ass:boundedgamma},
    if the time budget $T$ satisfies:
    \begin{equation}\label{eq:timeBoundUCB}
       \resizebox{7.5cm}{!}{$\displaystyle
           T \geq \begin{cases}
        & (1-\iota)^{-1}\left(c^{\frac{1}{\beta}}(1-2\varepsilon)^{-1} \; (H_{1,{1}/{\beta}}(T)) + (K-1) \right)^\frac{\beta}{\beta-1} \\
            & \qquad\qquad\qquad\qquad\qquad \text{if} \  \beta \in (1,3/2) \\
        &(1-\iota)^{-1}\left(c^{\frac{2}{3}}(1-2\varepsilon)^{-\frac{2}{3}\beta} \; (H_{1,{2}/{3}}(T)) + (K-1)\right)^3  \\
            & \qquad\qquad\qquad\qquad\qquad \text{if} \ \beta \in [ 3/2, +\infty )
    \end{cases},            $}
    \end{equation}
    there exists $a^*>0$ defined as:
    \begin{equation}\label{eq:aStar}
   \resizebox{7.5cm}{!}{$\displaystyle
        a^*(\iota) = 
        \begin{cases}
            &\frac{\varepsilon^3}{4\sigma^2}\left( \left(\frac{((1-\iota)^{-1}T)^{1-1/\beta}-(K-1)}{H_{1,1/ \beta}(T)}\right)^{\beta} - c(1-2\varepsilon)^{-\beta} \right)^2 \\
            &\qquad\qquad\qquad\qquad\qquad \text{if} \ \beta \in \left( 1, 3/2 \right) \\
            &\frac{\varepsilon^3}{4\sigma^2}\left( \left(\frac{((1-\iota)^{-1}T)^{1/3}-(K-1)}{H_{1,2/3}(T)}\right)^{3/2} - c(1-2\varepsilon)^{-\beta} \right)^2 \\
            & \qquad\qquad\qquad\qquad\qquad \text{if} \ \beta \in \left[ 3/2, +\infty \right)
        \end{cases},$}
    \end{equation}
    where:
    \begin{align}\label{eq:complxIndex}
        H_{1,\eta}(T) \coloneqq \sum_{i \neq i^*(T)}{\Delta_i(T)^{-\eta}} \qquad \text{for $\eta > 0$.}
    \end{align}
    Then, for every $a \in \left[ 0, a^*(\iota)\right]$, the error probability of \emph{\ucbeshort} is bounded as in Equation~\eqref{eq:ucb_error_prob_ub}. Furthermore, the simple regret of \emph{\ucbeshort} is bounded as:
    {\thinmuskip=1mu
\medmuskip=1mu
\thickmuskip=1mu
    \begin{equation}\resizebox{7.2cm}{!}{$\displaystyle
            r_T(\bm{\nu},\text{\emph{\ucbeshort}})   \le  e_T(\bm{\nu},\text{\emph{\ucbeshort}}) + c 2^{\beta} \iota^{-\beta} T^{1-\beta}.$}
    \end{equation}}
\end{restatable}

We notice that the error probability of Corollary~\ref{coroll:ub_error_beta} is the same as that of Theorem~\ref{thr:ub_error} and holds under the condition that the time budget $T$ fulfills Equation~\eqref{eq:timeBoundUCB} (which is a recursive inequality in $T$). We defer a detailed discussion on this condition to Remark~\ref{remark}, where we show the existence of a finite value of $T$ fulfilling Equation~\eqref{eq:timeBoundUCB} under mild conditions.

\vspace{-.05cm}

Let us remark that term $H_{1,\eta}(T)$ characterizes the complexity of the \settingnameshort setting, corresponding to term $H_1$ of~\citet{audibert2010best} for the classical BAI problem when $\eta=2$.
As expected, in the small-$\beta$ regime (i.e., $\beta \in (1,3/2)$), looking at the dependence of $H_{1,1/\beta}(T)$ on $\beta$, we realize that the complexity of a problem decreases as the parameter $\beta$ increases. Indeed, the larger $\beta$, the faster the expected reward reaches a stationary behavior.
Nevertheless, even in the large-$\beta$ regime (i.e., $\beta \in [ 3/2,+\infty)$), the complexity of the problem is governed by $H_{1,2/3}(T)$, leading to an error probability larger than the one for BAI in standard bandits~\citep{audibert2010best}. This can be explained by the fact that \ucbeshort uses the optimistic estimator that, as shown in Section~\ref{sec:estimators}, enjoys a slower concentration rate compared to the standard sample mean, even for stationary bandits.

\RestyleAlgo{ruled}
\LinesNumbered
\begin{algorithm}[t]
\caption{\ucbeshort.}\label{alg:optimistic}
\small
\SetKwInOut{Input}{Input}
\Input{Time budget $T$, Number of arms $K$, \\ Window size $\varepsilon$, Exploration parameter $a$}
 
Initialize $\text{ } N_{i,0} = 0, B_i^T(0) = +\infty,  \forall i \in \dsb{K}$ \label{line:ucb_init1}\\

\For{$t \in \dsb{T}$}{
Compute $I_t \in \argmax_{i \in \dsb{K}}B_i^T(N_{i,t-1})$ \label{line:ucb_ucb} \\
Pull arm $I_t$ and observe $x_t$ \label{line:ucb_play} \\
$N_{I_t,t} = N_{I_t,t-1} + 1$ \label{line:ucb_increment1} \\
$N_{i,t} = N_{i,t-1}, \quad \forall i \neq I_t$ \label{line:ucb_increment2} \\
Update $\check{\mu}_{I_t}^T(N_{I_t,t})$ \label{line:ucb_update_mu} \\
Update $\check{\beta}_{I_t}^T(N_{I_t,t})$ \label{line:ucb_update_beta} \\
Compute $B_{I_t}^T(N_{I_t,t}) = \check{\mu}_{I_t}^T(N_{I_t,t}) + \check{\beta}_{I_t}^T(N_{I_t,t})$ \label{line:ucb_computeB}
}

\vspace{.126cm}

Recommend $\widehat{I}^*(T) \in \argmax_{i \in \dsb{K}}B_i^T( N_{i,T} )$ \label{line:ucb_reccomendation} \\

\vspace{.126cm}

\end{algorithm}

\vspace{-.05cm}

This two-regime behavior has an interesting interpretation when comparing Corollary~\ref{coroll:ub_error_beta} with Theorem~\ref{thr:ub_error}. Indeed, $\beta = 3/2$ is the break-even threshold in which the two terms of the l.h.s.~of Equation~\eqref{eq:implicit_NiT} have the same convergence rate.
Specifically, the term $({A})$ takes into account the expected rewards growth (\ie the bias in the estimators), while $({B})$ considers the uncertainty in the estimations of the \ucbeshort algorithm (\ie the variance).
Intuitively, when the expected reward function displays a slow growth (\ie $\gamma_i(n) \leq c n^{-\beta}$ with $\beta < 3/2$), the bias term $({A})$ dominates the variance term $({B})$ and the value of $a^*$ changes accordingly. Conversely, when the variance term $({B})$ is the dominant one (\ie $\gamma_i(n) \leq c n^{-\beta}$ with $\beta > 3/2$), the threshold $a^*$ is governed by the estimation uncertainty, being the bias negligible. Finally, we observe the trade-off introduced by the window size parameter $\varepsilon \in (0,1/2)$. Indeed, by enlarging $\varepsilon$ we have a larger budget requirement (Equation~\ref{eq:timeBoundUCB}) but a smaller error probability since $a^*$ enlarges (Equation~\ref{eq:aStar}).

\vspace{-.05cm}

Concerning the simple regret, we appreciate the role of $\iota$ and the considerations of Theorem~\ref{thr:ub_error} are confirmed. Indeed, by taking for instance $\iota=1/2$, the simple regret displays rate $\widetilde{\mathcal{O}}(T^{1-\beta})$ since the error probability is, in the best case $a=a^*(1/2)$, of order $\mathcal{O}(TK e^{-T/H_{1,\eta}(T)})$, which is negligible.

\vspace{-.05cm}

As common in optimistic algorithms for BAI~\citep{audibert2010best}, setting a theoretically sound value of parameter $a$ (\ie computing $a^*$), requires knowledge of the setting, \ie the complexity index $H_{1,\eta}(T)$.\footnote{We defer to Section~\ref{sec:numericalsimulations} the empirical study on the sensitivity of \ucbeshort to parameter $a$.} In the next section, we propose an algorithm that relaxes this requirement.
\vspace{-0.1cm}

\section{Phase-Based Algorithm: \succreject}\label{sec:succrejects}
In this section, we introduce the \succreject (\succrejectshort), a phase-based solution inspired by~\citet{audibert2010best}, which overcomes the drawback of \ucbeshort of requiring knowledge of $H_{1,\eta}(T)$.

\RestyleAlgo{ruled}
\LinesNumbered
\begin{algorithm}[t]
\caption{\succrejectshort.}\label{alg:succrejects}
\small
\SetKwInOut{Input}{Input}

\Input{Time budget $T$, Number of arms $K$, \\ Window size $\varepsilon$}
 
Initialize $\text{ } t \leftarrow 1, \;  N_0 = 0, \; \mathcal{X}_0 = \dsb{K}$ \label{line:sr_init} \\

\For{$j \in \dsb{K-1}$}{
\For{$i \in \mathcal{X}_{j-1}$}{
\For{$l \in \dsb{N_{j-1}+1, N_j}$}{
Pull arm $i$ and observe $x_t$, $t \leftarrow t + 1$ \label{line:sr_play} \\

}
Update $\hat{\mu}_{i}(N_j)$ 
}
Define $ \overline{I}_j \in \argmin_{i \in \mathcal{X}_{j-1}} \hat{\mu}_{i}(N_j)$ \\
Update $\mathcal{X}_j = \mathcal{X}_{j-1} \, \setminus \{\overline{I}_j\}$ \label{line:sr_reject_worst}
}
Recommend $\widehat{I}^{*}(T) \in \mathcal{X}_{K-1}$ (unique) \label{line:sr_reccomendation} \\

\end{algorithm}

\textbf{Algorithm.}~~\succrejectshort (Algorithm~\ref{alg:succrejects}) takes as input the time budget $T$, the number of arms $K$, and the window size $\varepsilon$.
At first, it initializes the set of the active arms $\mathcal{X}_0$ with all the available arms (Line~\ref{line:sr_init}), that will contain the arms that are still candidates to be recommended.
The entire process proceeds through $K-1$ phases. 
More specifically, during the $j$\textsuperscript{th} phase, the arms still remaining in the active arms set $\mathcal{X}_{j-1}$ are played (Line~\ref{line:sr_play}) for $N_j - N_{j-1}$ times each, where:
\begin{align}
    N_j \coloneqq \left\lceil \frac{1}{\overline{\log}(K)} \frac{T - K}{K + 1 - j} \right\rceil, \label{eq:chosen_values_nj}
\end{align}
and $\overline{\log}(K) \coloneqq \frac{1}{2} + \sum_{i=2}^{K} \frac{1}{i}$.
At the end of each phase $j$, the arm with the smallest value of the pessimistic estimator $\hat{\mu}_i(N_j)$ is discarded from the set of active arms (Line~\ref{line:sr_reject_worst}).
At the end of the $(K-1)$\textsuperscript{th} phase, the algorithm recommends the (unique) arm remaining in $\mathcal{X}_{K-1}$ (Line~\ref{line:sr_reccomendation}).

It is worth noting that \succrejectshort makes use of the pessimistic estimator $\hat{\mu}_i(n)$. Even if both estimators are viable for \succrejectshort, the choice of the pessimistic estimator is justified by its better concentration rate $\mathcal{O}(n^{-1/2})$ compared to that of the optimistic one $\mathcal{O}(Tn^{-3/2})$, being $n \le T$ (see Section~\ref{sec:estimators}).
Note that the phase lengths are the ones adopted by~\citet{audibert2010best}. This choice allows us to provide theoretical results without requiring domain knowledge (still under a large enough budget). 

\textbf{Bound on the Error Probability and on the Simple Regret of \succrejectshort.}~~We now provide the error probability and simple regret guarantees for \succrejectshort. We start with a general analysis and, then, provide a more explicit result under Assumption~\ref{ass:boundedgamma}.

\vspace{0.2cm}

\begin{restatable}[]{thr}{UpperBoundErrorSRGeneral}\label{thr:ub_error_sr_general}
    Let $\bm{\nu}$ be a rested MAB. Under Assumption~\ref{ass:risingconcave} if the time budget $T$ satisfies:
        {\thinmuskip=1mu
\medmuskip=1mu
\thickmuskip=1mu
    \begin{equation}\label{eq:timeBudgetLBSR2}
       \forall j \in \dsb{K-1}: \ T \gamma_{(1)}(\lceil (1-\varepsilon) N_j \rceil) \le \frac{\Delta_{(K-j+1)}(T)}{2},
    \end{equation}}%
then, the error probability of \emph{\succrejectshort} is bounded by:
{\thinmuskip=1mu
\medmuskip=1mu
\thickmuskip=1mu
    \begin{equation}\label{eq:SRErrorProb}
    \resizebox{7.3cm}{!}{$\displaystyle
        e_T(\bm{\nu},\text{\emph{\succrejectshort}}) \leq \! \frac{K(K-1)}{2} \exp \!\left( \! - \frac{\varepsilon}{8\sigma^2} \! \cdot \! \frac{T-K}{\overbar{\log}(K) H_2(T)} \right),$}
    \end{equation}}%
where:
    \begin{align*}
    H_2(T) &\coloneqq \max_{i \in \dsb{K}} \left\{ i \Delta_{(i)}(T)^{-2} \right\}, \quad
    \overbar{\log}(K) = \frac{1}{2} + \sum_{i=2}^{K}{\frac{1}{i}}.
    \end{align*}
Furthermore, the simple regret of \emph{\succrejectshort} is bounded by:
    \begin{equation}
        \begin{aligned}
            r_T(\bm{\nu},\text{\emph{\succrejectshort}})  & \le  e_T(\bm{\nu},\text{\emph{\succrejectshort}}) \\
            & \quad + \mu_{i^*(T)}(T) - \mu_{i^*(T)}(N_{K-1}).
        \end{aligned}
    \end{equation}
\end{restatable}

The complexity of the problem is characterized by the term $H_2(T)$ that, for the standard MAB setting, reduces to the $H_2$ term of~\citet{audibert2010best}. Furthermore, when the condition of Equation~\eqref{eq:timeBudgetLBSR2} on the time budget $T$ is satisfied, the error probability coincides with that of the \srbubeck algorithm for standard MABs (apart for constant terms). Regarding the simple regret, we obtain a decomposition similar to that of \ucbeshort characterized by the error probability plus the expected reward difference of the optimal arm $i^*(T)$ evaluated at $T$ and at $N_{K-1}$, \ie the number of pulls of the recommended arm. We now provide a sufficient condition for the minimum time budget requirement of Equation~\eqref{eq:timeBudgetLBSR2} that holds under Assumption~\ref{ass:boundedgamma} and a more explicit simple regret expression.

\begin{restatable}[]{coroll}{UpperBoundErrorSR}\label{thr:ub_error_sr}
    Let $\bm{\nu}$ be a rested MAB. Under Assumptions~\ref{ass:risingconcave} and \ref{ass:boundedgamma}, if the time budget $T$ satisfies:
    \begin{align}\label{eq:timeBudgetLBSR}
        T \ge & \max \Big\{ 2K, 2^{\frac{{\beta+1}}{\beta-1}} c^{\frac{1}{\beta-1}} (1-\varepsilon)^{-\frac{\beta}{\beta-1}} \,  \nonumber \\
        & \quad  \cdot \, \overbar{\log}(K)^{\frac{\beta}{\beta-1}} \!\! \max_{i \in\dsb{2,K}} \left\{ i \Delta_{(i)}(T)^{-{\frac{1}{\beta}}} \right\}^{\frac{\beta}{\beta-1}} \Big\},
    \end{align}
    then, the error probability of \emph{\succrejectshort} is bounded as in Equation~\eqref{eq:SRErrorProb}. Furthermore, the simple regret is bounded by:
        {\thinmuskip=1mu
\medmuskip=1mu
\thickmuskip=1mu
    \begin{equation}
        \begin{aligned}
            r_T(\bm{\nu},\text{\emph{\succrejectshort}})  & \le  e_T(\bm{\nu},\text{\emph{\succrejectshort}}) + c4^{\beta}T^{1-\beta}\overline{\log}(K)^\beta.
        \end{aligned}
    \end{equation}}
\end{restatable}

\vspace{-.1cm}

Similarly to the \ucbeshort case, we observe a trade-off in the choice of $\varepsilon$ balancing the time budget requirement (Equation~\ref{eq:timeBudgetLBSR}) and the error probability (Equation~\ref{eq:SRErrorProb}). Furthermore, recalling that $ e_T(\bm{\nu},\text{\succrejectshort}) = \widetilde{\mathcal{O}}(K^2e^{-T/H_2(T)})$, we obtain a simple regret rate of order $\widetilde{\mathcal{O}}(T^{1-\beta})$.

\begin{remark}[About the minimum time budget $T$]\label{remark}
To satisfy the $e_T$ bounds presented in Corollaries~\ref{coroll:ub_error_beta} and~\ref{thr:ub_error_sr}, \emph{\ucbeshort} and \emph{\succrejectshort} require the conditions of  Equations~\eqref{eq:timeBoundUCB} and~\eqref{eq:timeBudgetLBSR} on the time budget $T$, respectively. If the suboptimal arms converge to an expected reward different from that of the optimal arm as $T \rightarrow +\infty$, it is always possible to find a finite value of $T <+\infty$ such that these conditions are fulfilled. Formally, assume that there exists $T_0 < +\infty$ and that for every $T \ge T_0$ we have that for all suboptimal arms $i\neq i^*(T)$ it holds that $\Delta_{i}(T) \ge \Delta_{\infty} > 0$. In such a case, the r.h.s.~of Equations~\eqref{eq:timeBoundUCB} and \eqref{eq:timeBudgetLBSR} are upper bounded by a function of $\Delta_{\infty}$ and are independent on $T$. Instead, if a suboptimal arm converges to the same expected reward as the optimal arm when $T \rightarrow +\infty$, BAI is more challenging and, depending on the speed at which the two arms converge, the learning process slows down. This should not surprise as BAI becomes non-learnable even in standard MABs with multiple optimal arms~\citep{HeideCMC21}.
\end{remark}
\section{Error Probability Lower Bound}
\label{sec:lb}
In this section, we investigate the complexity of the BAI problem for SRBs with a fixed budget. 

\vspace{-.1cm}

\textbf{Minimum Time Budget $\bm{T}$.}~~We show that, under Assumptions~\ref{ass:risingconcave} and~\ref{ass:boundedgamma}, any algorithm requires a minimum time budget $T$ to be guaranteed to identify the optimal arm, even in a deterministic setting.

\vspace{-.1cm}

\begin{restatable}[]{thr}{MinimumT}\label{thr:lb_minimumT}
    For every algorithm $\mathfrak{A}$, there exists a deterministic SRB $\bm{\nu}$ satisfying Assumptions~\ref{ass:risingconcave} and~\ref{ass:boundedgamma} with $c=\beta-1$ and $K \ge 1+8^{1/(\beta-1)}$ such that $e_T(\bm{\nu},\mathfrak{A}) \ge 1 - \frac{1}{K}$ (i.e., $\mathfrak{A}$ does not perform better than the random guess) for some time budgets $T$ unless:
    \begin{align}\label{eq:minTT}
        T \ge 8^{-\frac{1}{\beta-1}} H_{1,1/(\beta-1)}(T) = \!\!\!\! \sum_{i \neq i^*(T)} \!\! \left( \frac{1}{ 8 \Delta_i(T)}\right)^{\frac{1}{\beta-1}}.
    \end{align}
\end{restatable}

\vspace{-.2cm}

In the proof of Theorem~\ref{thr:lb_minimumT}, we show that each suboptimal arm $i \neq i^*(T)$ must be pulled at least an expected number of times of order $\E_{\bm{\nu},\mathfrak{A}}[N_{i,T}] \ge \Omega \left( \Delta_i(T)^{-1/(\beta-1)} \right)$, formalizing the intuition that $i$ should be pulled sufficiently to ensure that, if pulled further, we are sure that it cannot become the optimal arm. 
It is worth comparing this bound on the time budget with the corresponding conditions on the minimum time budget requested by Equations~\eqref{eq:timeBoundUCB} and \eqref{eq:timeBudgetLBSR} for \ucbeshort and \succrejectshort, respectively. Regarding \ucbeshort, we notice that the minimum admissible time budget in the small-$\beta$ regime (\ie $\beta \in (1, 3/2)$) is of order $H_{1,1/\beta}(T)^{\beta/(\beta-1)}$ which is larger than term $H_{1,1/(\beta-1)}(T)$ of Equation~\eqref{eq:minTT} (see Lemma~\ref{lem:ineq_lb}). 
Similarly, in the large-$\beta$ regime (\ie $\beta \in [3/2,+\infty)$), the \ucbeshort requirement is of order $H_{1,2/3}(T)^{3} \ge H_{1,2}(T)$ which is larger than the term of Theorem~\ref{thr:lb_minimumT} since $1/(\beta-1) < 2$. Concerning \succrejectshort, it is easy to show that $  \max_{i \in \dsb{2,K}} \{i \Delta_{(i)}(T)^{-{1}/{\beta}}\}^{\frac{\beta}{\beta-1}}\approx H_{1,1/\beta}(T)^{\frac{\beta}{\beta-1}}$, apart from logarithmic terms (see Lemma~\ref{lem:lemlem}), leading to a condition comparable to that of \ucbeshort.\footnote{For instance, when $\Delta_{(2)} = \cdots = \Delta_{(K)} \coloneqq \Delta$, the lower bound of Theorem~\ref{thr:lb_minimumT} is of order $\Omega(K\Delta^{-1/(\beta-1)})$, while the requirements of \ucbeshort and \succrejectshort is of order $\widetilde{\Omega}(K^{{\beta}/{(\beta-1)}}\Delta^{-1/(\beta-1)})$, showing a $K^{1/(\beta-1)}$ gap.}

\vspace{-.1cm}

\textbf{Error Probability Lower Bound.}~~We now present a lower bound on the error probability that every algorithm performing fixed-budget BAI in the \settingnameshort setting suffers.

\begin{restatable}[]{thr}{LowerBound}\label{thr:lb_error_prob}
For every algorithm $\mathfrak{A}$ run with a time budget $T$ fulfilling Equation~\eqref{eq:minTT}:\footnote{In this theorem, we are highlighting the dependence of the complexity index $H_{1,2}(\bm{\nu},T)$ on the instance $\bm{\nu}$ (instead of writing just $H_{1,2}(T)$) to formally report the second statement.}
\begin{itemize}[leftmargin=*,noitemsep, topsep=-2pt]
    \item there exists a set $\bm{\mathcal{V}}_1$ of SRBs satisfying Assumptions~\ref{ass:risingconcave} and~\ref{ass:boundedgamma} with $K \ge 8^{1/(\beta-1)}$ and bounded complexity index $\sup_{\bm{\nu} \in \bm{\mathcal{V}}_1} H_{1,2}(\bm{\nu},T) \le \overline{H} < +\infty$ where $H_{1,2}(\bm{\nu},T)$ is defined in Equation~\eqref{eq:complxIndex}, such that:
    \begin{align*}
       \sup_{\bm{\nu} \in \bm{\mathcal{V}}_1} e_T(\bm{\nu}, \mathfrak{A} ) \geq \frac{1}{4} \exp \left(- \frac{8T}{\sigma^2 \overline{H} } \right);
    \end{align*}
    \item there exists a set $\bm{\mathcal{V}}_2$ of SRBs satisfying Assumptions~\ref{ass:risingconcave} and~\ref{ass:boundedgamma} with $K \ge 8^{1/(\beta-1)}$, complexity indexes fulfilling $\sup_{\bm{\nu} \in \bm{\mathcal{V}}_2} H_{1,2}(\bm{\nu},T) \le \overline{H}$ and $\overline{H} \ge 16 K^2$, such that:
\begin{align*}
    \sup_{\bm{\nu} \in \bm{\mathcal{V}}_2} e_T(\bm{\nu}, \mathfrak{A} ) \exp \left( \frac{32T}{\sigma^2 \log (K) H_{1,2}(\bm{\nu},T)} \right) \geq \frac{1}{4} .
\end{align*}
\end{itemize}
\end{restatable}
Some comments are in order. First, we stated the lower bound for the case in which the minimum time budget satisfies the inequality of Theorem~\ref{thr:lb_minimumT}, which is a necessary condition for identifying the optimal arm. Second, similarly to~\cite{carpentier2016tight}, we have that even when the algorithm is aware of a finite upper bound $\overline{H}$ on the complexity index $H_{1,2}(\bm{\nu},T)$, it will suffer an error probability at least of order $\Omega(\exp (- {T}/({\sigma^2 \overline{H} }) )$ in the worst-case problem. Furthermore, when we let the complexity index $H_{1,2}(\bm{\nu},T)$ take values depending on the number of arms $K$, the error probability that any algorithm suffers is at least of order $\Omega(\exp (- {T}/(\sigma^2  (\log K) {H}_{1,2}(\bm{\nu},T)) )$. Thus, this second lower matches (up to constant factors) that of our \succrejectshort since $H_{2}(T) \le H_{1,2}(T)$ (see Lemma~\ref{lem:lemlem}), suggesting the superiority of this algorithm compared to \ucbeshort. Finally, provided that the identificability condition of Equation~\eqref{eq:minTT}, such a result corresponds to that of the standard (stationary) MABs~\citep{audibert2010best}. 
\section{Related Works}\label{apx:relatedworks}
As highlighted in Section~\ref{sec:intro}, the works mostly related to ours are the ones by~\citet{li2020efficient} and~\citet{cella2021best} that both focus on the BAI problem in the rested setting given a {fixed-budget} (additional related works are discussed in Appendix~\ref{apx:additionalrelatedworks}). \citet{li2020efficient} limits to \emph{deterministic} arms and, thus, fails to capture the intrinsic stochasticity of the real-world processes they want to model. Furthermore, their analysis focuses on the \emph{simple regret} providing a bound expressed in terms of an implicit problem-dependent quantity that captures the rounds needed to distinguish the optimal arm from the others. Instead, \citet{cella2021best} deal with the problem of identifying the arm with the smallest loss which decreases as the arms are pulled. Clearly, such a setting can be transformed into an \settingnameshort by moving from losses to rewards. The authors consider the assumption that the expected loss follows a specific known parametric functional form: $\mu_i(n) = \frac{\alpha_i}{(1+N_{i,t-1})^\rho} + \beta_i$, where $\rho \in (0,1]$ is a \emph{known} parameter and $\alpha_i$ and $\beta_i$ are to be estimated. This class of functions (with the flipped sign to consider rewards instead of losses) fulfills our Assumption~\ref{ass:boundedgamma} with $c=\rho\alpha_i$ and $\beta=1+\rho$. A lower bound on the expected simple regret of order $\Omega \left(  T^{1-\beta} \right)$ is provided, that holds under certain conditions and for $K=2$ (Theorem 1).\footnote{When the suboptimality gap $\Delta$ used in the construction of~\cite{cella2021best} is sufficiently small.} The \texttt{Rest-Sure} algorithm proposed in~\citep{cella2021best} requires the knowledge of $\rho$, which represents a major limitation since $\rho$ is usually unknown and hard to estimate in real-world scenarios. Instead, our \succrejectshort works under Assumption~\ref{ass:risingconcave} only with no further knowledge of the functional form of $\mu_i$.\footnote{The error of \succrejectshort will, then, depend on specific features of the functional of $\mu_i(\cdot)$, for instance, $c$ and $\beta$ when Assumption~\ref{ass:boundedgamma} holds, but the algorithm does not need to know them.} Furthermore, \texttt{Rest-Sure} suffers a simple regret whose expression is quite convoluted (Theorem 4), but can be made explicit for some cases leading to the order $\mathcal{\widetilde{O}} \left( T^{1-\beta} \right)$. Corollaries~\ref{coroll:ub_error_beta} and~\ref{thr:ub_error_sr} show that both our algorithms \ucbeshort and \succrejectshort, under Assumptions~\ref{ass:risingconcave} and~\ref{ass:boundedgamma}, turn out to be of order $\mathcal{\widetilde{O}} \left( T^{1-\beta} \right)$, matching the lower bound, up to logarithmic factors.
\section{Numerical Validation}\label{sec:numericalsimulations}

In this section, we provide a numerical validation of \ucbeshort and \succrejectshort.
We compare them with state-of-the-art bandit baselines designed for stationary and non-stationary BAI, and we evaluate the sensitivity of \ucbeshort to its exploration parameter $a$.
Additional details about the experiments are available in Appendix~\ref{app:experimentaldetails}.
Additional experimental results also on a realistic setting are available in Appendix~\ref{apx:add_experiments}. The code to reproduce the experiments can be found at \href{https://github.com/MontenegroAlessandro/BestArmIdSRB}{https://github.com/MontenegroAlessandro/BestArmIdSRB}.

\begin{figure*}[t!]
\begin{minipage}[c]{0.242\linewidth}
    \centering
    
    \vspace{-0.24cm}
    
    \resizebox{\linewidth}{!}{\includegraphics{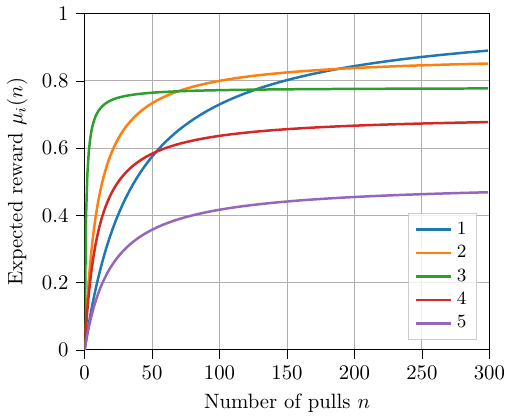}}

    \vspace{-0.26cm}
    
    \captionof{figure}{Expected values $\mu_i(n)$ for the arms.\\}
    \label{fig:syntheticsetting}

    \vspace{+0.06cm}
    
\end{minipage}
\hfill
\begin{minipage}[c]{0.245\linewidth}
    \centering

    \vspace{-0.1cm}
    
    \resizebox{\linewidth}{!}{\includegraphics{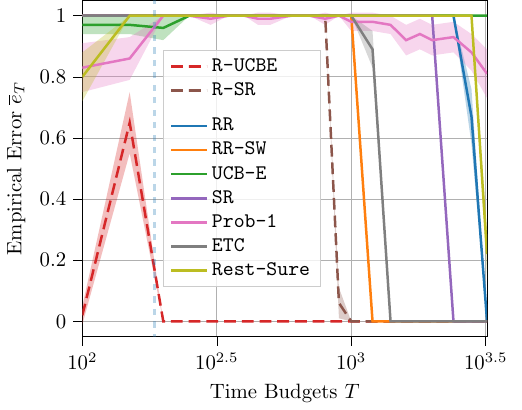}}
    
    \vspace{-0.3cm}
    
    \captionof{figure}{Empirical error rate (100 runs, mean $\pm$ 95\% c.i.).\\}
    \label{fig:synthetic_settings_res}

    \vspace{0.1cm}
    
\end{minipage}
\hfill
\begin{minipage}[c]{0.245\linewidth}
    \centering

    \vspace{-0.1cm}
    
    \resizebox{\linewidth}{!}{\includegraphics{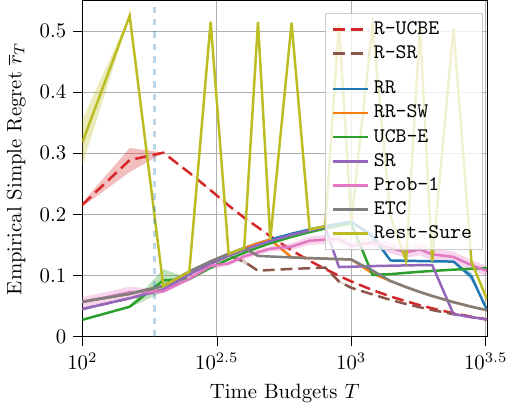}}
    
    \vspace{-0.3cm}
    
    \captionof{figure}{Empirical simple regret (100 runs, mean $\pm$ 95\% c.i.).\\}
    \label{fig:simpleregret}

    \vspace{0.1cm}
    
\end{minipage}
\hfill
\begin{minipage}[c]{0.233\linewidth}
    \centering
    \vspace{-0.05cm}
    \resizebox{\linewidth}{!}{\includegraphics{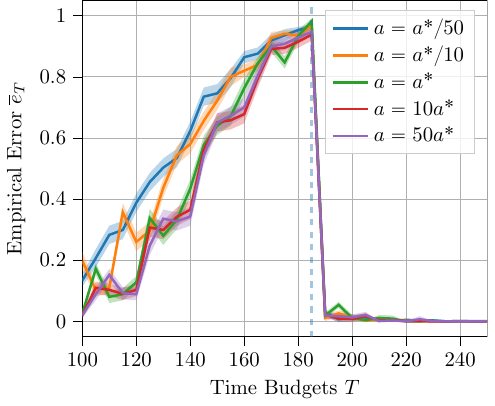}}
        
    \vspace{-0.28cm}
    
    \captionof{figure}{Empirical error rate for the \ucbeshort at different $a$ (1000 runs, mean $\pm$ 95\% c.i.).}

    \vspace{0.09cm}
    
    \label{fig:exp_a}
\end{minipage}
\vspace{-.5cm}
\end{figure*}

\vspace{-.1cm}

\textbf{Baselines.}~~We compare our algorithms against a wide range of solutions for BAI:

\vspace{-.1cm}

\begin{itemize}[noitemsep,topsep=-2pt,leftmargin=*]
    \item \uniform: pulls \textit{round-robin} all the arms until the budget ends and makes a recommendation based on the empirical mean of their reward over all the collected samples;
    \item \uniformsmooth: uses the same exploration strategy as \uniform but makes a recommendation based on the empirical mean over the last $\frac{\varepsilon T}{K}$ collected samples from an arm.\footnote{The formal description of this baseline, as well as its theoretical analysis, is provided in Appendix~\ref{apx:uniformsmooth}.}
    \item \ucbbubeck and \srbubeck \citep{audibert2010best}: algorithms for the stationary BAI problem;
    \item \probone \citep{abbasiyadkori2018best}: an algorithm dealing with the adversarial BAI setting;
    \item \etccella and \restsurecella \citep{cella2021best}: algorithms developed for the decreasing loss BAI setting, that we converted through a linear transformation of the reward.
\end{itemize}
The hyperparameters required by the above methods have been set as prescribed in the original papers.
For both our algorithms and \uniformsmooth, we set $\varepsilon = 0.25$.

\vspace{-.1cm}

\textbf{Setting.}~~To assess the quality of the recommendation $\hat{I}^*(T)$ provided by our algorithms, we consider a synthetic Gaussian \settingnameshort with $K = 5$ and $\sigma = 0.01$. Figure~\ref{fig:syntheticsetting} shows the evolution of the expected rewards of the arms w.r.t.~the number of pulls. 
In this setting, the optimal arm changes depending on whether $T \in [1, 185]$ or $T \in (185, +\infty)$. Thus, when the time budget is close to $185$, the problem is more challenging since the optimal and second-best arms expected rewards are close to each other. For this reason, the BAI algorithms are less likely to provide a correct recommendation compared to when the time budgets make the expected rewards well separated. 
We first compare the analyzed algorithms $\mathfrak{A}$ in terms of empirical error probability $\overline{e}_T(\bm{\nu},\mathfrak{A})$ and empirical simple regret $\overline{r}_T(\bm{\nu},\mathfrak{A})$ (the smaller, the better), \ie the empirical counterpart of $e_T(\bm{\nu},\mathfrak{A})$ and ${r}_T(\bm{\nu},\mathfrak{A})$.
The results we present are averaged over $100$ runs, with time budgets $T \in [100, 3200]$.

\vspace{-.1cm}

\textbf{Results.}~~The empirical error probability provided by the analyzed algorithms in the synthetically generated setting is presented in Figure~\ref{fig:synthetic_settings_res}.
We report with a dashed vertical blue line at $T = 185$, \ie the budget after which the optimal arm no longer changes.
Before such a budget, all the algorithms provide large errors (\ie $\bar{e}_T(\bm{\nu},\mathfrak{A}) > 0.2$).
However, \ucbeshort outperforms the others by a large margin, suggesting that an optimistic estimator might be advantageous when the time budget is small.
Shortly after $T = 185$, \ucbeshort starts providing the correct suggestion consistently.
\succrejectshort begins to identify the optimal arm (\ie with $\bar{e}_T(\bm{\nu},\text{\succrejectshort}) < 0.05$) for time budgets $T > 1000$.
Nonetheless, both algorithms perform significantly better than the baselines used for comparison. On the other hand, Figure~\ref{fig:simpleregret} shows the performance of all the algorithms in terms of simple regret. We first observe how, in this case, we have that \succrejectshort outperforms \ucbeshort and all the other algorithms in the considered scenario. \ucbeshort presents an interesting behavior in this perspective. Indeed, even if for $T > 185$ it promptly detects the optimal arm $i^*(T)$ (as visible from Figure~\ref{fig:synthetic_settings_res}), it suffers
a large simple regret (as visible from Figure~\ref{fig:simpleregret}) that is explained by the over-exploitation of the suboptimal arms implying a small number of pulls $N_{i^*(T)}(T)$ of the optimal one $i^*(T)$.

\textbf{Sensitivity Analysis for the Exploration Parameter of \ucbeshort.}~~We perform a sensitivity analysis on the exploration parameter $a$ of \ucbeshort. Such a parameter should be set to a value less or equal to $a^*$, and the computation of the latter is challenging. We tested the sensitivity of \ucbeshort to this hyperparameter by looking at the error probability for $a \in \{ a^*/50,a^*/10,a^*,10a^*,50a^*\}$. Figure~\ref{fig:exp_a} shows the empirical errors of \ucbeshort with different parameters $a$, where the blue dashed vertical line denotes the last time the optimal arm changes over the time budget. It is worth noting how, even in this case, we have two significantly different behaviors before and after such a time. Indeed, if $T \le 185$, we have that misspecification with larger values than $a^*$ does not significantly impact the performance of \ucbeshort, while smaller values slightly decrease the performance. Conversely, for $T>185$ learning with different values of $a$ seems not to impact the algorithm performance.
\section{Discussion and Conclusions}
\label{sec:conclusions}

This paper introduces the BAI problem with a fixed budget for the \settingname setting.
Notably, such setting models many real-world scenarios in which the reward of the available options increases over time, and the interest is on the recommendation of the one having the largest expected rewards after the time budget has elapsed.
In this setting, we presented two algorithms, namely \ucbeshort and \succrejectshort providing theoretical guarantees.
\ucbeshort is an optimistic algorithm requiring an exploration parameter whose optimal value requires prior information on the setting. Conversely, \succrejectshort is a phase-based solution that only requires the time budget to run.
We established lower bounds for the error probability any algorithm suffers in such a setting, which is matched by our \succrejectshort (up to constant factors). Furthermore, we showed how a requirement on the minimum time budget is unavoidable to ensure the identifiability of the optimal arm. A possible future line of research is to derive an algorithm balancing the trade-off between the error probability and the minimum requested time budget to properly identify the optimal arm.

\clearpage

\section*{Impact Statement} 
This paper presents work whose goal is to advance the field of Machine Learning. There are many potential societal consequences of our work, none which we feel must be specifically highlighted here.

\section*{Acknowledgments}
Funded by the European Union – Next Generation EU within the project NRPP M4C2, Investment 1.,3 DD. 341 -  15 march 2022 – FAIR – Future Artificial Intelligence Research – Spoke 4 - PE00000013 - D53C22002380006.

\bibliography{biblio}
\bibliographystyle{icml2024}

\clearpage
\appendix
\onecolumn

\setlength{\abovedisplayskip}{8pt}
\setlength{\belowdisplayskip}{8pt}
\setlength{\textfloatsep}{16pt}
\allowdisplaybreaks[4]
\section{Additional Related Works}
\label{apx:additionalrelatedworks}

In this appendix, we integrate the related works presented in the main paper with the ones related to the \emph{best arm identification} problem and the \emph{rested bandit} setting.

\textbf{Best Arm Identification.}~~The pure exploration and BAI problems have been first introduced by~\citet{bubeck2009pure}, while algorithms able to learn in such a setting have been provided by~\citet{audibert2010best}. The work by~\citet{gabillon2012best} proposes a unified approach to deal with stochastic best arm identification problems by having either a fixed budget or fixed confidence. However, the stochastic algorithms developed in this line of research only provide theoretical guarantees in settings where the expected reward is stationary over the pulls. \citet{abbasiyadkori2018best} propose a method able to handle both the stochastic and adversarial cases, but they do not make explicit use of the properties (\eg increasing nature) of the expected reward. Finally, \citep{garivier2016optimal,kaufmann2016complexity,carpentier2016tight} analyze the problem of BAI from the lower bound perspective.

\textbf{Rested Bandits.}~~Bandit settings in which the evolution of an arm reward depends on the number of times the arm has been pulled, such as the one analyzed in our paper, are generally referred to as \emph{rested}. A first general formulation of the rested bandit setting appeared in the work by~\citet{tekin2012online} and was further discussed by~\citet{mintz2020nonstationary} and~\citet{seznec2020single}. In these works, the evolution of the expected reward of each arm is regulated by a Markovian process that is assumed to visit the same state multiple times. This is not the case for the rising bandits~\citep{metelli2022stochastic,genalti2024graph}, where the arm expected rewards continuously increase over the time budget. Finally, a specific instance of the rested bandits is constituted by the \emph{rotting} bandits~\citep{levine2017rotting,seznec2019rotting,seznec2020single}, in which the expected payoff for a given arm decreases with the number of pulls. However, as pointed out by~\citet{metelli2022stochastic}, techniques developed for this setting cannot be directly translated into ours, due to the inherently different nature of the problem.
\section{Additional Motivating Examples}
\label{apx:motivating_example}

In this appendix, we provide two additional motivating examples to better understand and appreciate the SRB setting.

\textbf{Selection of Athletes for Competitions.}~~Consider the role of a professional trainer for a team, having several athletes (\ie our arms) to train in order to increase their performances. The final goal is to select a single athlete to represent the team in a competition. The performances of athletes increase when the trainer properly follows them. However, a trainer can follow just one athlete at a time. The trainer can be modeled as an agent performing best arm (athlete) identification, and the athletes represent the arms that increase their payoffs (\ie performance) when pulled (\ie when the trainer follows them).

\textbf{Online Best Model Selection.}~~Suppose we have to choose among a set of algorithms to maximize a given index (\eg accuracy) over a training set. In this setting, we expect that all the algorithms progressively increase (on average) the index value and converge to their optimal value with different convergence rates. Therefore, we want to identify which candidate algorithm (i.e., arm) is the most likely to reach optimal performances, given the budget, and assign the available resources (e.g., computational power or samples). In summary, this problem reduces to the identification, with the largest probability, of the algorithm that converges faster to the optimum. A realistic example of such a scenario is provided in Figure~\ref{fig:imdb_arms}.

\section{Estimators Efficient Update}\label{apx:efficientupdate}
In this appendix, we describe how to implement an efficient version (\ie fully online) of the estimators we presented in the main paper. We resort to the update developed by~\citet{metelli2022stochastic}.
This update provides a way to achieve an $\mathcal{O}(1)$ computational complexity at each step for the update of the estimates for the pessimistic estimator $\hat{\mu}_i(t)$ and optimistic estimator $\check{\mu}_i^T(t)$.

With a slight abuse of notation, only in this appendix, for the sake of simplicity, we denote with $\overline{x}_{i,n}$ the reward collected at the $n$\textsuperscript{th} pull from the arm $i$ and with $h_{i,t} = h(N_{i,t-1})$ the window size. Differently from the paper, here the reward subscript indicates the arm $i$ and the number of pulls of that arm $n$ instead of the total number of pulls $t$ we used in the definition of $x_t$.

More specifically, the \textit{pessimistic} estimator $\hat{\mu}_i$ can be written as:
\begin{align*}
    \hat{\mu}_i(t) = \frac{\overline{a}_i}{h_{i, t}},
\end{align*}
where the quantity $\overline{a}_i$ is updated as follows:
\begin{equation*}
    \overline{a}_i \leftarrow
    \begin{cases}
        \overline{a}_i + \overline{x}_{i,N_{i,t}} - \overline{x}_{i,N_{i,t}-h_{i,t}} & \ \text{if} \ h_{i,t}=h_{i,t-1}\\
        \overline{a}_i + \overline{x}_{i,N_{i,t}} & \ \text{otherwise}
    \end{cases},
\end{equation*}
and $\overline{a}_i = 0$ as the algorithm starts.

Instead, the \textit{optimistic} estimator $\check{\mu}_i^T(t)$, is updated using:
\begin{align*}
    \check{\mu}_i^T(t) = \frac{1}{h_{i, t}} \left( \overline{a}_i + \frac{T(\overline{a}_i - \overline{b}_i)}{h_{i, t}} - \frac{\overline{c}_i - \overline{d}_i}{h_{i, t}} \right).
\end{align*}
Where the quantity $\overline{a}_i$ is defined and updated above and $\overline{b}_i$, $\overline{c}_i$, and $\overline{d}_i$ are updated as follows:
\begin{align*}
    \overline{b}_i &\leftarrow 
    \begin{cases}
        \overline{b}_i + \overline{x}_{i,N_{i,t}-h_{i,t}} - \overline{x}_{i,N_{i,t}-2h_{i,t}} & \ \qquad \qquad \qquad \qquad \ \text{if} \ h_{i,t}=h_{i,t-1}\\
        \overline{b}_i + \overline{x}_{i,N_{i,t}-2h_{i,t}+1} & \ \qquad \qquad \qquad \qquad \ \text{otherwise}
    \end{cases}, \\
    \overline{c}_i &\leftarrow 
    \begin{cases}
        \overline{c}_i + N_{i,t} \cdot \overline{x}_{i,N_{i,t}} - (N_{i,t}-h_{i,t}) \cdot \overline{x}_{i,N_{i,t}-h_{i,t}} & \qquad \; \; \text{if} \ h_{i,t}=h_{i,t-1}\\
        \overline{c}_i + N_{i,t} \cdot \overline{x}_{i,N_{i,t}} & \qquad \; \; \text{otherwise}
    \end{cases}, \\
    \overline{d}_i &\leftarrow 
    \begin{cases}
        \overline{d}_i + N_{i,t} \cdot \overline{x}_{i,N_{i,t}-h_{i,t}} - (N_{i,t}-h_{i,t}) \cdot \overline{x}_{i,N_{i,t}-2h_{i,t}} & \ \text{if} \ h_{i,t}=h_{i,t-1}\\
        \overline{d}_i + (N_{i,t}-h_{i,t}) \cdot \overline{x}_{i,N_{i,t}-2h_{i,t}+1} + \overline{b}_i & \ \text{otherwise}
    \end{cases}. 
\end{align*}
Similarly to what is presented above, the quantities are initialized to $0$ as the algorithm start.

\section{Proofs and Derivations}
\label{apx:all_proofs}

In this appendix, we provide all the proofs omitted in the main paper. For the sake of generality, we will provide the derivations for a generic choice of the window size of the estimators $h_{i,t}$ which depends on the arm $i \in \dsb{K}$ and on the round $t \in \dsb{T}$. When needed, we will particularize the choice for the case in which the window size depends on the number of pulls only, \ie $h_{i,t} = h(N_{i,t-1})$.

\subsection{Proofs of Section~\ref{sec:estimators}}\label{apx:proof_estimators}

\begin{restatable}[]{lemma}{boundDeriv}\label{lemma:boundDeriv}
Under Assumption~\ref{ass:risingconcave}, for every $i \in  \dsb{K}$, and $j,k \in \Nat$ with $k<j$, it holds that:
\begin{align*}
    \gamma_i(j) \le \frac{\mu_i(j) - \mu_i(k)}{j-k}.
\end{align*}
\end{restatable}

\begin{proof}
Using Assumption~\ref{ass:risingconcave}, we get:
\begin{align*}
\gamma_i(j) = \frac{1}{j-k} \sum_{l=k}^{j-1} \gamma_i(j) \le \frac{1}{j-k} \sum_{l=k}^{j-1} \gamma_i(l) = \frac{1}{j-k} \sum_{l=k}^{j-1} \left( \mu_i(l+1) - \mu_i(l) \right)  = \frac{\mu_i(j) - \mu_i(k)}{j-k},
\end{align*}
where the first inequality comes from the concavity of the expected reward function (Assumption~\ref{ass:risingconcave}), and the second equality comes from the definition of increment.
\end{proof}

\begin{restatable}[]{lemma}{firstEstimator}\label{lemma:firstEstimator}
For every arm $i \in \dsb{K}$, every round $t \in \dsb{T}$, and window size $ 1 \le h_{i,t} \le \lfloor N_{i,t-1}/2 \rfloor$, let us define:
\begin{align*}
	 \widetilde{\mu}_i^{T} (N_{i,t})  \coloneqq \frac{1}{h_{i,t}} \sum_{l=N_{i,t-1}-h_{i,t}+1}^{N_{i,t-1}} \left(\mu_i(l) + (T-l) \frac{\mu_i(l) - \mu_{i}(l-h_{i,t})}{h_{i,t}}\right),
\end{align*}
otherwise if $h_{i,t}=0$, we set $\widetilde{\mu}_i^{T}(N_{i,t})  \coloneqq +\infty$. Then, $\widetilde{\mu}_i^{T}(N_{i,t}) \ge \mu_i(T)$ and, if $N_{i,t-1} \ge 2$, it holds that:
\begin{align*}
	& \widetilde{\mu}_i^{T}(N_{i,t}) - \mu_i(T) \le \frac{1}{2}(2T - 2N_{i,t-1} + h_{i,t} -1) \ \gamma_i(N_{i,t-1}-2h_{i,t}+1).
\end{align*}
\end{restatable}
\begin{proof}
Following the derivation provided above, we have for every $l \in \dsb{2, \dots, N_{i,T-1}}$:
\begin{align}
\mu_i(T) & = {\mu_i(l)} + \sum_{j=l}^{T-1} \gamma_i(j) \notag\\
&  \le  \mu_i(l) + (T-l) \ \gamma_i(l-1) \label{:pp:-001}\\
& \le  \mu_i(l) + (T-l)\frac{\mu_i(l) - \mu_i(l-h_{i,t})}{h_{i,t}}, \label{:pp:-002}
\end{align}
where Equation~\eqref{:pp:-001} follows from Assumption~\ref{ass:risingconcave}, and Equation~\eqref{:pp:-002} is obtained from Lemma~\ref{lemma:boundDeriv}. By averaging over the most recent $1 \le h_{i,t} \le \lfloor N_{i,t-1}/2 \rfloor$ pulls, we get:
\begin{align*}
	\mu_i(T) &\le \frac{1}{h_{i,t}} \sum_{l=N_{i,t-1}-h_{i,t}+1}^{N_{i,t-1}} \left(\mu_i(l) + (T-l)\frac{\mu_i(l) - \mu_i(l-h_{i,t})}{h_{i,t}} \right) \eqqcolon \widetilde{\mu}_i^{T}(N_{i,t}).
\end{align*}
For the bias bound, when $N_{i,t-1} \ge 2$, we retrieve:
\begin{align}
\widetilde{\mu}_i^{T}(N_{i,t}) - \mu_i(T) & = \frac{1}{h_{i,t}} \sum_{l=N_{i,t-1}-h_{i,t}+1}^{N_{i,t-1}}  \left(\mu_i(l) +  (T-l)  \frac{\mu_i(l) - \mu_i(l-h_{i,t})}{h_{i,t}} \right) - \mu_i(T) \label{:pp:-0032}\\
& \le \frac{1}{h_{i,t}} \sum_{l=N_{i,t-1}-h_{i,t}+1}^{N_{i,t-1}} (T-l)  \frac{\mu_i(l) - \mu_i(l-h_{i,t})}{h_{i,t}} \notag \\
& = \frac{1}{h_{i,t}} \sum_{l=N_{i,t-1}-h_{i,t}+1}^{N_{i,t-1}} (T-l)  \frac{1}{h_{i,t}} \sum_{j=l-h_{i,t}}^{l-1} \gamma_j(l)\notag \\
& \le \frac{1}{h_{i,t}} \sum_{l=N_{i,t-1}-h_{i,t}+1}^{N_{i,t-1}} (T-l) \ \gamma_i(l-h_{i,t}) \label{:pp:-003}\\
& \le \frac{1}{2}(2T - 2N_{i,t-1} + h_{i,t} -1) \ \gamma_i(N_{i,t-1}-2h_{i,t}+1), \label{:pp:-004}
\end{align}
where Equation~\eqref{:pp:-0032} follows from Assumption~\ref{ass:risingconcave} applied as $\mu_i(l) \le \mu_i(N_{i,t})$, Equation~\eqref{:pp:-003} follows from Assumption~\ref{ass:risingconcave} and bounding $ \frac{1}{h_{i,t}} \sum_{j=l-h_{i,t}}^{l-1} \gamma_j(l) \le \gamma_i(l-h_{i,t})$, and Equation~\eqref{:pp:-004} is still derived from Assumption~\ref{ass:risingconcave} $\gamma_i(l-h_{i,t}) \le \gamma_i(N_{i,t-1}-2h_{i,t}+1)$ and, finally, computing the summation.
\end{proof}

\begin{restatable}[]{lemma}{firstEstimatorConc}\label{lemma:firstEstimatorConc}
For every arm $i \in \dsb{K}$, every round $t \in \dsb{T}$, and window size $1 \le h \le \lfloor N_{i,t-1} / 2\rfloor$, let us define:
\begin{align*}
	 & \check{\mu}_i^T (N_{i,t})  \coloneqq \frac{1}{h_{i, t}} \sum_{l=N_{i,t-1}-h_{i, t}+1}^{N_{i,t-1}} \left(X_i(l) + (T-l) \frac{X_i(l) - X_i(l - h_{i, t})}{h_{i, t}}\right),\\
	 & \check{\beta}_i^T(N_{i,t}) = \sigma (T-N_{i,t-1} + h_{i,t}-1) \sqrt{\frac{a}{h_{i,t}^3}},
\end{align*}
where $X_i(l)$ denotes the reward collected from arm $i$ when pulled for the $l$-th time. Otherwise, if $h_{i, t}=0$, we set $\check{\mu}_i^T (t) \coloneqq +\infty$ and $\check{\beta}^T_i(t)\coloneqq+\infty$ .
Then, if the window size depends on the number of pulls only $h_{i,t} = h(N_{i,t-1}) $, it holds that:
\begin{align*}
    \Prob \left(\exists t \in \dsb{T} \,:\, \left| \check{\mu}_i^T (N_{i,t}) - \tilde{\mu}_i^T(N_{i,t}) \right| >  \check{\beta}^T_i(N_{i,t}) \right) \le 2T\exp \left( - \frac{a}{10} \right).
\end{align*}
\end{restatable}

\begin{proof}
Before starting the proof, it is worth noting that under the event $\{ h_{i, t} = 0 \}$, it holds that $\check{\mu}_i^T(t) = \Tilde{\mu}_i^T(t) = \check{\beta}_i^T(t) = +\infty$. Thus, under the convention that $\infty - \infty = 0$, then $0 > \check{\beta}_i^T(t)$ is not satisfied. For this reason, we need to perform our analysis under the event $\{ h_{i, t} \geq 1 \}$.

First, we remove the dependence on the number of pulls that, in a generic time instant, represents a random variable. Thus, we have:
\begin{align}
     \Prob \left( \exists t \in \dsb{T} \,:\, \left| \check{\mu}_i^T (N_{i,t}) - \tilde{\mu}_i^T(N_{i,t}) \right| >  \check{\beta}^T_i(N_{i,t}) \right) &\leq \Prob \left( \exists n \in \dsb{0, T}  \ \text{s.t.} \ h(n) \geq 1: \ | \check{\mu}_i^T(n) - \tilde{\mu}_i^T(n) | > \check{\beta}_i^T(n) \right) \nonumber\\
     &\leq \sum_{n \in \dsb{0, T} : h(n) \geq 1} \Prob \left( | \check{\mu}_i^T(n) - \tilde{\mu}_i^T(n) | > \check{\beta}_i^T(n) \right), \label{eq:union_bound_prob_error_ucb1}
\end{align}
where Equation~\eqref{eq:union_bound_prob_error_ucb1} follows form a union bound over the possible values of $N_{i, t}$.

Now that we have a fixed value of $n$, consider a generic time $t$ in which arm $i$ has been pulled. We will observe a reward $x_t$ composed by the mean of the process $\mu_i(N_{i,t})$ plus some noise. The noise will be equal to $\eta_i(N_{i,t}) = x_t - \mu_i(N_{i,t})$, \ie as the difference (not known) between the observed value for the arm $i$ at time $t$ and its real value at the same time. Let us rewrite the quantity to be bounded as follows for every $n$:
\begin{align*}
    h(n) \left( \check{\mu}_i^T(n) - \tilde{\mu}_i^T(n) \right) &= \sum_{l=n-h(n) +1}^{n} \left( \eta_i(l) - (T-l) \cdot \frac{\eta_i(l)-\eta_i(l-h(n) )}{h(n) } \right) \\
    &= \sum_{l=n-h(n) +1}^{n} \left( 1-\frac{T-l}{h(n) } \right) \cdot \eta_i(l) \  - \sum_{l=n-h(n) +1}^{n} \left( \frac{T-l}{h(n) } \right) \cdot \eta_i(l-h(n) ).
\end{align*}
Here, notice that all the quantities $\eta_i(l)$ and $\eta_i(l-h(n) )$ are independent since the number of pulls $l$ is fully determined by $n$ and $h(n) $, that now are non-random quantities.

Now, we apply the Azuma-Ho\"{e}ffding's inequality of Lemma C.5 from~\citet{metelli2022stochastic} for weighted sums of subgaussian martingale difference sequences. To this purpose, we compute the sum of the square weights:
\begin{align*}
    \sum_{l=n-h(n) +1}^{n} & \left( 1-\frac{T-l}{h(n) } \right)^2 \  + \sum_{l=n-h(n) +1}^{n} \left( \frac{T-l}{h(n) } \right)^2 \\
    &\leq h(n)  \cdot \left( 1+\frac{T-n+h(n) -1}{h(n) } \right)^2 +  h(n)  \cdot \left( \frac{T-n+h(n) -1}{h(n) } \right)^2 \\
    &\leq \frac{5(T-n+h(n) -1)}{{h(n) }}.
\end{align*}

Given the previous argument, we have, for a fixed $n$:
\begin{align*}
    \Prob & \left( | \check{\mu}_i^T(n) - \tilde{\mu}_i^T(n) | \geq \check{\beta}_i^T(n) \right) \\
    & \leq \Prob \left( \left| \sum_{l=n-h(n) +1}^{n} \left( 1-\frac{T-l}{h(n) } \right)  \eta_i(l) \  - \hspace{-0.4cm} \sum_{l=n-h(n) +1}^{n} \left( \frac{T-l}{h(n) } \right)  \eta_i(l-h(n) ) \right| \geq h(n)  \check{\beta}_i^T(t) \right) \\
    &\leq 2  \exp\left( - \frac{h(n) ^2  \check{\beta}_i^T(n)^2}{2 \sigma^2 \left( \frac{5(T-n+h(n) -1)}{h(n) } \right)} \right) \\
    &= 2  \exp\left( - \frac{a}{10} \right).
\end{align*} 

By replacing the obtained result into Equation~\eqref{eq:union_bound_prob_error_ucb1} we get:
\begin{align*}
    \sum_{n \in \dsb{0, T} :h(n)  \geq 1} 2 \cdot \exp\left( - \frac{a}{10} \right) 
    \leq \sum_{n=1}^t 2 \exp\left( - \frac{a}{10} \right)
    \leq 2  T \exp\left( - \frac{a}{10} \right).
\end{align*}
\end{proof}

\begin{restatable}[]{lemma}{secondEstimator}\label{lemma:secondEstimator}
For every arm $i \in \dsb{K}$, every round $t \in \dsb{T}$, and window size $ 1 \le h_{i,t} \le \lfloor N_{i,t-1}/2 \rfloor$, let us define:
\begin{align*}
	 \bar{\mu}_i (N_{i,t})  \coloneqq \frac{1}{h_{i,t}} \sum_{l=N_{i,t-1}-h_{i,t}+1}^{N_{i,t-1}} \mu_i(l),
\end{align*}
otherwise, if $h_{i,t}=0$, we set $\bar{\mu}_i(N_{i,t}) \coloneqq 0$. Then, $\bar{\mu}_i(N_{i,t}) \le \mu_i(T)$ and, if $N_{i,t-1} \ge 2$, it holds that:
\begin{align*}
	& \mu_i(T) - \bar{\mu}_i(N_{i,t}) \le \frac{1}{2} (2T - 2N_{i,t-1} + h_{i,t} - 1) \gamma_i(N_{i,t-1}-h_{i,t}+1) .
\end{align*}
\end{restatable}
\begin{proof}
Following the derivation provided above, we have for every $l \in \{2, \dots, N_{i,T-1}\}$:
\begin{align}
\mu_i(T) & = {\mu_i(l)} + \sum_{j=l}^{T-1} \gamma_i(j) \ge  {\mu_i(l)}.  \label{:pp:-x002}
\end{align}
Thus, by averaging over the most recent $1 \le h_{i,t} \le \lfloor N_{i,t-1}/2 \rfloor$ pulls, we get:
\begin{align*}
	\mu_i(T) & = \frac{1}{h_{i,t}} \sum_{l=N_{i,t-1}-h_{i,t}+1}^{N_{i,t-1}} \left({\mu_i(l)} + \sum_{j=l}^{T-1} \gamma_i(j) \right) \\
 & = \bar{\mu}_i(N_{i,t}) +  \frac{1}{h_{i,t}} \sum_{l=N_{i,t-1}-h_{i,t}+1}^{N_{i,t-1}} \sum_{j=l}^{T-1} \gamma_i(j) \\
 & \le \bar{\mu}_i(N_{i,t}) +  \frac{1}{h_{i,t}} \sum_{l=N_{i,t-1}-h_{i,t}+1}^{N_{i,t-1}} \sum_{j=l}^{T-1} \gamma_i(j) \\
 & \le  \bar{\mu}_i(N_{i,t}) +  \frac{1}{2} (2T - 2N_{i,t-1} + h_{i,t} - 1) \gamma_i(N_{i,t-1}-h_{i,t}+1) ,
\end{align*}
where we used Assumption~\ref{ass:risingconcave}.
\end{proof}

\begin{restatable}[]{lemma}{secondEstimatorConc}\label{lemma:secondEstimatorConc}
For every arm $i \in \dsb{K}$, every round $t \in \dsb{T}$, and window size $1 \le h \le \lfloor N_{i,t-1} / 2\rfloor$, let us define:
\begin{align*}
	 & \hat{\mu}_i^T (N_{i,t})  \coloneqq \frac{1}{h_{i, t}} \sum_{l=N_{i,t-1}-h_{i, t}+1}^{N_{i,t-1}} 
  X_i(l) ,\\
	 & \hat{\beta}_i^T(N_{i,t}) = \sigma\sqrt{\frac{a}{h_{i,t}}},
\end{align*}
where $X_i(l)$ denotes the reward collected from arm $i$ when pulled for the $l$-th time. Otherwise, if $h_{i, t}=0$, we set $\hat{\mu}_i^T (t) \coloneqq +\infty$ and $\hat{\beta}^T_i(t)\coloneqq+\infty$ .
Then, if the window size depends on the number of pulls only $h_{i,t} = h(N_{i,t-1}) $, it holds that:
\begin{align*}
\Prob \left(\exists t \in \dsb{T} \,:\, \left| \hat{\mu}_i (N_{i,t}) - \bar{\mu}_i(N_{i,t}) \right| >  \hat{\beta}_i(N_{i,t}) \right) \le 2T\exp \left( - \frac{a}{2} \right).
\end{align*}
\end{restatable}

\begin{proof}
Before starting the proof, it is worth noting that under the event $\{ h_{i, t} = 0 \}$, it holds that $\hat{\mu}_i^T(t) = \bar{\mu}_i^T(t) = \hat{\beta}_i^T(t) = +\infty$. Thus, under the convention that $\infty - \infty = 0$, then $0 > \hat{\beta}_i^T(t)$ is not satisfied. For this reason, we need to perform our analysis under the event $\{ h_{i, t} \geq 1 \}$.

The first thing to do is to remove the dependence on the number of pulls that, in a generic time instant, represents a random variable. So, we can write:
\begin{align}
     \Prob \left( \exists t \in \dsb{T} \,:\, \left| \hat{\mu}_i(N_{i,t}) - \bar{\mu}_i(N_{i,t}) \right| >  \hat{\beta}_i(N_{i,t}) \right) 
     &\leq \Prob \left( \exists n \in \dsb{0, T} \ \text{s.t.} \ h(n)  \geq 1: \ \left| \hat{\mu}_i(n) - \bar{\mu}_i(n) \right| > \hat{\beta}_i(n) \right) \nonumber \\
     &\leq \sum_{n \in \dsb{0, T} :h(n)  \geq 1} \Prob \left( \left| \hat{\mu}_i(n) - \bar{\mu}_i(n) \right| > \hat{\beta}_i(n) \right), \label{eq:union_bound_prob_error_ucb}
\end{align}
where Equation~\eqref{eq:union_bound_prob_error_ucb} follows form a union bound over the possible values of $N_{i, t}$.

Now that we have a fixed value of $n$, consider a generic time $t$ in which arm $i$ has been pulled. We will observe a reward $x_t$ composed by the mean of the process $\mu_i(N_{i,t})$ plus some noise. The noise will be  equal to $\eta_i(N_{i,t}) = x_t - \mu_i(N_{i,t})$, \ie as the difference (not known) between the observed value for the arm $i$ at time $t$ and its real value at the same time. Let us rewrite the quantity to be bounded as follows, for every $n$:
\begin{align*}
    &h(n)  \left( \hat{\mu}_i(n) - \bar{\mu}_i(n) \right) = \sum_{l=n-h(n) +1}^{n}\eta_i(l).
\end{align*}
Here we can note that all the quantities $\eta_i(l)$ and $\eta_i(l-h(n) )$ are independent since the number of pulls $l$ is fully determined by $n$ and $h(n) $, that now are non-random quantities.

Now, we apply the Azuma-Ho\"{e}ffding's inequality of Lemma C.5 from~\citet{metelli2022stochastic} for sums of subgaussian martingale difference sequences. 
For a fixed $n$, we have:
\begin{align*}
    \Prob\left( | \hat{\mu}_i(n) - \bar{\mu}_i(n) | \geq \hat{\beta}_i(n) \right) 
    &\leq \Prob \left( \left| \sum_{l=n-h(n) +1}^{n} \eta_i(l) \right| \geq h(n)  \cdot \hat{\beta}_i^T(t) \right) \\
    &\leq 2 \exp\left( - \frac{h(n)   \hat{\beta}_i^T(n)^2}{2 \sigma^2} \right) \\
    &= 2  \exp\left( - \frac{a}{2} \right).
\end{align*} 

By replacing the obtained result into Equation~\eqref{eq:union_bound_prob_error_ucb} we get:
\begin{align*}
    \sum_{n \in \dsb{0, T} :h(n)  \geq 1} 2 \exp\left( - \frac{a}{2}\right)
    \leq \sum_{n=1}^t 2 \exp\left( -\frac{a}{2} \right) 
    \leq 2  T \exp\left( - \frac{a}{2} \right).
\end{align*}
\end{proof}

\concentrationmuhat*
\begin{proof}
The proof simply combines Lemmas~\ref{lemma:secondEstimator} and~\ref{lemma:secondEstimatorConc} and a union bound over the arms.
\end{proof}

\concentrationmucheck*
\begin{proof}
The proof simply combines Lemmas~\ref{lemma:firstEstimator} and~\ref{lemma:firstEstimatorConc} and a union bound over the arms.
\end{proof}

\subsection{Proofs of Section~\ref{sec:optimisticalgorithm}}\label{apx:proofs_optimistic}
In this appendix, we provide the proofs we have omitted in the main paper for what concerns the theoretical results about \ucbeshort.
All the lemma below are assuming that the strategy we use for selecting the arm is \ucbeshort.

Let us define the \textit{good event} $\Psi$ corresponding to the scenario in which all (over the rounds and over the arms) the bounds $B^T_i(n)$ hold for the projection up to time $T$ of the real reward expected value $\mu_i(n)$, formally:
\begin{align*}
    \Psi \coloneqq \left\{ \forall i \in \dsb{K}, \forall t \in \dsb{T}: |\check{\mu}_i^T(t) - \tilde{\mu}_i^T(t)| < \check{\beta}_i^T(t) \right\}, 
\end{align*}
where $\tilde{\mu}_i^T(t)$ is the deterministic counterpart of $\check{\mu}_i^T(t)$ considering the expected payoffs $\mu_i( \cdot )$ instead of the realizations, formally:
\begin{align*}
    \tilde{\mu}_i^T(N_{i,t}) \coloneqq \frac{1}{h_{i,t}} \sum_{l=N_{i,t-1}-h_{i,t}+1}^{N_{i, t-1}} \left( \mu_i(l) + (T-l)\frac{\mu_i(l)-\mu_i(l-h_{i,t})}{h_{i,t}} \right). 
\end{align*}

\begin{lemma}\label{lem:generic_min_pulls}
    Under Assumption~\ref{ass:risingconcave} and assuming that the good event $\Psi$ holds, the maximum number of pulls $N_{i,T}$ of a sub-optimal arm ($i \neq i^*(T)$) performed by the \ucbeshort is upper bounded by the maximum integer $y_i(a)$ which satisfies the following condition:
    \begin{align*}
        T  \gamma_i(\lfloor (1-2\varepsilon)  y_i(a) \rfloor ) + 2T  \sigma \cdot \sqrt{\frac{a}{\lfloor \varepsilon y_i(a) \rfloor ^3}} \geq \Delta_i(T).
    \end{align*}
\end{lemma}
\begin{proof}

In the following, we will use $\tilde{\mu}_i^T(N_{i,t-1})$ to bound the bias introduced by $\check{\mu}_i^T(N_{i,t-1})$ and, subsequently, to find a number of pulls such that the algorithm cannot suggest pulling a suboptimal arm. Using Lemma~\ref{lemma:secondEstimator}, we have that for every $ i \in \dsb{K}$ and $ t \in \dsb{T}$ and when $1 \leq h_{i, t} \leq \lfloor 1/2 \cdot N_{i,t-1} \rfloor$ with $N_{i, t-1} \geq 2$, it holds that:
\begin{align}
    \tilde{\mu}_i^T(N_{i,t-1}) - \mu_i(T) \leq \frac{1}{2} \cdot (2T - 2N_{i, t-1} + h_{i, t} - 1) \cdot \gamma_i(N_{i, t-1} - 2h_{i, t} + 1).
    \label{eq:mu_tilde_bias}
\end{align}
Let us assume that, at round $t$, the \ucbeshort algorithm pulls the arm $i \in \dsb{K}$ such that $i \neq i^*(T)$. From now on, to avoid weighing down the notation, we will omit the dependence of the optimal arm $i^*(T)$ on the budget $T$, simply denoting it as $i^*$, and the window size will be denoted as $h_{i,t} = h(N_{i,t-1})$.
By construction, the algorithm chooses the arm with the largest upper confidence bound $B_i^T(N_{i,t-1})$. Thus, we have that  $B_i^T(N_{i,t-1}) \geq B_{i^*}^T(N_{i^*,t-1})$. 
Now, we want to identify the minimum number of pulls such that this event no longer occurs, assuming that the good event $\Psi$ holds.
We have that, if we pull such an arm $i \neq i^*$, it holds:
\begin{align*}
    B_i^T(N_{i,t-1}) & \geq B_{i^*}^T(N_{i^*,t-1}) \\
    B_i^T(N_{i,t-1}) - B_{i^*}^T(N_{i^*,t-1}) &\geq 0 \\
    \Delta_i(T) + B_i^T(N_{i,t-1}) - B_{i^*}^T(N_{i^*,t-1}) &\geq \Delta_i(T).
\end{align*}
Using the definition of $\Delta_i(T)$ and the definition of the upper confidence bound $B_i^T(N_{i,t-1})$ in Equation~\eqref{eq:ucbebound} for $i$ and $i^*$, we have:
\begin{align*}
    \mu_{i^*}(T) - \mu_i(T) + \check{\mu}_i^T(N_{i,t-1}) + \check{\beta}_i^T(N_{i,t-1}) - \check{\mu}_{i^*}^T(N_{i^*,t-1}) - \check{\beta}_{i^*}^T(N_{i^*,t-1}) \geq \Delta_i(T).
\end{align*}

Given Assumption~\ref{ass:risingconcave}, we have that $\mu_{i^*}(T) \leq \tilde{\mu}_{i^*}^T(N_{i^*,t-1})$, and, therefore, we have:
\begin{align*}
    \tilde{\mu}_{i^*}^T(N_{i^*,t-1}) - \mu_i(T) + \check{\mu}_i^T(N_{i,t-1}) + \check{\beta}_i^T(N_{i,t-1}) - \check{\mu}_{i^*}^T(N_{i^*,t-1}) - \check{\beta}_{i^*}^T(N_{i^*,t-1}) \geq \Delta_i(T),
\end{align*}
and, since under the good event $\Psi$, it holds that $\tilde{\mu}_{i^*}^T(N_{i^*,t-1}) - \check{\mu}_{i^*}^T(N_{i^*,t-1}) - \check{\beta}_{i^*}^T(N_{i^*,t-1}) < 0$, we have:
\begin{align*}
    - \mu_i(T) + \check{\mu}_i^T(N_{i,t-1}) + \check{\beta}_i^T(N_{i,t-1}) & \geq \Delta_i(T) \\
    - \mu_i(T)  + \check{\beta}_i^T(N_{i,t-1}) + \tilde{\mu}_i^T(N_{i,t-1}) + \underbrace{\check{\mu}_i^T(N_{i,t-1}) - \tilde{\mu}_i^T(N_{i,t-1})}_{({D})} &\geq \Delta_i(T), 
\end{align*}
where we added and subtracted $\tilde{\mu}_i^T(N_{i,t-1})$ in the last equation.
Under the good event $\Psi$, we can upper bound $|({D})| = |\tilde{\mu}_{i}^T(N_{i,t-1}) - \check{\mu}_{i}^T(N_{i,t-1}) | < \check{\beta}_{i}^T(N_{i,t-1})$, to get:
\begin{align*}
    \tilde{\mu}_i^T(N_{i,t-1}) - \mu_i(T) + 2 \check{\beta}_i^T(N_{i,t-1}) \geq \Delta_i(T).
\end{align*}
Using Equation~\eqref{eq:mu_tilde_bias}, and substituting the definition of $\check{\beta}_{i}^T(N_{i,t-1})$, we have:
\begin{align} 
    \frac{1}{2} \underbrace{(2T - 2N_{i, t-1} +h_{i, t} - 1)}_{\leq 2T} \cdot \gamma_i (N_{i, t-1}-2h_{i, t}+1) + 2\sigma \cdot \underbrace{(T-N_{i, t-1} + h_{i, t} - 1)}_{ \leq T} \cdot \sqrt{\frac{a}{h_{i, t}^3}}  & \geq \Delta_i(T) \notag \\
    \underbrace{T \cdot \gamma_i (\lfloor (1-2\varepsilon) N_{i, t} \rfloor )}_{ ({A})} + \underbrace{ 2 \sigma T \sqrt{\frac{a}{\lfloor \varepsilon N_{i, t}\rfloor^3}}}_{({B})} & \geq \Delta_i(T),\label{eq:ineq}
\end{align}
where we used the definition of $h_{i, t} \coloneqq \lfloor \varepsilon N_{i, t} \rfloor$, observing that $N_{i,t-1}-2\lfloor \varepsilon N_{i,t-1} \rfloor + 1 \ge (1-2 \varepsilon) N_{i,t-1}  + 1 =  (1-2 \varepsilon)N_{i, t} + 2\varepsilon \ge \lfloor (1-2 \varepsilon)N_{i, t} \rfloor$ the fact that $N_{i, t-1} = N_{i, t} - 1$ since at time $t$ the algorithm pulls the $i$-th arm. This concludes the proof.
\end{proof}

\ErrorBoundUCB*

\begin{proof}
From the definition of the error probability, we have:
\begin{equation*}
    e_T(\bm{\nu},\text{\ucbeshort}) = \Prob_{\bm{\nu},\text{\ucbeshort}}\left( \hat{I}^*(T) \neq i^*(T) \right) = \Prob_{\bm{\nu},\text{\ucbeshort}}\left( I_{T+1} \neq i^*(T) \right).
\end{equation*}
Therefore, we need to evaluate the probability that the \ucbeshort algorithm would pull a suboptimal arm in the $T+1 $ round.
Given that Assumption~\ref{ass:risingconcave} and that each suboptimal arm has been pulled a number of times $N_{i,T}$ (and cannot be pulled more) at the end of the time budget $T$, under the good event $\Psi$, we are guaranteed to recommend the optimal arm if:
\begin{align}
    T - \sum_{i \neq i^*(T)} N_{i,T} \ge 1.
\end{align}
Even more so, if we enforce the stronger condition:
\begin{align}\label{eq:numberCond}
    (1-\iota)T - \sum_{i \neq i^*(T)} N_{i,T} \ge 1.
\end{align}
If Equation~\eqref{eq:numberCond} holds, a suboptimal arm can be selected by \ucbeshort for the next round $T + 1$ only if the good event $\Psi$ does not hold $e_T(\bm{\nu},\text{\ucbeshort}) = \Prob_{\bm{\nu},\text{\ucbeshort}} \left( \Psi^{\mathsf{c}} \right)$, where we denote with $\Psi^{\mathsf{c}}$ the complementary of event $\Psi$. This probability is upper bounded by Lemma~\ref{lemma:secondEstimatorConc} as:
\begin{align*}
e_T(\bm{\nu},\text{\ucbeshort}) = \Prob_{\bm{\nu},\text{\ucbeshort}} \left( \Psi^{\mathsf{c}} \right) \le 2TK \exp \left( -\frac{a}{10}\right).
\end{align*}
We now derive a condition for $a$ in order to make Equation~\eqref{eq:numberCond} hold. Thanks to  Lemma~\ref{lem:generic_min_pulls} we know that $N_{i,T} \le y_i(a)$ where $y_i(a)$ is the maximum integer such that:
\begin{align*}
           T  \gamma_i(\lfloor (1-2\varepsilon)  y_i(a) \rfloor ) + 2T  \sigma \sqrt{\frac{a}{\lfloor \varepsilon y_i(a) \rfloor ^3}} \geq \Delta_i(T).
\end{align*}
From this condition, we observe that $y_i(a)$ is an increasing function of $a$. Therefore, we can select $a$ in the interval $[0,a^*]$, where $a^*$ is the maximum value of $a$ such that:
\begin{align}\label{eq:magic}
    (1-\iota) T - \sum_{i \neq i^*(T)} y_i(a) \ge 1.
\end{align}
Note that we are not guaranteed that such a value of $a^*>0$ exists. In such a case, we cannot provide meaningful guarantees on the error probability of \ucbeshort. Moving to the simple regret, let us first observe that for the \ucbeshort algorithm, we have:
	\begin{align*}
	r_T(\bm{\nu},\text{ \ucbeshort}) & = \mu_{i^*(T)}(T) - \E_{\bm{\nu},\text{ \ucbeshort}}[\mu_{\hat{I}^*(T)}(N_{\hat{I}^*(T),T})] \\
	& \le \E_{\bm{\nu},\text{ \ucbeshort}}[\mathds{1}\{\hat{I}^*(T) \neq i^*(T)\} + (\mu_{i^*(T)}(T) - \mu_{\hat{I}^*(T)}(N_{\hat{I}^*(T),T})) \mathds{1}\{\hat{I}^*(T) = i^*(T)\} ] \\
	& = \Prob_{\bm{\nu},\text{ \ucbeshort}}(\hat{I}^*(T) \neq i^*(T)) + \E_{\bm{\nu},\text{ \ucbeshort}} [(\mu_{i^*(T)}(T) - \mu_{i^*(T)}(N_{\hat{I}^*(T),T})) )\mathds{1}\{\hat{I}^*(T) = i^*(T)\}] \\
	& \le e_T(\bm{\nu},\text{ \ucbeshort}) + \mu_{i^*(T)}(T) - \mu_{i^*(T)}(\lfloor \iota T \rfloor),
	\end{align*}
	having observed that if the optimal arm is the one to be recommended and under the good event is pulled at least $\lfloor \iota T \rfloor$ thanks to Equation~\eqref{eq:magic}.
\end{proof}

\ErrorBoundUCBbeta*

\begin{proof}
We recall that Assumption~\ref{ass:boundedgamma} states that all the increment functions $\gamma_i(n)$ are such that 
 $\gamma_i(n) \leq cn^{-\beta}$.
We use such a fact to provide an explicit solution for the optimal value of $a^*$.
From Theorem~\ref{thr:ub_error} and using the fact that $\gamma_i(n) \leq cn^{-\beta}$, we have that Equation~\eqref{eq:implicit_NiT} can be rephrased as:
\begin{align}
     \frac{ Tc }{\lfloor(1-2\varepsilon)y\rfloor^\beta}  + \frac{2 t T \sigma a^{\frac{1}{2}}}{\lfloor \varepsilon y \rfloor^{\frac{3}{2}}} \geq \Delta_i(T). \label{eq:simple_split_point}
\end{align}
Or, more restrictively:
\begin{align*}
    \frac{Tc \left( 1-2\varepsilon \right)^{-\beta}}{(y-1)^{\beta}} + \frac{2  T \sigma  \varepsilon^{-\frac{3}{2}}  a^{\frac{1}{2}}}{(y-1)^{\frac{3}{2}}} \geq \Delta_i(T).
\end{align*}

Let us solve Equation~\eqref{eq:simple_split_point} by analyzing separately the two cases in which one of the two terms in the l.h.s.~of such equation becomes prevalent.

\textbf{Case 1:}~$\beta \in \left[ \frac{3}{2} , +\infty \right)$
In this branch, we can upper bound the left-side part of the inequality in Equation~\eqref{eq:simple_split_point} by:
$$\frac{Tc \left( 1-2\varepsilon \right)^{-\beta}}{(y-1)^{\frac{3}{2}}} + \frac{2  T \sigma  \varepsilon^{-\frac{3}{2}}  a^{\frac{1}{2}}}{(y-1)^{\frac{3}{2}}} \geq \Delta_i(T).$$
Thus, we can derive:
\begin{align}
    y_i(a) \leq 1+\left( \frac{Tc  \left( 1-2\varepsilon \right)^{-\beta} + 2 T  \sigma  \varepsilon^{-\frac{3}{2}}  a^{\frac{1}{2}}}{\Delta_i(T)} \right)^{\frac{2}{3}}.
    \label{eq:bound_corollary_N}
\end{align}
Using the above value in Equation~\eqref{eq:magic}, provides:
\begin{align*}
      (1-\iota) T - \sum_{i \neq i^*(T)} y_i(a) & > 0 \\
      (1-\iota) T-(K-1) - \left( Tc  \left( 1-2\varepsilon \right)^{-\beta} + 2  T  \sigma  \varepsilon^{-\frac{3}{2}}  a^{\frac{1}{2}} \right)^{\frac{2}{3}}  \sum_{i \neq i^*(T)}{}\frac{1}{\Delta_i^{\frac{2}{3}}(T)} &> 0 \\
      (1-\iota)T-(K-1) - \left( T c \left( 1-2\varepsilon \right)^{-\beta} + 2  T  \sigma  \varepsilon^{-\frac{3}{2}}  a^{\frac{1}{2}} \right)^{\frac{2}{3}}  H_{1,2/3}(T) & > 0 \\
    a < \frac{\left( \frac{((  (1-\iota)T)^{1/3}-T^{-2/3}(K-1))^{\frac{3}{2}}}{(H_{1,2/3}(T))^{\frac{3}{2}}} - c(1-2\varepsilon)^{-\beta} \right)^2 }{4  \sigma^2  \varepsilon^{-3}} \\
   \implies  a < \frac{\left( \frac{((  (1-\iota)T)^{1/3}-(K-1))^{\frac{3}{2}}}{(H_{1,2/3}(T))^{\frac{3}{2}}} - c(1-2\varepsilon)^{-\beta} \right)^2 }{4  \sigma^2  \varepsilon^{-3}},
\end{align*}
where the last expression is obtained by observing that $T \ge 1$ and for obtaining a more manageable expression, under the assumption that $ \frac{((  (1-\iota)T)^{1/3}-(K-1))^{\frac{3}{2}}}{(H_{1,2/3}(T))^{\frac{3}{2}}} - c(1-2\varepsilon)^{-\beta} \ge 0$.

This implies a constraint on the minimum time budget $T$, whose explicit form for the case $\beta \in \left[ \frac{3}{2} , +\infty \right)$ is provided in Lemma~\ref{lemma:lb_time_ucb}.

\textbf{Case 2:}~$\beta \in \left( 1, \frac{3}{2} \right)$
In this case, we enforce the more restrictive condition:
\begin{equation*}
    \frac{Tc  \left( 1-2\varepsilon \right)^{-\beta}}{(y-1)
    ^{\beta}} + \frac{2  T  \sigma  \varepsilon^{-\frac{3}{2}}  a^{\frac{1}{2}}}{(y-1)^{\beta}} \geq \Delta_i(T),
\end{equation*}
the value for the number of pulls is:
\begin{align}
    y_i(a) \leq 1+\left( \frac{Tc  \left( 1-2\varepsilon \right)^{-\beta} + 2  T  \sigma  \varepsilon^{-\frac{3}{2}}  a^{\frac{1}{2}}}{\Delta_i(T)} \right)^{\frac{1}{\beta}}.
\label{eq:bound_corollary_N_case2}
\end{align}
and the value for $a^*$ becomes:
\begin{align*}
      (1-\iota)T- \sum_{i \neq i^*(T)}^{} N_{i,T} &> 0 \\
      (1-\iota)T-(K-1) - \left( T c \left( 1-2\varepsilon \right)^{-\beta} + 2  T  \sigma  \varepsilon^{-\frac{3}{2}}  a^{\frac{1}{2}} \right)^{\frac{1}{\beta}}  \sum_{i \neq i^*(T)}{}\frac{1}{\Delta_i^{\frac{1}{\beta}}(T)} &> 0 \\
      (1-\iota)T-(K-1) - \left( T c \left( 1-2\varepsilon \right)^{-\beta} + 2  T  \sigma  \varepsilon^{-\frac{3}{2}}  a^{\frac{1}{2}} \right)^{\frac{1}{\beta}}  H_{1,{1/\beta}}(T) &> 0 \\
    a < \frac{\left( \frac{((  (1-\iota)T)^{1-1/\beta}-T^{-1/\beta} (K-1))^{\beta}}{(H_{1,{1/\beta}}(T))^{\beta}} - c (1-2\varepsilon)^{-\beta} \right)^2 }{4  \sigma^2  \varepsilon^{-3}} \\
     a < \frac{\left( \frac{((  (1-\iota)T)^{1-1/\beta}- (K-1))^{\beta}}{(H_{1,{1/\beta}}(T))^{\beta}} - c (1-2\varepsilon)^{-\beta} \right)^2 }{4  \sigma^2  \varepsilon^{-3}},
\end{align*}
where the last expression is obtained by observing that $T \ge 1$ and for obtaining a more convenient expression, under the assumption that $\frac{((  (1-\iota)T)^{1-1/\beta}-(K-1))^{\beta}}{(H_{1,{1}/{\beta}}(T))^{\beta}} - c(1-2\varepsilon)^{-\beta} \ge 0 $. 

Also here, this implies a constraint on the minimum time budget $T$ for the case $\beta \in \left( 1, \frac{3}{2} \right)$, whose explicit form is provided in Lemma~\ref{lemma:lb_time_ucb}.

Concerning the simple regret, we have to bound under Assumption~\ref{ass:risingconcave} the following expression:
\begin{align}
   \mu_{i^*(T)}(T) - \mu_{i^*(T)}(\lfloor \iota T \rfloor) \le (T - \lfloor \iota T \rfloor) \gamma_{i^*(T)} (\lfloor \iota T \rfloor) =  \rceil 1-\iota \lceil T c \lfloor \iota T \rfloor^{-\beta} \le c 2^{\beta} \iota^{-\beta} T^{1-\beta}. 
\end{align}
\end{proof}

\begin{restatable}[]{lemma}{TimeBoundUCBbeta}\label{lemma:lb_time_ucb}
    Under Assumptions~\ref{ass:risingconcave} and~\ref{ass:boundedgamma}, the minimum time budget $T$ for which the theoretical guarantees of \ucbeshort hold is:
    \begin{align*}
    T \geq \begin{cases}  (1-\iota)^{-1}
        \left(c^{\frac{1}{\beta}}(1-2\varepsilon)^{-1} \; (H_{1,{1}/{\beta}}(T)) + (K-1) \right)^\frac{\beta}{\beta-1} & \text{if } \beta \in (1,3/2) \\
        (1-\iota)^{-1} \left(c^{\frac{2}{3}}(1-2\varepsilon)^{-\frac{2}{3}\beta} \; (H_{1,{2}/{3}}(T)) + (K-1)\right)^3 & \text{if } \beta \in [ 3/2, +\infty )
    \end{cases}.
    \end{align*}
\end{restatable}
\begin{proof}
    Given Corollary~\ref{coroll:ub_error_beta}, we want to find the values of $T$ such that $a^*\ge0$. This implies having $a^* \ge 0$. Given the value of $\beta$, we can derive a lower bound for the time budget $T$.
    
    \textbf{Case 1:}~$\beta \in \left[ \frac{3}{2} , +\infty \right)$
    :
    $$\frac{((  (1-\iota)T)^{1/3}-(K-1))^{\frac{3}{2}}}{(H_{1,{2}/{3}}(T))^{\frac{3}{2}}} - c(1-2\varepsilon)^{-\beta} \ge 0.$$
    From this, it follows that:
    $$T \geq   (1-\iota)^{-1}\left(c^{2/3}(1-2\varepsilon)^{-2/3\cdot\beta} \; (H_{1,{2}/{3}}(T)) + (K-1)\right)^3.$$

    \textbf{Case 2:}~$\beta \in \left( 1, \frac{3}{2} \right)$:
    $$\frac{((  (1-\iota)T)^{1-1/\beta}-(K-1))^{\beta}}{(H_{1,{1}/{\beta}}(T))^{\beta}} - c(1-2\varepsilon)^{-\beta} \ge 0.$$
    From this, it follows that:
    $$T \geq   (1-\iota)^{-1}\left(c^{\frac{1}{\beta}}(1-2\varepsilon)^{-1} \; (H_{1,{1}/{\beta}}(T)) + (K-1) \right)^\frac{\beta}{\beta-1}.$$
\end{proof}

\subsection{Proofs of Section~\ref{sec:succrejects}}\label{apx:proofs_succrejects}

In this appendix, we provide the proofs we have omitted in the main paper for what concerns the theoretical results about \succrejectshort.
We recall that with a slight abuse of notation, as done in Section~\ref{sec:succrejects}, we denote with $\Delta_{(i)}(T)$ the $i$\textsuperscript{th} gap rearranged in increasing order, \ie we have $\Delta_{(i)}(T) \leq \Delta_{(j)}(T)$ for $2 \le i < j \le K$.

\begin{lemma} \label{lem:bias}
For every arm~$i \in \dsb{K}$ and every round~$t \in \dsb{T}$, let us define:
$$\overbar{\mu}_i(t) = \frac{1}{h_{i, t}} \sum_{l = N_{i,t-1} - h_{i, t} +1}^{N_{i, t-1}} \mu_i(l),$$
if~$N_{i, t} \geq 2$, then $\mu_i(t) \geq \overbar{\mu}_i(t)$, and if $h_{i,t} = \left\lfloor \varepsilon N_{i,t} \right\rfloor$ with $\varepsilon \in (0, 1)$, it holds that:
\begin{align}
\mu_i(T) - \overbar{\mu}_i(N_{i, t}) \leq T \gamma_i(\left\lceil (1-\varepsilon) N_{i,t} \right\rceil).
\label{eq:bias_pessimistic}
\end{align}
\end{lemma}
\begin{proof}
The proof follows from Lemma~\ref{lemma:secondEstimator}, having observed that $N_{i,t-1} - \lfloor \varepsilon N_{i,t-1} \rfloor + 1 =   \lceil  (1-\varepsilon) N_{i,t-1} \rceil + 1= \lceil  (1-\varepsilon) (N_{i,t}-1) \rceil + 1 \ge \lceil  (1-\varepsilon) N_{i,t} \rceil$ since $  N_{i,t-1} = N_{i,t} - 1$ and $\lceil x+y \rceil \le \lceil x \rceil+\lceil y \rceil$ with $x=(1-\varepsilon)(N_{i,t}-1)$ and $y=1-\varepsilon$.
\end{proof}

\begin{restatable}[Lower Bound for the Time Budget for \succrejectshort]{lemma}{ThrLBbudgetSR} \label{thr:lb_horizon_sr}
Under Assumptions~\ref{ass:risingconcave} and~\ref{ass:boundedgamma}, the \succrejectshort algorithm is s.t. if the time budget $T$ satisfies:
\begin{align*}
    T \geq \max \left\{ 2K, c^{\frac{1}{\beta-1}} 2^{\frac{1+\beta}{\beta-1}} (1-\varepsilon)^{-\frac{\beta}{\beta-1}} \overbar{\log}(K)^{\frac{\beta}{\beta-1}} \max_{i \in \dsb{2,K}} \left\{ i \Delta_{(i)}^{-\frac{1}{\beta}} (T) \right\}^{\frac{\beta}{\beta - 1}} \right\},
\end{align*}
then, it holds that:
$$\forall  j \in \dsb{K-1}: \qquad T  \gamma_{(1)}(\left\lceil (1-\varepsilon) N_{j} \right\rceil) \leq \frac{\Delta_{(K+1-j)}(T)}{2}.$$
\end{restatable}

\begin{proof}
First, using Assumption~\ref{ass:boundedgamma}, we have that:
\begin{align*}
    T  \gamma_{(1)}(\left\lceil (1-\varepsilon) N_j \right\rceil) \leq  c T (1-\varepsilon)^{-\beta} N_j^{-\beta}.
\end{align*}
Substituting the definition of $N_j$ into the above equation, we get:
\begin{align}
    c T (1-\varepsilon)^{-\beta} N_j^{-\beta} &= c T (1-\varepsilon)^{-\beta} \cdot \left( \frac{1}{\overbar{\log}(K)} \cdot \frac{T-K}{K+1-j} \right)^{-\beta} \label{eq:init}\\
    &\leq c T (1-\varepsilon)^{-\beta} \cdot \left( \frac{T}{2\overbar{\log}(K)  (K+1-j)} \right)^{-\beta}, \label{eq:fin}
\end{align}
where line~(\ref{eq:fin}) follows from $T \geq 2K$.
Requiring that, for $j\in \dsb{1,K-1}$, the maximum possible bias is lower than a fraction of the suboptimality gap of arm $K+1-j$, we obtain:
\begin{align*}
    c T (1-\varepsilon)^{-\beta} N_j^{-\beta} &\leq \frac{\Delta_{(K+1-j)}(T)}{2} \\
    cT (1-\varepsilon)^{-\beta} \left( \frac{T}{2\overbar{\log}(K)  (K+1-j)} \right)^{-\beta} &\leq \frac{\Delta_{(K+1-j)}(T)}{2} \\
    T^{1-\beta} &\leq \frac{\Delta_{(K+1-j)}(T) \cdot (1-\varepsilon)^{\beta}}{c2^{1+\beta} \cdot \left( \overbar{\log}(K)  (K+1-j) \right)^{\beta}} \\
    T &\geq \frac{\Delta_{(K+1-j)}^{\frac{1}{1-\beta}}(T) \cdot (1-\varepsilon)^{\frac{\beta}{1-\beta}}}{c^{\frac{1}{1-\beta}} 2^{\frac{1+\beta}{1-\beta}} \cdot \left( \overbar{\log}(K)  (K+1-j) \right)^{\frac{\beta}{1-\beta}}}.
\end{align*}
Requiring that the above condition holds for all the phases $j \in \dsb{K-1}$ we have:
\begin{align*}
    T & \geq \max_{j \in \dsb{K-1}} \left\{ \frac{\Delta_{(K+1-j)}^{\frac{1}{1-\beta}}(T) \cdot (1-\varepsilon)^{\frac{\beta}{1-\beta}}}{c^{\frac{1}{1-\beta}} 2^{\frac{1+\beta}{1-\beta}} \cdot \left( \overbar{\log}(K) \cdot (K+1-j) \right)^{\frac{\beta}{1-\beta}}} \right\}       \\
     &\geq c^{-\frac{1}{1-\beta}} 2^{-\frac{1+\beta}{1-\beta}} (1-\varepsilon)^{\frac{\beta}{1-\beta}} \overbar{\log}(K)^{-\frac{\beta}{1-\beta}}\max_{j \in \dsb{K-1}} \left\{ \left( \frac{\Delta_{(K+1-j)}(T)}{(K+1-j)^{\beta}} \right)^{\frac{1}{1-\beta}} \right\}  \\
    &\geq c^{-\frac{1}{1-\beta}} 2^{-\frac{1+\beta}{1-\beta}} (1-\varepsilon)^{\frac{\beta}{1-\beta}} \overbar{\log}(K)^{-\frac{\beta}{1-\beta}}\cdot \max_{j \in \dsb{K-1}} \left\{ \left( (K+1-j)^{\beta} \Delta_{(K+1-j)}^{-1}(T) \right)^{\frac{1}{\beta-1}} \right\}  \\
    &\geq c^{-\frac{1}{1-\beta}} 2^{-\frac{1+\beta}{1-\beta}} (1-\varepsilon)^{\frac{\beta}{1-\beta}} \overbar{\log}(K)^{-\frac{\beta}{1-\beta}}\max_{i \in \dsb{2,K}} \left\{ i^{\frac{\beta}{\beta - 1}} \Delta_{(i)}^{-\frac{1}{\beta-1}}(T) \right\}. 
\end{align*}
\end{proof}

\UpperBoundErrorSRGeneral*

\begin{proof}
The \succrejectshort recommends the wrong arm when at the end of some phase $j$ the optimal arm has a pessimistic estimator $\hat{\mu}_1(N_j)$ smaller than that of some suboptimal arm. Formally, we bound the following probability:
\begin{align*}
    e_T(\bm{\nu},\text{\succrejectshort}) & \leq \Prob_{\bm{\nu},\text{\succrejectshort}} \left(\exists j \in \dsb{K-1}\, \exists i \in \dsb{K+1-j,K}\,:\, \hat{\mu}_{(1)}(N_j)  < \hat{\mu}_{(i)}(N_j)  \right) \\
    & \leq \sum_{j=1}^{K-1} \Prob_{\bm{\nu},\text{\succrejectshort}} \left(\exists i \in \dsb{K+1-j,K} \,:\, \hat{\mu}_{(1)}(N_j) < \hat{\mu}_{(i)}(N_j)  \right) \\
    & \leq \sum_{j=1}^{K-1} \sum_{i=K+1-j}^{K} \Prob_{\bm{\nu},\text{\succrejectshort}} \left( \hat{\mu}_{(1)}(N_j) \leq \hat{\mu}_{(i)}(N_j) \right),
\end{align*}
where we use a union bound over the phases and over the arms still in the available arm set $\mathcal{X}_{j-1}$ in each phase.
Let us focus on $\Prob_{\bm{\nu},\text{\succrejectshort}} \left( \hat{\mu}_{(1)}(N_j) \leq \hat{\mu}_{(i)}(N_j) \right)$.
We have that the optimal arm has a smaller pessimistic estimator than the $i$\textsuperscript{th} one when:
\begin{align}
    \hat{\mu}_{(i)}(N_j) & \geq \hat{\mu}_{(1)}(N_j) \nonumber \\
    \hat{\mu}_{(i)}(N_j) - \hat{\mu}_{(1)}(N_j) &\geq 0 \nonumber \\
    \mu_{(1)}(T) - \hat{\mu}_{(1)}(N_j) + \hat{\mu}_{(i)}(N_j) - \mu_{(i)}(T) &\geq \Delta_{(i)}(T) \label{eq:sumsub}\\
    \underbrace{\mu_{(1)}(T) - \overbar{\mu}_{(1)}(N_j)}_{\leq T \cdot \gamma_{(1)}(\lceil(1-\varepsilon)N_j\rceil)} - \hat{\mu}_{(1)}(N_j) + \overbar{\mu}_{(1)}(N_j) + \hat{\mu}_{(i)}(N_j) \underbrace{-\mu_{(i)}(T)}_{\leq - \overbar{\mu}_{(i)}(N_j)} &\geq \Delta_{(i)}(T) \label{eq:sumsub1}\\
    - \hat{\mu}_{(1)}(N_j) + \overbar{\mu}_{(1)}(N_j) + \hat{\mu}_{(i)}(N_j) - \overbar{\mu}_{(i)}(N_j) & \geq \Delta_{(i)}(T)- T \cdot \gamma_{(1)}((1-\varepsilon)N_j), \label{eq:final}
\end{align}
where we added $\pm \Delta_{(i)}(T)$ to derive Equation~\eqref{eq:sumsub}, and added $\pm \overbar{\mu}_{(1)}(N_j)$ to derive Equation~\eqref{eq:sumsub1}, we used the results in Lemma~\ref{lem:bias} and the fact that the reward function is increasing.
Since the time budget satisfies the hypothesis of the theorem, we have:
\begin{align}\label{eq:sr_proof_eq0}
\forall j \in \dsb{K-1}:  \;\; T \gamma_{(1)}(\lceil(1-\varepsilon)N_j\rceil) \leq \frac{\Delta_{(K-j+1)}(T)}{2} \;\; \implies  \;\; T \gamma_{(1)}(\lceil(1-\varepsilon)N_j\rceil) \le  \frac{\Delta_{(i)}(T)}{2},
\end{align}
since $\Delta_{(K-j+1)}(T) \le \Delta_{(i)}(T)$ for all $i \in \dsb{K-j+1,K}$.
Substituting into Equation~\eqref{eq:final} the above, we have:
$$ - \hat{\mu}_{(1)}(N_j) + \overbar{\mu}_{(1)}(N_j) + \hat{\mu}_{(i)}(N_j) - \overbar{\mu}_{(i)}(N_j) \geq \frac{\Delta_{(i)}(T)}{2},$$
and the error probability becomes:
$$ e_T(\bm{\nu},\text{\succrejectshort}) \leq \sum_{j=1}^{K-1} \sum_{i=K+1-j}^{K} \Prob_{\bm{\nu},\text{\succrejectshort}} \left( - \hat{\mu}_{(1)}(N_j) + \overbar{\mu}_{(1)}(N_j) + \hat{\mu}_{(i)}(N_j) - \overbar{\mu}_{(i)}(N_j) \geq \frac{\Delta_{(i)}(T)}{2} \right).$$
For the previous argument, we apply the Azuma-Ho\"{e}ffding's inequality to the latter probability:
\begin{align*}
    e_T(\bm{\nu},\text{\succrejectshort}) & \leq \sum_{j=1}^{K-1} \sum_{i=K+1-j}^{K} \exp\left( - \frac{\varepsilon N_j \left( \frac{\Delta_{(i)}(T)}{2} \right)^2}{2 \sigma^2} \right) \\
    &\leq \sum_{j=1}^{K-1} j \exp\left( - \frac{\varepsilon N_j}{8 \sigma^2} \Delta_{(K+1-j)}^2 \right).
\end{align*}
Now, given that:
\begin{align*}
    \frac{\varepsilon N_j}{8 \sigma^2} \Delta_{(K+1-j)}^2 & \geq \frac{\varepsilon}{8 \sigma^2} \frac{T-K}{\overbar{\log}(K) (K+1-j) \Delta_{(K+1-j)}^{-2}} \\
    & \geq \frac{\varepsilon}{8\sigma^2} \frac{T-K}{\overbar{\log}(K) {H}_2(T)},
\end{align*}
we finally derive the following:
\begin{align*}    
    e_T(\bm{\nu},\text{\succrejectshort}) \leq \frac{K(K-1)}{2} \exp\left( - \frac{\varepsilon}{8\sigma^2} \frac{T-K}{\overbar{\log}(K) {H}_2(T)} \right),
\end{align*}
which concludes the first part of the proof. Moving to the simple regret, let us first observe that for the \succrejectshort algorithm, we have:
	\begin{align*}
	r_T(\bm{\nu},\text{ \succrejectshort}) & = \mu_{i^*(T)}(T) - \E_{\bm{\nu},\text{ \succrejectshort}}[\mu_{\hat{I}^*(T)}(N_{\hat{I}^*(T),T})] \\
	& \le \E_{\bm{\nu},\text{ \succrejectshort}}[\mathds{1}\{\hat{I}^*(T) \neq i^*(T)\} + (\mu_{i^*(T)}(T) - \mu_{\hat{I}^*(T)}(N_{\hat{I}^*(T),T})) \mathds{1}\{\hat{I}^*(T) = i^*(T)\} ] \\
	& = \Prob_{\bm{\nu},\text{ \succrejectshort}}(\hat{I}^*(T) \neq i^*(T)) + \E_{\bm{\nu},\text{ \succrejectshort}} [(\mu_{i^*(T)}(T) - \mu_{i^*(T)}(N_{K-1}) )\mathds{1}\{\hat{I}^*(T) = i^*(T)\}] \\
	& \le e_T(\bm{\nu},\text{ \succrejectshort}) + \mu_{i^*(T)}(T) - \mu_{i^*(T)}(N_{K-1}),
	\end{align*}
	having observed that if the optimal arm is the one to be recommended, it had been pulled $N_{K-1}$ times. 
\end{proof}

\UpperBoundErrorSR*
\begin{proof}
	The result follows from a direct combination of Theorem~\ref{thr:ub_error_sr} and Lemma~\ref{thr:lb_horizon_sr}. For the simple regret, we start from Theorem~\ref{thr:ub_error_sr} and for the first term, we resort to the error probability in Equation~\eqref{eq:SRErrorProb}, while for the second, we exploit Assumptions~\ref{ass:risingconcave} and~\ref{ass:boundedgamma}:
	\begin{align*}
	\mu_{i^*(T)}(T) - \mu_{i^*(T)}(N_{K-1}) & \le T \gamma_{i^*(T)}(N_{K-1}) \\
	& \le cTN_{K-1}^{-\beta}\\
	& \le c T \left( \left\lceil \frac{1}{\overline{\log}(K)} \frac{T - K}{2} \right\rceil\right)^{-\beta} \\
	& \le c 4^\beta  T^{1-\beta} \overline{\log}(K)^\beta,
	\end{align*}
	where the last inequality follows from $T \ge 2K$. Putting all together, we observe that the rate is of order $\widetilde{\mathcal{O}} (e^{-T/H_2(T)} + T^{1-\beta}) = \widetilde{\mathcal{O}} (T^{1-\beta}) $.
\end{proof}

\subsection{Proofs of Section~\ref{sec:lb}}

In this appendix, we provide the proofs of the lower bound on the error probability presented in Section~\ref{sec:lb}.

\MinimumT*

\begin{proof}
    We define for every suboptimal arm $i \in \dsb{2,K}$ the suboptimality gap reached at $T \rightarrow +\infty$ as $\Delta_i \in (0,1/2]$. We consider the base instance $\bm{\nu}$ (see Figure~\ref{fig:lb_instances}) in which define the (deterministic) reward functions are defined for $\beta > 1$ and $n \in \mathbb{N}$ as:
    \begin{align*}
        & \mu_1(n) = \frac{1}{2} \left( 1 - n^{1-\beta} \right), \\
        & \mu_i(n) = \min \left\{ \underbrace{\left(\frac{1}{2} +\Delta_i \right) \left( 1 - n^{1-\beta} \right)}_{\eqqcolon \mu_i'(n)}, \frac{1}{2} - \Delta_i \right\} \qquad i \in \dsb{2,K}.
    \end{align*}
    Clearly, $\bm{\nu}$ fulfills Assumption~\ref{ass:risingconcave} and it is simple to show that also Assumption~\ref{ass:boundedgamma} is satisfied. Indeed, by first-order Taylor expansion:
    \begin{align}\label{eq:derivative}
         \gamma_1(n) & = \mu_1(n+1) - \mu_1(n) \le \sup_{x \in [n,n+1]} \frac{\partial }{\partial x} \mu_1(x) \\
        & = \frac{\beta-1}{2} \sup_{x \in [n,n+1]} x^{-\beta} = \frac{\beta-1}{2} n^{-\beta}, \nonumber
    \end{align}
    \begin{align*}
         \gamma_i(n)&  = \mu_i(n+1) - \mu_i(n) \le \sup_{x \in [n,n+1]} \frac{\partial }{\partial x} \mu_i'(x) \\
        &= (\beta-1) \left(\frac{1}{2}+\Delta_i\right)\sup_{x \in [n,n+1]} x^{-\beta} = (\beta-1) n^{-\beta}
    \end{align*}
    Thus, we simply take $c=\beta-1$ in Assumption~\ref{ass:boundedgamma}. Let us define $n^*_i$ the number of pulls in which arm $i \in \dsb{2,K}$ reaches the stationary behavior:
    \begin{align*}
        \left(\frac{1}{2} +\Delta_i \right) \left( 1 - n^{1-\beta} \right) =  \frac{1}{2} - \Delta_i \implies n^*_i = \left( \frac{1/2+\Delta_i}{2\Delta_i} \right)^{\frac{1}{\beta-1}}.
    \end{align*}
    A sufficient condition on the time budget so that the optimal arm is 1 (i.e., $i^*(T) = 1$) is given by $T \ge T^*$, where $T^*$ is the point in which the curve of the optimal arm intersects that of any of the suboptimal arms $i \in \dsb{2,K}$:
    \begin{align*}
        \frac{1}{2} (1-T^{1-\beta}) = \frac{1}{2} - \Delta_i  \implies T^* \coloneqq \max_{i \in \dsb{2,K}} \left( \frac{1}{2\Delta_i} \right)^{\frac{1}{\beta-1}}.
    \end{align*}
    Consider now the regime in which $T \ge T^*$. We proceed by contradiction. Suppose that there exists an algorithm $\mathfrak{A}$ that identifies the optimal arm such that on the bandit $\bm{\nu}$ the suboptimal arm ${\bar{i}} \in \dsb{2,K}$ has an expected number of pulls satisfying:
    \begin{align}\label{eq:lesspulls}
        \E_{\bm{\nu},\mathfrak{A}}[N_{\bar{i},T}] < n^*_{\bar{i}}.
    \end{align}
    Consider now the alternative bandit $\bm{\nu}'$ constructed from $\bm{\nu}$ by keeping all the arms unaltered, except for arm ${\bar{i}}$ that is made optimal, defined for $n \in \Nat$:
    \begin{align*}
        \mu_{\bar{i}}'(n) = \left(\frac{1}{2} +\Delta_{\bar{i}} \right) \left( 1 - n^{1-\beta} \right), \\
        \mu_j'(n) = \mu_j(n), \qquad j \in \dsb{K} \setminus \{\bar{i}\}.
    \end{align*}
    Clearly the bandit $\bm{\nu}'$ fulfills Assumption~\ref{ass:risingconcave} and, with calculations similar to those in Equation~\eqref{eq:derivative}, we conclude that it satisfies Assumption~\ref{ass:boundedgamma} with $c = \beta-1$. A sufficient condition on $T$ for which arm $\bar{i}$ is optimal in bandit $\bm{\nu}'$ is that $T \ge T_2$ in which the curve of arm $\bar{i}$ intersects that of the arms $j$ such that $\Delta_j \ge \Delta_{\bar{i}}$:
    \begin{align*}
        \left(\frac{1}{2}+\Delta_{\bar{i}}\right)(1-T^{1-\beta}) = \frac{1}{2}-\Delta_j \implies T \ge  \max_{j \in \dsb{K} : \Delta_j \ge \Delta_{\bar{i}}} \left( \frac{1/2+\Delta_{\bar{i}}}{\Delta_{\bar{i}}+\Delta_j} \right)^{\frac{1}{\beta-1}}. 
    \end{align*}
    Thus, we take:
    \begin{align*}
        T_2 \coloneqq \left( \frac{1/2+\Delta_{\bar{i}}}{2\Delta_{\bar{i}}} \right)^{\frac{1}{\beta-1}}.
    \end{align*}
    Clearly, $T^* \ge T_2$ since all the suboptimality gaps are at most $1/2$. Thus, we continue in the regime $T \ge T^*$. Since $\mu_{\bar{i}}'(n) = \mu_{\bar{i}}(n)$ if $n < n^*_i$, it follows that under condition~\ref{eq:lesspulls}, algorithm $\mathfrak{A}$ cannot distinguish between the two bandits and, consequently, cannot identify the optimal arm on bandit $\bm{\nu}'$. Thus, it must follow, from the contradiction, that:
    \begin{align*}
        \E_{\bm{\nu},\mathfrak{A}}[N_{\bar{i},T}] \ge n^*_{\bar{i}}.
    \end{align*}
    By summing over $\bar{i} \in \dsb{2,K}$, we obtain:
    \begin{align}
        T \ge \sum_{\bar{i} \in \dsb{2,K}} \E_{\bm{\nu}}[N_{\bar{i}}(T)] \ge \sum_{\bar{i} \in \dsb{2,K}}  n^*_{\bar{i}} =  \sum_{\bar{i} \in \dsb{2,K}} \left( \frac{1/2+\Delta_{\bar{i}}}{2\Delta_{\bar{i}}} \right)^{\frac{1}{\beta-1}} \ge \sum_{\bar{i} \in \dsb{2,K}} \left( \frac{1}{4\Delta_{\bar{i}}} \right)^{\frac{1}{\beta-1}} \eqqcolon T^\dagger. \label{eq:min_num_pull}
    \end{align}
    Thus, we have found an interval $T \in [T^*,T^\dagger]$ in which identification cannot be performed. Notice that it is simple to enforce that $T^\dagger > T^*$. For instance, by taking all suboptimality gaps equal $\Delta_2 = \dots = \Delta_K \eqqcolon \Delta$, we have:
\begin{align}\label{eq:KDeriv}
T^\dagger = (K-1) \left( \frac{1}{4\Delta} \right)^{\frac{1}{\beta-1}} \ge  \left( \frac{1}{2\Delta} \right)^{\frac{1}{\beta-1}} = T^* \implies K \ge 1 + 2^{\frac{1}{\beta-1}}.
\end{align}    
    To conclude, we need to relate $\Delta_i$ with $\Delta_i(T)$ and $\Delta'_i(T)$. We perform the computation for both the instances $\bm{\nu}$ and $\bm{\nu}'$, in the regime $T \ge 2^{\frac{1}{\beta-1}} T^*$. Let us start with $\bm{\nu}$:
    \begin{align*}
         \Delta_i(T) & = \frac{1}{2}(1-T^{1-\beta}) - \left(\frac{1}{2}-\Delta_i\right)\\
        &  = \Delta_i - \frac{1}{2} T^{1-\beta} \\
        & \ge \Delta_i - \frac{1}{2} \min_{j \in \dsb{2,K}} \Delta_j \ge \frac{\Delta_i}{2} , \quad i \in \dsb{2,K}
    \end{align*}
    We move to $\bm{\nu}'$:
    \begin{align*}
        \Delta_1'(T) & = \left(\frac{1}{2} + \Delta_{\bar{i}} \right) (1-T^{1-\beta}) - \frac{1}{2}  (1-T^{1-\beta}) \\
        & = \Delta_{\bar{i}} (1-T^{1-\beta}) \ge \frac{\Delta_{\bar{i}}}{2}\\
        & = \Delta_{\bar{i}} - \Delta_{\bar{i}}  \min_{j \in \dsb{2,K}} \Delta_j \\
        & \ge \Delta_{\bar{i}} - \frac{1}{2} \Delta_{\bar{i}} = \frac{ \Delta_{\bar{i}}}{2},
        \end{align*}
        having recalled that all suboptimality gaps are smaller than $1/2$. For suboptimal arms, we have:
        \begin{align*}
        & \Delta_i'(T) = \left(\frac{1}{2} + \Delta_{\bar{i}} \right) (1-T^{1-\beta}) - \left( \frac{1}{2} - \Delta_i \right) \ge \Delta_i - \frac{1}{2}T^{1-\beta} \ge \frac{\Delta_i}{2}, \quad i \in \dsb{2,K} \setminus\{\bar{i}\}.
    \end{align*}
    Thus, a necessary condition for the correct identification of the optimal arm is:
    \begin{align*}
        T \ge  \sum_{\bar{i} \in \dsb{2,K}} \left( \frac{1/2+2\Delta_{\bar{i}}(T)}{4\Delta_{\bar{i}}(T)} \right)^{\frac{1}{\beta-1}} \ge \sum_{\bar{i} \in \dsb{2,K}} \left( \frac{1}{8\Delta_{\bar{i}}(T)} \right)^{\frac{1}{\beta-1}} = 2^{-\frac{1}{\beta-1}} T^\dagger.
    \end{align*}
    With a derivation similar to that of Equation~\eqref{eq:KDeriv}, we have that for $K \ge 1+8^{\frac{1}{\beta-1}}$, we can enforce $2^{-\frac{1}{\beta-1}} T^\dagger \ge 2^{\frac{1}{\beta-1}} T^*$.
\end{proof}

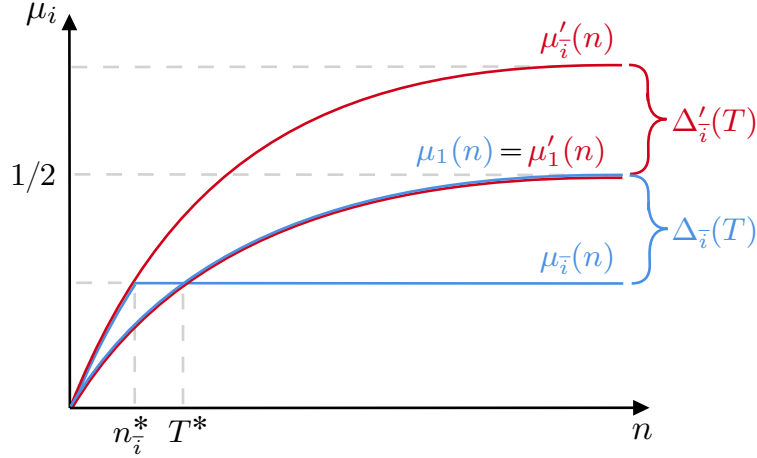
\begin{figure}
    \centering
    \resizebox{0.6\linewidth}{!}{\tikzset{every picture/.style={line width=0.75pt}} 

\begin{tikzpicture}[x=0.75pt,y=0.75pt,yscale=-1,xscale=1]

\draw [color={rgb, 255:red, 74; green, 144; blue, 226 }  ,draw opacity=1 ][line width=0.75]    (191.4,259.53) .. controls (200.11,236.93) and (211.93,217.93) .. (215,213.4) ;
\draw [color={rgb, 255:red, 210; green, 210; blue, 210 }  ,draw opacity=1 ] [dash pattern={on 4.5pt off 4.5pt}]  (190.8,130.8) -- (380,130.58) ;
\draw [color={rgb, 255:red, 210; green, 210; blue, 210 }  ,draw opacity=1 ] [dash pattern={on 4.5pt off 4.5pt}]  (233.12,258.83) -- (233.12,216.33) ;
\draw [color={rgb, 255:red, 210; green, 210; blue, 210 }  ,draw opacity=1 ] [dash pattern={on 4.5pt off 4.5pt}]  (192,171.55) -- (400,171.8) ;
\draw [color={rgb, 255:red, 210; green, 210; blue, 210 }  ,draw opacity=1 ] [dash pattern={on 4.5pt off 4.5pt}]  (214.8,258.63) -- (214.8,215) ;
\draw [color={rgb, 255:red, 210; green, 210; blue, 210 }  ,draw opacity=1 ] [dash pattern={on 4.5pt off 4.5pt}]  (192.6,212.73) -- (209.85,212.73) ;
\draw [color={rgb, 255:red, 208; green, 2; blue, 27 }  ,draw opacity=1 ][line width=0.75]    (190.44,259.71) .. controls (229.93,160.62) and (293.42,129.36) .. (399.75,130.08) ;
\draw [color={rgb, 255:red, 74; green, 144; blue, 226 }  ,draw opacity=1 ]   (214.6,212.87) -- (400.2,213) ;
\draw  [color={rgb, 255:red, 208; green, 2; blue, 27 }  ,draw opacity=1 ] (402.8,171.4) .. controls (407.47,171.35) and (409.78,169) .. (409.73,164.33) -- (409.7,160.63) .. controls (409.63,153.96) and (411.93,150.61) .. (416.6,150.57) .. controls (411.93,150.61) and (409.57,147.3) .. (409.5,140.63)(409.53,143.63) -- (409.47,136.93) .. controls (409.42,132.26) and (407.07,129.95) .. (402.4,130) ;
\draw  [color={rgb, 255:red, 74; green, 144; blue, 226 }  ,draw opacity=1 ] (403.4,213) .. controls (408.07,212.91) and (410.35,210.53) .. (410.26,205.86) -- (410.19,202.36) .. controls (410.06,195.69) and (412.33,192.32) .. (417,192.23) .. controls (412.33,192.32) and (409.93,189.03) .. (409.8,182.37)(409.86,185.36) -- (409.73,178.86) .. controls (409.64,174.19) and (407.27,171.91) .. (402.6,172) ;
\draw [color={rgb, 255:red, 208; green, 2; blue, 27 }  ,draw opacity=1 ][line width=0.75]    (191.03,260.3) .. controls (237.34,182.67) and (330.86,172.36) .. (399.86,172.86) ;
\draw    (190.18,260.3) -- (407.2,260.2) ;
\draw [shift={(410.2,260.2)}, rotate = 179.97] [fill={rgb, 255:red, 0; green, 0; blue, 0 }  ][line width=0.08]  [draw opacity=0] (6.25,-3) -- (0,0) -- (6.25,3) -- cycle    ;
\draw    (190.18,260.3) -- (190,113.58) ;
\draw [shift={(190,110.58)}, rotate = 89.93] [fill={rgb, 255:red, 0; green, 0; blue, 0 }  ][line width=0.08]  [draw opacity=0] (6.25,-3) -- (0,0) -- (6.25,3) -- cycle    ;
\draw [color={rgb, 255:red, 74; green, 144; blue, 226 }  ,draw opacity=1 ][line width=0.75]    (190.44,259.71) .. controls (236.75,182.08) and (331,171.3) .. (400,171.8) ;

\draw (320,155) node [anchor=north west][inner sep=0.75pt]  [font=\footnotesize]  {$\textcolor[rgb]{0.29,0.56,0.89}{\mu_1(n)} 
 \! = \! \textcolor[rgb]{0.82,0.01,0.11}{\mu_1'(n)}$};
\draw (172,105) node [anchor=north west][inner sep=0.75pt]    {$\mu_i$};
\draw (205,262) node [anchor=north west][inner sep=0.75pt]  [font=\footnotesize]  {$n_{\overline{i}}^*$};
\draw (166,165) node [anchor=north west][inner sep=0.75pt]  [font=\footnotesize]  {$1/2$};
\draw (367,196) node [anchor=north west][inner sep=0.75pt]  [font=\footnotesize]  {$\textcolor[rgb]{0.29,0.56,0.89}{\mu_{\overline{i}}(n)}$};
\draw (226,262) node [anchor=north west][inner sep=0.75pt]  [font=\footnotesize]  {$T^*$};
\draw (402,265) node [anchor=north west][inner sep=0.75pt]   [align=left] {$n$};
\draw (367,112) node [anchor=north west][inner sep=0.75pt]  [font=\footnotesize,color={rgb, 255:red, 208; green, 2; blue, 27 }  ,opacity=1 ]  {$\mu_{\overline{i}}'(n)$};
\draw (417,144) node [anchor=north west][inner sep=0.75pt]  [font=\footnotesize,color={rgb, 255:red, 208; green, 2; blue, 27 }  ,opacity=1 ]  {$\Delta_{\overline{i}}'(T)$};
\draw (417,186) node [anchor=north west][inner sep=0.75pt]  [font=\footnotesize,color={rgb, 255:red, 74; green, 144; blue, 226 }  ,opacity=1 ]  {$\Delta_{\overline{i}}(T)$};

\end{tikzpicture}}
    \caption{Instances $\bm{\nu}$ and $\bm{\nu} '$ of SRB used in Theorem~\ref{thr:lb_minimumT} and Theorem~\ref{thr:lb_error_prob}.}
    \label{fig:lb_instances}
\end{figure}

\LowerBound*

\begin{proof}
    The proof combines the techniques of~\citet{kaufmann2016complexity} with the ones of~\cite{carpentier2016tight}. We consider the Gaussian bandits with variance $\sigma^2$ equal for all the arms and the expected reward that are built in terms of the following prototypical instance for all $n \in \dsb{T}$ and $k \in \dsb{2,K}$:
    \begin{align*}
        &\mu_1(n) = \frac{1}{2} \left(1-n^{1-\beta}\right), \\
        &\mu_k(n) = \min \left\{ \left(\frac{1}{2} + \Delta_k\right)(1-n^{1-\beta}), \frac{1}{2} - \Delta_k \right\},
    \end{align*}
    where $\Delta_k \in [0,1/2]$ will be chosen later in the proof and $\Delta_1=0$. We have already shown in the proof of Theorem~\ref{thr:lb_minimumT} that such instances fulfill both Assumptions~\ref{ass:risingconcave} and~\ref{ass:boundedgamma}. Consider now the instances $\bm{\nu}^{(i)}$ defined for $i \in \dsb{K}$:
    \begin{align}
        \mu_k^{(i)}(n) = \begin{cases}
            \mu_k(n) & \text{if } k \neq i \\
            \left( \frac{1}{2} + \Delta_i \right) (1-n^{1-\beta}) & \text{if } k=i
        \end{cases}.
    \end{align}
    For $T$ sufficiently large, whose lower bound is provided in Theorem~\ref{thr:lb_minimumT}, in bandit $\bm{\nu}^{(i)}$ the optimal arm is $i$. For all these instances, we can provide the asymptotic value of the expected rewards:
    \begin{align}
        \mu_k^{(i)}(\infty) \coloneqq \lim_{T \rightarrow + \infty} \mu_k^{(i)}(T) = \begin{cases}
            \frac{1}{2} - \Delta_k & \text{if } k \neq i \\
            \frac{1}{2} + \Delta_i  & \text{if } k=i
        \end{cases}.
    \end{align}
    Let us now define the following quantities for $i,k \in \dsb{K}$:
    \begin{align}
        \Delta_k^{(i)} \coloneqq \mu_i^{(i)}(\infty) - \mu_k^{(i)}(\infty) = \begin{cases}
            0 & \text{if } k = i \\
            \Delta_i + \Delta_k & \text{if } k \neq i
        \end{cases}.
    \end{align}
    Thus, $\Delta_k^{(i)}$ represents the asymptotic sub-optimality gaps for bandit $\bm{\nu}^{(i)}$. Let us also define the complexity terms for $i \in \dsb{K}$:
    \begin{align}
        H_2^{(i)} \coloneqq \sum_{k \neq i} \frac{1}{(\Delta_k^{(i)})^2}.
    \end{align}
    Furthermore, using an argument analogous to that of the proof of Theorem~\ref{thr:lb_minimumT}, we have that, for sufficiently large $T$ (whose value is provided in the statement of the theorem), $\Delta_k^{(i)} \le 2 \Delta_k^{(i)}(T)$, where $ \Delta_k^{(i)}(T) \coloneqq \mu_i^{(i)}(T) - \mu_k^{(i)}(T)$. Thus, we have that:  $ H_2^{(i)} \ge \frac{1}{4} H_{1,2}^{(i)}(T) \coloneqq \sum_{k \neq i} \frac{1}{(\Delta_k^{(i)}(T))^2}$. Finally, we define:
    \begin{align*}
        H^* \coloneqq \sum_{k\in \dsb{2,K}} \frac{1}{\Delta_k^2 H_2^{(k)}}.
    \end{align*}
    
    \textbf{Proof of the first statement.}~~Let $\mathfrak{A}$ be an algorithm, we immediately deduce that there exists an arm $\bar{i} \in \dsb{2,K}$ such that:
    \begin{align}
         \E_{\bm{\nu}^{(1)},\mathfrak{A}}[N_{\bar{i},T}] \le \frac{T}{\Delta_{\bar{i}}^2 H_2^{(1)}}.
    \end{align}
    Indeed, if this is not the case, we have:
    \begin{align*}
    \sum_{\bar{i} \in \dsb{2,K}}  \E_{\bm{\nu}^{(1)},\mathfrak{A}}[N_{\bar{i},T}] > \sum_{\bar{i} \in \dsb{2,K}} \frac{T}{\Delta_{\bar{i}}^2 H_2^{(1)}} = T,
    \end{align*}
    having used the definition of $H_2^{(1)}$, leading to a contradiction. Thus, by using Bretagnolle-Huber's inequality, we obtain:
    \begin{align*}
        \max_{i \in \dsb{K}} \{e_T(\bm{\nu}^{(i)})\} & \ge  \max\{e_T(\bm{\nu}^{(1)}),e_T(\bm{\nu}^{(\bar{i})})\}\\
        & \ge \frac{1}{4} \exp \left( -\E_{\bm{\nu}^{(1)},\mathfrak{A}} \left[ \sum_{t=1}^T \mathds{1}\{ I_t = \bar{i}\} D_{KL} \left({\nu}_{\bar{i}}^{(1)}(N_{\bar{i},t}),{\nu}^{(\bar{i})}_{\bar{i}}(N_{\bar{i},t}) \right)\right] \right) \\
        & = \frac{1}{4} \exp \left( -\E_{\bm{\nu}^{(1)},\mathfrak{A}} \left[ \sum_{t=1}^T \mathds{1}\{ I_t = \bar{i}\} \frac{(\mu_{\bar{i}}^{(1)}(N_{\bar{i},t})-\mu_{\bar{i}}^{(\bar{i})}(N_{\bar{i},t}))^2}{2\sigma^2}\right] \right) \\
        & \ge \frac{1}{4} \exp \left( -\E_{\bm{\nu}^{(1)},\mathfrak{A}} \left[ N_{\bar{i},T}\right] \frac{(2\Delta_{\bar{i}})^2}{2\sigma^2} \right)\\
        & \ge  \frac{1}{4} \exp \left( - \frac{2}{\sigma^2} \cdot \frac{T}{H_2^{(1)}} \right), 
    \end{align*}
    where we observed that for every $n \in \dsb{T}$, we have:
    \begin{align*}
     |\mu_{\bar{i}}^{(1)}(N_{\bar{i},t})-\mu_{\bar{i}}^{(\bar{i})}(N_{\bar{i},t})| & \le \max_{n \in \dsb{0,T}} |\mu_{\bar{i}}^{(1)}(n)-\mu_{\bar{i}}^{(\bar{i})}(n)| \le 2 \Delta_{\bar{i}}.
      \end{align*}
      This concludes the proof of the first statement, having observed that $H_2^{(1)} = \max_{k \in \dsb{K}} H_2^{(k)}$.
      
    \textbf{Proof of the second statement.}~~Let $\mathfrak{A}$ be an algorithm, we immediately deduce that there exists an arm $\bar{i} \in \dsb{2,K}$ such that:
    \begin{align}
         \E_{\bm{\nu}^{(1)},\mathfrak{A}}[N_{\bar{i},T}] \le \frac{T}{H^*\Delta_{\bar{i}}^2 H_2^{(\bar{i})}}.
    \end{align}
    Indeed, if this is not the case, we have:
    \begin{align*}
    \sum_{\bar{i} \in \dsb{2,K}}  \E_{\bm{\nu}^{(1)},\mathfrak{A}}[N_{\bar{i},T}] > \sum_{\bar{i} \in \dsb{2,K}} \frac{T}{H^*\Delta_{\bar{i}}^2 H_2^{(\bar{i})}} = T,
    \end{align*}
    having used the definition of $H^*$, leading to a contradiction. Thus, by using Bretagnolle-Huber's inequality, we obtain:
    \begin{align*}
        \max_{i \in \dsb{K}} \{e_T(\bm{\nu}^{(i)})\} & \ge  \max\{e_T(\bm{\nu}^{(1)}),e_T(\bm{\nu}^{(\bar{i})})\}\\
        & \ge \frac{1}{4} \exp \left( -\E_{\bm{\nu}^{(1)},\mathfrak{A}} \left[ \sum_{t=1}^T \mathds{1}\{ I_t = \bar{i}\} D_{KL} \left({\nu}_{\bar{i}}^{(1)}(N_{\bar{i},t}),{\nu}^{(\bar{i})}_{\bar{i}}(N_{\bar{i},t}) \right)\right] \right) \\
        & = \frac{1}{4} \exp \left( -\E_{\bm{\nu}^{(1)},\mathfrak{A}} \left[ \sum_{t=1}^T \mathds{1}\{ I_t = \bar{i}\} \frac{(\mu_{\bar{i}}^{(1)}(N_{\bar{i},t})-\mu_{\bar{i}}^{(\bar{i})}(N_{\bar{i},t}))^2}{2\sigma^2}\right] \right) \\
        & \ge \frac{1}{4} \exp \left( -\E_{\bm{\nu}^{(1)},\mathfrak{A}} \left[ N_{\bar{i},T}\right] \frac{(2\Delta_{\bar{i}})^2}{2\sigma^2} \right)\\
        & \ge  \frac{1}{4} \exp \left( - \frac{2}{\sigma^2} \cdot \frac{T}{H^* H_2^{(i)}} \right), 
    \end{align*}
    where we observed that for every $n \in \dsb{T}$, we have $|\mu_{\bar{i}}^{(1)}(N_{\bar{i},t})-\mu_{\bar{i}}^{(\bar{i})}(N_{\bar{i},t})|  \le 2 \Delta_{\bar{i}}$.
      We now set the values of $\Delta_k$ so that to maximize the error probability. Specifically, similarly to~\cite{carpentier2016tight}, we chose $\Delta_k = k/(4K) \le 1/2$ for $k \in \dsb{2,K}$. Recalling the definitions of $H^*$ and $H_2^{(k)}$, we have for $k \in \dsb{2,K}$:
    \begin{align*}
        \Delta_k^2 H_2^{(k)} = \Delta_k^2 \sum_{i \neq k} \frac{1}{(\Delta_i^{(k)})^2} = \Delta_k^2 \sum_{i \neq k} \frac{1}{(\Delta_i+\Delta_k)^2} = \sum_{i \neq k} \frac{k^2}{(i+k)^2}.
    \end{align*}
    With simple algebraic bounds, we obtain:
    \begin{align*}
        \sum_{i \neq k} \frac{k^2}{(i+k)^2} \le \sum_{1 \le  < k} \frac{k^2}{k^2} + \sum_{k< i \le K} \frac{k^2}{i^2} \le k + k^2 \int_{x=k}^K \frac{1}{x^2} \de x = k + k^2 \left(\frac{1}{k} - \frac{1}{K}\right) \le 2k.
    \end{align*}
    From this, recalling that $K\ge 2$, we have:
    \begin{align*}
        H^* = \sum_{k=2}^K \frac{1}{\Delta_k^2 H_2^{(k)}} \ge \sum_{k=2}^K \frac{1}{2k} \ge \frac{1}{2} \int_{x=2}^{K+1} \frac{1}{x} \de x  \ge \frac{1}{2} \log \left( \frac{K+1}{2}\right) \ge \frac{1}{4} \log K.
    \end{align*}
	We observe that $H_2^{(1)} = 16 K^2 \sum_{k \in \dsb{2,K}} \frac{1}{k^2} \le 16 K^2 \int_{x=1}^K \frac{1}{x^2} \de x = 16K^2 \left(1-\frac{1}{K} \right) \le 16K^2$.
\end{proof}

\subsection{Auxiliary Lemmas}

\begin{restatable}[H\"oeffding-Azuma’s inequality for weighted martingales]{lemma}{hoeffding} \label{lemma:hoeffding}
Let $\mathcal{F}_1 \subset \dots \subset \mathcal{F}_n$ be a filtration and $X_1, \dots, X_n$ be real random variables such that $X_t$ is $\mathcal{F}_t$-measurable, $\E[X_t|\mathcal{F}_{t-1}]=0$ (\ie a martingale difference sequence), and $\E[\exp(\lambda X_t)|\mathcal{F}_{t-1}]\le \exp\left( \frac{\lambda^2 \sigma^2}{2} \right)$ for any $\lambda >0$ (\ie $\sigma^2$-subgaussian). Let $\alpha_1, \dots, \alpha_n$ be non-negative real numbers. Then,  for every $\kappa \ge 0$ it holds that:
\begin{align*}
\Prob \left( \left| \sum_{t=1}^n \alpha_t X_t \right| >\kappa\right) \le 2 \exp \left( -\frac{\kappa^2}{2 \sigma^2 \sum_{t=1}^n \alpha_i^2}  \right).
\end{align*}
\end{restatable}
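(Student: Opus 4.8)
The plan is to run the standard Chernoff / exponential-moment argument for martingales, treating the two tails separately and combining them by a union bound. Write $S_k \coloneqq \sum_{t=1}^k \alpha_t X_t$ for $k \in \dsb{n}$, so that the target is a bound on $\Prob(|S_n| > \kappa)$. First I would dispose of the degenerate case $\sum_{t=1}^n \alpha_t^2 = 0$ (equivalently all $\alpha_t = 0$), in which $S_n = 0$ almost surely and the claim is trivial; likewise the case $\kappa = 0$ is trivial. So assume $\sum_{t=1}^n \alpha_t^2 > 0$ and $\kappa > 0$ henceforth.

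For the upper tail, fix $\lambda > 0$ and apply Markov's inequality to $e^{\lambda S_n}$: $\Prob(S_n > \kappa) \le e^{-\lambda \kappa}\, \E[e^{\lambda S_n}]$. The key step is to control the moment generating function $\E[e^{\lambda S_n}]$ by peeling off the terms one at a time using the tower property. Since $S_{n-1}$ is $\mathcal{F}_{n-1}$-measurable and $\alpha_n \ge 0$ (so that $\lambda \alpha_n \ge 0$ is an admissible argument for the subgaussianity hypothesis applied to $X_n$), we obtain
\[
\E\bigl[e^{\lambda S_n}\bigr] = \E\Bigl[e^{\lambda S_{n-1}}\, \E\bigl[e^{\lambda \alpha_n X_n} \mid \mathcal{F}_{n-1}\bigr]\Bigr] \le \E\bigl[e^{\lambda S_{n-1}}\bigr] \exp\Bigl(\tfrac{\lambda^2 \alpha_n^2 \sigma^2}{2}\Bigr).
\]
Iterating this inequality down to $S_0 = 0$ yields $\E[e^{\lambda S_n}] \le \exp\bigl(\tfrac{\lambda^2 \sigma^2}{2}\sum_{t=1}^n \alpha_t^2\bigr)$, hence $\Prob(S_n > \kappa) \le \exp\bigl(-\lambda \kappa + \tfrac{\lambda^2 \sigma^2}{2}\sum_{t=1}^n \alpha_t^2\bigr)$. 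Optimizing the right-hand side over $\lambda > 0$ with the choice $\lambda = \kappa / (\sigma^2 \sum_{t=1}^n \alpha_t^2)$ gives $\Prob(S_n > \kappa) \le \exp\bigl(-\tfrac{\kappa^2}{2\sigma^2 \sum_{t=1}^n \alpha_t^2}\bigr)$.

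For the lower tail I would repeat the identical computation with $X_t$ replaced by $-X_t$: the sequence $(-X_t)$ is again a martingale difference sequence adapted to the same filtration, and it is $\sigma^2$-subgaussian conditionally on $\mathcal{F}_{t-1}$ (using the symmetry of the subgaussian condition, or equivalently the two-sided form $\E[e^{\xi x}] \le e^{\sigma^2 \xi^2/2}$ for all $\xi \in \mathbb{R}$ recalled in the paper). This gives $\Prob(S_n < -\kappa) = \Prob(-S_n > \kappa) \le \exp\bigl(-\tfrac{\kappa^2}{2\sigma^2 \sum_{t=1}^n \alpha_t^2}\bigr)$. Adding the two bounds via $\Prob(|S_n| > \kappa) \le \Prob(S_n > \kappa) + \Prob(S_n < -\kappa)$ produces the factor $2$ and concludes.

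There is no genuinely hard step here; the argument is entirely routine, so the "main obstacle" is really just bookkeeping. The points requiring a little care are: (i) the non-negativity of the $\alpha_t$, which is exactly what makes $\lambda \alpha_t$ land in the region ($\ge 0$) where the subgaussianity hypothesis is stated, so that the per-step bound $\E[e^{\lambda \alpha_t X_t}\mid\mathcal{F}_{t-1}] \le e^{\lambda^2 \alpha_t^2 \sigma^2/2}$ is licit; (ii) disposing of the degenerate cases $\sum_t \alpha_t^2 = 0$ and $\kappa = 0$ before choosing the optimal $\lambda$; and (iii) ensuring the subgaussian bound is available for the $-X_t$ used in the lower tail, which follows from the standard two-sided definition of subgaussianity.
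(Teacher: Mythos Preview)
Your proof is correct and follows the standard Chernoff/exponential-moment route. The paper itself does not give an argument at all: its ``proof'' is a one-line pointer to Lemma~C.5 of \citet{metelli2022stochastic}, so there is nothing substantive to compare against beyond noting that your derivation is precisely the classical one that such a reference would contain.
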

\begin{proof}
For a complete proof of this statement, we refer to Lemma~C.5 of~\citet{metelli2022stochastic}.
\end{proof}

\begin{lemma}\label{lem:ineq_lb}
    Let $\beta > 1$, then it holds that:
    \begin{align*}
        H_{1,1/\beta}(T)^{\beta/(\beta-1)} \ge H_{1,1/(\beta-1)}(T).
    \end{align*}
\end{lemma}
\begin{proof}
    We prove the equivalent statement, being $\beta > 1$:
    \begin{align*}
         H_{1,1/(\beta-1)}(T)^{(\beta-1)/\beta} \le H_{1,1/\beta}(T).
    \end{align*}
    Recalling that the function $(\cdot)^{(\beta-1)/\beta}$ is subadditive, being $(\beta-1)/\beta<1$, we have:
    \begin{align*}
        H_{1,1/(\beta-1)}(T)^{(\beta-1)/\beta} = \left( \sum_{i \neq i^*(T)} \frac{1}{\Delta_i(T)^{1/(\beta-1)}} \right)^{(\beta-1)/\beta} \le \sum_{i \neq i^*(T)} \frac{1}{\Delta_i(T)^{1/\beta}} =  H_{1,1/\beta}(T).
    \end{align*}
\end{proof}

\begin{lemma}(Adapted from~\citep[Section 6.1]{audibert2010best})\label{lem:lemlem}
    Let $\eta > 1$, then it holds that:
    \begin{align*}
        H_{2,\eta}(T) \le H_{1,\eta}(T) \le \overline{\log}(K) H_{2,\eta}(T),
    \end{align*}
    where:
    \begin{align*}
    	H_{2,\eta}(T) \coloneqq \max_{i \in \dsb{2,K}} \left\{ i \Delta_{(i)}^{-\eta}(T) \right\},
    \end{align*}
    and by convention $H_{2,2}(T) \eqqcolon H_2(T)$.
\end{lemma}

\begin{proof}
We simply apply the inequalities of~\citep[Section 6.1]{audibert2010best} replacing $\Delta_{(i)}(T)$ with $\Delta_{(i)}^\eta(T)$. 
\end{proof}

\section{Theoretical Analysis of a baseline: \uniformsmooth}
\label{apx:uniformsmooth}

In this appendix, we provide the theoretical analysis for the algorithm \texttt{Round Robin Sliding Window} (\uniformsmooth), as it represents the most intuitive baseline for this setting.
First, we need to formalize the algorithm, whose pseudo-code is provided in Algorithm~\ref{alg:uniformsmooth}.

\begin{center}
\begin{minipage}{0.45\linewidth}
\RestyleAlgo{ruled}
\LinesNumbered
\begin{algorithm}[H]
\caption{\uniformsmooth.}\label{alg:uniformsmooth}
\small
\SetKwInOut{Input}{Input}

\Input{Time budget $T$, Number of arms $K$, \\ Window size $\varepsilon$}
 
Initialize $\text{ } t \leftarrow 1$ \\
Estimate $ N = \frac{T}{K}$

\For{$i \in \dsb{K}$}{
\For{$l \in \dsb{N}$}{
Pull arm $i$ and observe $x_t$ \label{line:us_play} \\
$t \leftarrow t + 1$
}
Update $\hat{\mu}_{i}(N)$ 
}
Recommend $\widehat{I}^{*}(T) \in \argmax_{i \in \dsb{K}} \hat{\mu}_i(N)$ \label{line:us_reccomendation} \\
\end{algorithm}

\end{minipage}
\end{center}

\textbf{Algorithm.}~~The algorithm takes as input the time budget $T$, the number of arms $K$, and the window size $\varepsilon$. Then, it computes the number of pulls $N = \frac{T}{K}$ we need to perform for each arm. After having computed the number of pulls, \uniformsmooth plays all the arms $N$ times in a round-robin fashion. After the $N$ pulls, it estimates $\hat{\mu}_{i}(N)$ using the last $\lfloor \varepsilon N\rfloor$ samples (\ie the ones from $\lceil (1-\varepsilon) N \rceil$ to $N$). Finally, it recommends $\hat{I}^*(T)$, corresponding to the one which the highest estimated $\hat{\mu}_{i}(N)$.

\textbf{Error Probability Bound.}~~Given this quantity, 
The error probability for the \uniformsmooth algorithm can be bounded as follows.
\begin{restatable}[]{thr}{UpperBoundErrorUS}\label{thr:ub_error_us}
    Under Assumptions~\ref{ass:risingconcave} and~\ref{ass:boundedgamma}, considering a time budget $T$ satisfying:
    \begin{align}
        T \geq 2^{\frac{1}{\beta - 1}} \; c^{\frac{1}{\beta-1}} (1-\varepsilon)^{-\frac{\beta}{\beta - 1}} \; K^{\frac{\beta}{\beta - 1}} \; \Delta_{(2)}^{-\frac{1}{\beta - 1}}(T), \label{eq:us_horizon_lb}
    \end{align}
    the error probability of \uniformsmooth is bounded by:
    $$ e_T(\bm{\nu},\text{\uniformsmooth}) \leq (K-1) \; \exp{\left(-\frac{\varepsilon \; T}{8 \; K \; \sigma^2} \; \Delta_{(2)}^2(T) \right)}. $$
\end{restatable}

Some comments are in order. First, it is worth noting how, as expected, by increasing the number of samples considered in the estimator, we reduce the error probability $e_T$ at the cost of a stricter constraint on the time budget $T$. This is due to the request that the arms must be already separated at the beginning of the window we use to estimate the $\hat{\mu}_i(N)$. Second, the error probability scales as an (inverse) function only of the smallest suboptimality gap $\Delta_{(2)}(T)$ and no longer on the (more convenient) complexity index $H_2(T)$.

\subsection{Proofs}

Before demonstrating Theorem~\ref{thr:ub_error_us}, we need to introduce the following technical lemma.
\begin{restatable}[Lower Bound for the Time Budget]{lemma}{ThrLBbudgetUS} \label{thr:lb_horizon_us}
Under Assumptions~\ref{ass:risingconcave} and~\ref{ass:boundedgamma}, the \uniformsmooth algorithm is s.t.~the minimum value for the horizon $T$ ensuring that:
$$ T \; \gamma_{1}( \lceil (1-\varepsilon)N \rceil) \leq \frac{\Delta_{(2)}(T)}{2},$$
where $N = \frac{T}{K}$ and $\varepsilon \in (0,1)$ is:
\begin{align*}
    T \geq 2^{\frac{1}{\beta - 1}} \; c^{\frac{1}{\beta-1}} (1-\varepsilon)^{-\frac{\beta}{\beta - 1}} \; K^{\frac{\beta}{\beta - 1}} \; \Delta_{(2)}^{-\frac{1}{\beta - 1}}(T).
\end{align*}
\end{restatable}

\begin{proof}
Given that, for Assumptions~\ref{ass:risingconcave} and~\ref{ass:boundedgamma}, it holds: 
\begin{align}
    T  \gamma_1( \lceil (1-\varepsilon)N \rceil) &= T \gamma_1\left(\left\lceil(1-\varepsilon)\frac{T}{K} \right\rceil\right) \le c T \left(\left\lceil (1-\varepsilon)\frac{T}{K} \right\rceil\right)^{-\beta} \le c T \left( (1-\varepsilon)\frac{T}{K} \right)^{-\beta}. \label{eq:us_lb_condition}
\end{align}
Thus, we enforce:
\begin{align*}
    c T \left( (1-\varepsilon)\frac{T}{K} \right)^{-\beta} \leq \frac{\Delta_{(2)}(T)}{2} \implies T \geq 2^{\frac{1}{\beta - 1}} \; c^{\frac{1}{\beta-1}} (1-\varepsilon)^{-\frac{\beta}{\beta - 1}} \; K^{\frac{\beta}{\beta - 1}} \; \Delta_{(2)}^{-\frac{1}{\beta - 1}}(T).
\end{align*}
\end{proof}

Now, we can find the error probability $e_T(\bm{\nu},\text{\uniformsmooth})$, which will hold for all the time budgets which satisfy the condition of Lemma~\ref{thr:lb_horizon_us}.

\UpperBoundErrorUS*
\begin{proof}
The \uniformsmooth algorithm makes an error in predicting the best arm when, at the end of the process (at $T$ total pulls), the optimal arm has a pessimistic estimator $\hat{\mu}_1(N)$ that is not the highest among the arms (we consider \wwlog that the best arm is the arm $1$). Formally:
\begin{align*}
    e_T(\bm{\nu},\text{\uniformsmooth}) & = \Prob_{\bm{\nu},\mathfrak{A}} \left( \exists i \in \dsb{2,K} \; : \; \hat{\mu}_{1}(N)  < \hat{\mu}_{i}(N) \right) \\
    & \leq \sum_{i\in \dsb{2,K}} \Prob_{\bm{\nu},\mathfrak{A}} \left( \hat{\mu}_{1}(N) < \hat{\mu}_{i}(N) \right).
\end{align*}
Let us focus on a single arm $i \in \dsb{2,K}$ and we want to upper bound the probability that $\Prob_{\bm{\nu},\mathfrak{A}} \left( \hat{\mu}_{1}(N) < \hat{\mu}_{i}(N) \right)$.
Let us focus on the term inside the probability:
\begin{align}
    \hat{\mu}_{i}(N) & \geq \hat{\mu}_{1}(N) \nonumber \\
    \hat{\mu}_{i}(N) - \hat{\mu}_{1}(N) &\geq 0 \nonumber \\
    \mu_{1}(T) - \hat{\mu}_{1}(N) + \hat{\mu}_{i}(N) - \mu_{i}(T) &\geq \Delta_{i}(T) \label{eq:us_sumsub} \\
    \underbrace{\mu_{1}(T) - \overbar{\mu}_{1}(N)}_{\leq T  \gamma_{1}(\lceil(1-\varepsilon)N\rceil)} - \hat{\mu}_{1}(N) + \overbar{\mu}_{1}(N) + \hat{\mu}_{i}(N) \underbrace{-\mu_{i}(T)}_{\leq - \overbar{\mu}_{i}(N)} &\geq \Delta_{i}(T) \label{eq:us_sumsub1}\\
    - \hat{\mu}_{1}(N) + \overbar{\mu}_{1}(N) + \hat{\mu}_{i}(N) - \overbar{\mu}_{i}(N) & \geq \Delta_{i}(T)- T  \gamma_{1}(N - 1), \label{eq:us_final}
\end{align}
where we added $\pm \Delta_{i}(T)$ to derive Equation~\eqref{eq:us_sumsub}, and added $\pm \overbar{\mu}_{1}(N)$ to derive Equation~\eqref{eq:us_sumsub1}, we used the results in Lemma~\ref{lem:bias} and from the fact that the reward function is increasing.
Considering a time budget $T$ satisfying Theorem~\ref{thr:lb_horizon_us}, and $\Delta_{i}(T) \geq \Delta_{(2)}(T)$ for all $ i \in \dsb{2,K}$, we have that:
\begin{align}
    T \gamma_{1}(\lceil (1-\varepsilon)N \rceil) &\leq \frac{\Delta_{i}(T)}{2}. \label{eq:us_proof_eq0}
\end{align}
Equation~\eqref{eq:us_proof_eq0} holds since we are considering a time budget $T$ which satisfies a more restrictive condition (we are considering a time budget at which this separation already holds for $\lceil(1-\varepsilon)N\rceil$, so it also holds now).
Substituting Equation~\eqref{eq:us_proof_eq0} into Equation~\eqref{eq:us_final} the above, we have:
$$ - \hat{\mu}_{1}(N) + \overbar{\mu}_{1}(N) + \hat{\mu}_{i}(N) - \overbar{\mu}_{i}(N) \geq \frac{\Delta_{i}(T)}{2},$$
and the error probability becomes:
$$ e_T(\bm{\nu},\text{\uniformsmooth}) \leq \sum_{i=2}^{K} \Prob_{\bm{\nu},\mathfrak{A}} \left( - \hat{\mu}_{1}(N) + \overbar{\mu}_{1}(N) + \hat{\mu}_{i}(N) - \overbar{\mu}_{i}(N) \geq \frac{\Delta_{i}(T)}{2} \right).$$

For the previous argumentation, we apply the Azuma-Ho\"{e}ffding's inequality and the union bound:
\begin{align*}
    e_T(\bm{\nu},\text{\uniformsmooth}) & \leq \sum_{i=2}^{K} \exp\left( - \frac{\varepsilon N \left( \frac{\Delta_{i}(T)}{2} \right)^2}{2 \sigma^2} \right) \\
    &\leq (K-1) \exp\left( - \frac{\varepsilon T}{8 K \sigma^2} \Delta_{(2)}^2(T) \right).
\end{align*}
\end{proof}
\section{On the necessity of Concavity Assumption}
\label{apx:concavity}
In Assumption~\ref{ass:risingconcave}, we introduce two conditions on the behavior of the expected payoffs over time: they must be non-decreasing and concave. If even one of these two lacks, the error probability cannot be guaranteed to be decreasing as a function of the budget $T$. From an intuitive perspective, this is similar to what happens for regret minimization in~\citep[][Theorem~4.2]{metelli2022stochastic}, in which the authors demonstrate the non-learnability (\ie an $\Omega(T)$ cumulative regret lower bound) when these two assumptions do not hold. 

From a technical perspective, we can demonstrate that the error probability no longer depends on $T$ when we remove the \emph{concavity} assumption, as stated in the proposition below.

\begin{prop}
For every algorithm $\mathfrak{A}$, there exists a SRB $\bm{\nu}$ satisfying the non-decreasing assumption ($\gamma_i(n) \geq 0 , \ \forall i \in \dsb{K}, n \in \dsb{T}$), such that the error probability is lower bounded by:
\begin{align*}
    e_T(\bm{\nu}, \mathfrak{A} ) \geq \frac{1}{4} \exp \left(- \frac{1}{8} \right).
\end{align*}
\end{prop}
\begin{proof}
Consider two Gaussian bandits with unit variance. Let $\boldsymbol\nu$ be a 2-armed bandit with expected rewards $\mu_1(n) = 1/2$ and $\mu_2(n) = 3/4$, both $\forall n \in \dsb{T}$, thus its optimal arm is $i^*_{\boldsymbol\nu}(T)=2$. Let $\boldsymbol\nu'$ be a 2-armed bandit with expected values $\mu_1(n) = 1/2 \; \forall n \in \dsb{T-1}$, $\mu_1(T) = 1$ and $\mu_2(n) = 3/4 \; \forall n \in \dsb{T}$, thus the optimal arm is $i^*_{\boldsymbol\nu'}(T)=1$. Notice that bandit $\boldsymbol\nu'$ violates the concavity assumption. Now, applying the Bretagnolle-Huber inequality, we have that: 
\begin{align}
e_T(\bm{\nu}, \mathfrak{A} ) & = \max \left\{\Prob_{\boldsymbol\nu,\mathfrak{A}}(\hat{I}^*(T) \neq 2), \Prob_{\boldsymbol\nu',\mathfrak{A}}(\hat{I}^*(T) \neq 1) \right\}  \\
& \ge \frac{1}{4} \exp\left( -\mathbb{E}_{\boldsymbol\nu,\mathfrak{A}} \left[\sum_{t=1}^T \kl (\nu_{I_t}(N_{I_t,t}), \nu'_{I_t}(N_{I_t,t})) \right]\right)\\
& \ge \frac{1}{4}  \exp\left( - \kl (\nu_{1}(T), \nu'_{1}(T)) \right) \\
& = \frac{1}{4}  \exp\left( - \frac{1}{8} \right),
\end{align}
having observed that $\kl(\nu_{I_t}(N_{I_t,t}), \nu'_{I_t}(N_{I_t,t})) = 0$ if $t < T$ regardless the arm $I_t \in \{1,2\}$ and that  $\kl(\nu_{I_T}(N_{I_T,T}), \nu'_{I_T}(N_{I_T,T})) \le \kl(\nu_{1}(T), \nu'_{1}(T)) = 1/8$.
\end{proof}
\section{Experimental Details} \label{app:experimentaldetails}
In this section, we provide all the details about the presented experiments.

The payoff functions characterizing the arms shown in Figure~\ref{fig:syntheticsetting} belong to the family:
$$F = \left\{ f(x) = c\left(1-\frac{b}{(b^{1/\psi}+x)^{\psi}}\right) \right\},$$
where $c,\ \psi \in (0, 1]$ and $b \in [0, +\infty)$.
Note that, by construction, all the functions laying in $F$ satisfy the Assumptions~\ref{ass:risingconcave} and~\ref{ass:boundedgamma}.
In particular, the largest value of $\beta$ satisfying Assumption~\ref{ass:boundedgamma}, for the setting presented in Section~\ref{sec:numericalsimulations}, is $\beta = 1.3$.
In Table~\ref{tab:arms_param}, we report the value of the parameters characterizing the function employed in the synthetically generated setting presented in the main paper.

\begin{table}[t!]
    \centering
    \begin{tabular}{c|ccc}
         \toprule
         &  $b$ & $c$ & $\psi$\\
         \midrule
         \midrule
         Arm 1 & $37$ & $1$ & $1$\\
         Arm 2 & $10$ & $0.88$ & $1$\\
         Arm 3 & $1$ & $0.78$ & $1$\\
         Arm 4 & $10$ & $0.7$ & $1$\\
         Arm 5 & $20$ & $0.5$ & $1$\\
         \bottomrule
    \end{tabular}
    \vspace{0.3cm}
    \caption{Numerical values of the parameters characterizing the functions for the synthetically generated setting.}
    \label{tab:arms_param}
\end{table}

\subsection{Parameters Values for the Algorithms}
This section provides a detailed view of the parameter values we employed in the presented experiments.
More specifically, the parameters, which may still depend on the time budget $T$ and on the number of arms $K$, are set as follows:
\begin{itemize}
    \item \ucbbubeck: for the exploration parameter $a$, we used the optimal value, \ie the one that minimizes the upper bound of the error probability, as prescribed in~\citet{audibert2010best}, formally:
    $$ a = \frac{25 (T - K)}{36 H_1},$$ where $H_1 = \sum_{i\neq i^*(T)} \frac{1}{\Delta_i^2}$;
    \item \ucbeshort: we used the value prescribed by Corollary~\ref{coroll:ub_error_beta} where we set the value $\beta = 1.3$;
    \item \etccella and \restsurecella: we set $\rho = 0.8$ and $U = 1$ as suggested by~\citet{cella2021best}.
\end{itemize}

\subsection{Running Time}
\label{apx:computational_time}

The code used for the results provided in this section has been run on an Intel(R) I7 9750H @ 2.6GHz CPU with $16$ GB of \textit{LPDDR4} system memory.
The operating system was \textit{MacOS} $13.1$, and the experiments were run on \textit{Python} $3.10$.
A run of \ucbeshort over a time budget of $T = 3200$ takes $\approx 0.07$ seconds (on average), while a run of \succrejectshort takes $\approx 0.06$ seconds (on average).

\section{Additional Experimental Results} \label{apx:add_experiments}

In this section, we present additional results in terms of empirical error $\overline{e}_T$ of \ucbeshort, \succrejectshort, and the other baselines presented in Section~\ref{sec:numericalsimulations}.

\subsection{Challenging scenario}

Here we test the algorithms on a challenging scenario in which we consider $K = 3$ arms whose increment changes \emph{abruptly}.
The setting is presented in Figure~\ref{fig:exp2_arms}. The results corresponding to such a setting are presented in Figure~\ref{fig:exp2}. In this case, the last time the optimal arm does not change anymore is $T= 280$.
Similarly to the synthetic setting presented in the main paper, we have two different behaviors for time budgets $T < 280$ and $T > 280$.
For short time budgets, the algorithm providing the best performance is \restsurecella, and the second best is \ucbeshort.
Conversely, for time budgets $T > 550$, \ucbeshort provides a correct suggestion in most of the cases providing an error $\overline{e}_T(\bm{\nu},\text{\ucbeshort}) < 0.01$.
Instead,  \restsurecella is not consistently providing reliable suggestions.
This is allegedly due to the fact that such an algorithm has been designed to work in less general settings than the one we are tackling.
Even in this case, the \succrejectshort starts providing a small value for the error probability after \ucbeshort does, at $T \approx 1000$.
However, it is still better behaving than the other baseline algorithms. Note that \restsurecella has a peculiar behavior. Indeed, it seems that even for large values of the time budget, it does not consistently suggest the optimal arm (\ie the error probability does not go to zero).
This is likely due to the nature of the parametric shape enforced by the algorithm, which may result in unpredictable behaviors when it does not reflect the nature of the real reward functions.

\begin{figure*}[t!]
    \centering
    \subfloat[Synthetic setting.]{\resizebox{0.40\linewidth}{!}{\includegraphics{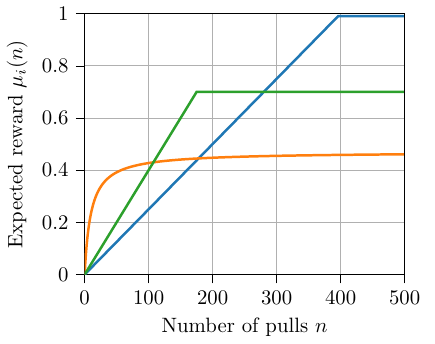}} \label{fig:exp2_arms}}
    \hfill
    \subfloat[Results on synthetic setting (100 runs, mean $\pm$ 95\% c.i.). $\text{ } \qquad \quad \text{ }$]{\resizebox{0.527\linewidth}{!}{\includegraphics{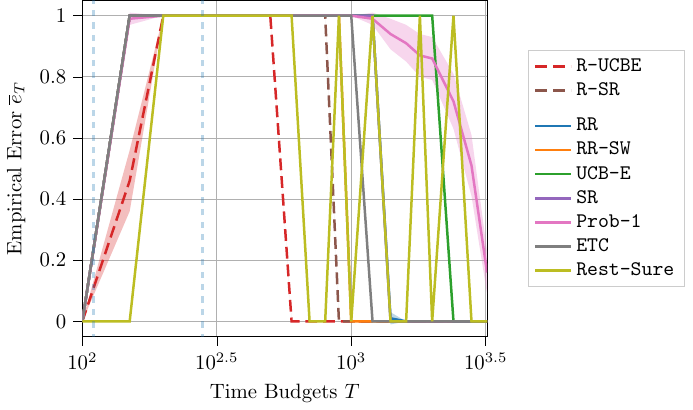}} \label{fig:exp2}}
    \caption{Challenging scenario in which the arm reward increment rate changes abruptly.}
    \label{fig:synthetic_settings_res_app}
\end{figure*}

\subsection{Sensitivity Analysis on the Noise Variance}

In what follows, we report the analysis of the robustness of the analyzed algorithms as noise standard deviation $\sigma$ changes in the collected samples. The setting we considered is the one described in Section~\ref{sec:numericalsimulations}.
The results are provided in Figure~\ref{fig:sigma_sens}.
Let us focus on the performances of the \ucbeshort algorithm.
For small values of the standard deviation ($\sigma < 0.01$), we have the same behavior in terms of error probability, \ie a progressive degradation of the performances for time budget $T=150$.
Indeed, at this time budget, the expected rewards of $3$ arms are close to each other, and determining the optimal arm is a challenging problem.
However, the performances are better or equal to all the other algorithms even at this point.
Conversely, for values of the standard deviation $\sigma \geq 0.05$, the performance of \ucbeshort starts to degrade, with behavior for $\sigma = 0.5$ which is constant w.r.t.~the chosen time budget with a value of $\overline{e}_T(\bm{\nu},\text{\ucbeshort}) = 0.8$.
This suggests that such an algorithm suffers in the case the stochasticity of the problem is significant.
Let us focus on \succrejectshort.
This algorithm does not change its performances w.r.t. changes in terms of $\sigma$.
Indeed, only for $\sigma = 0.5$, we have that it does not provide an error probability close to zero for time budget $T > 1000$.
However, excluding \ucbeshort, we have that the \succrejectshort algorithm is the best/close to the best performing algorithm.
This is also true in the case of $\sigma = 0.5$, in which the \ucbeshort fails in providing a reliable recommendation for the optimal arm with a large probability.

\begin{figure*}[p!]

    \centering
     
    \subfloat[$\sigma=0.001$]{\resizebox{0.45\linewidth}{!}{\includegraphics{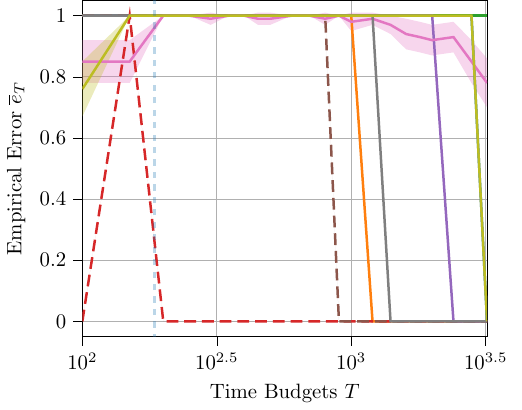}} \label{fig:exp0001}}
    \hfill
    \subfloat[$\sigma=0.005$]{\resizebox{0.45\linewidth}{!}{\includegraphics{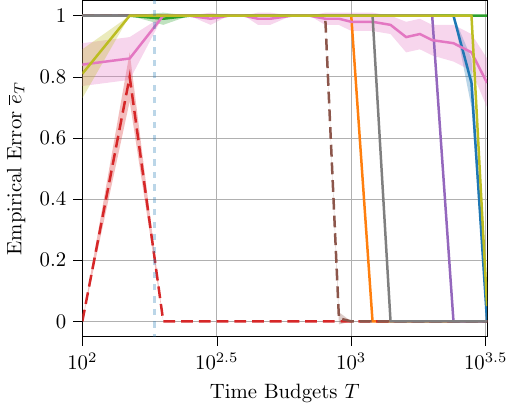}} \label{fig:exp0005}}
    
    \vspace{.5cm}
    
    \subfloat[$\sigma=0.01$]{\resizebox{0.45\linewidth}{!}{\includegraphics{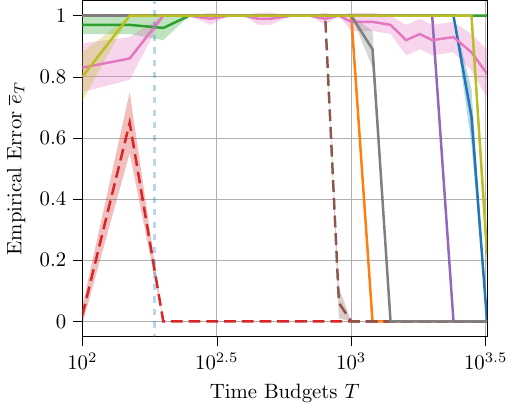}} \label{fig:exp001}}
    \hfill
    \subfloat[$\sigma=0.05$]{\resizebox{0.45\linewidth}{!}{\includegraphics{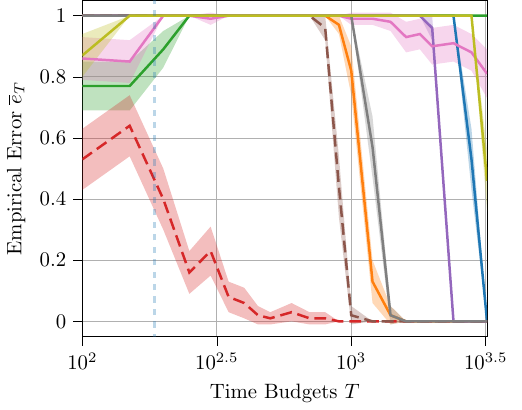}} \label{fig:exp005}}

    \vspace{.5cm}
    
    \subfloat[$\sigma=0.1$]{\resizebox{0.45\linewidth}{!}{\includegraphics{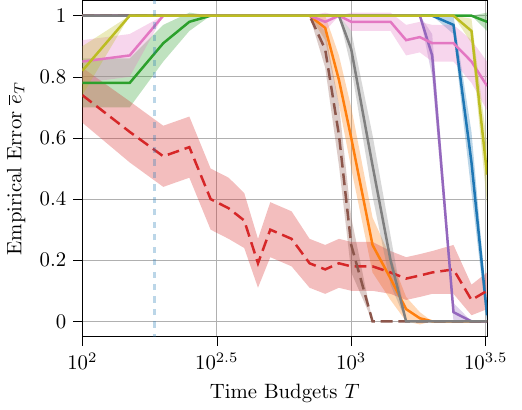}} \label{fig:exp01}}
    \hfill
    \subfloat[$\sigma=0.5$]{\resizebox{0.45\linewidth}{!}{\includegraphics{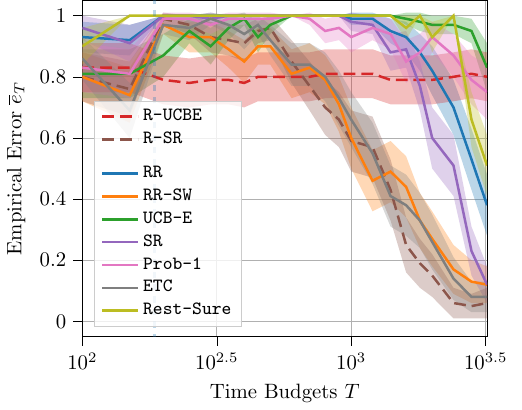}} \label{fig:exp05}}
    
    \vspace{.5cm}
    
    \caption{Empirical error probability for the synthetically generated setting, with different values of the noise standard deviation $\sigma$ (100 runs, mean $\pm$ 95\% c.i.).}
    
    \label{fig:sigma_sens}
    
\end{figure*}

\subsection{Real-world Experiment on IMDB dataset} \label{subsec:imdb}

\textbf{Description.}~~We validate our algorithms and the baselines on an AutoML task, namely an \emph{online best model selection} problem with a real-world dataset.
We employ the IMDB dataset, made of $50,000$ reviews of movies (scores from $0$ to $10$).
We preprocessed the data as done by~\citet{metelli2022stochastic}, and run the algorithms for time budgets $T \in \mathcal{T} \coloneqq \{ 500, 1000,\ldots, 15000, 20000, 30000\}$.
A graphical representation of the reward (in this case, represented by the accuracy) of the different models is presented in Figure~\ref{fig:imdb_arms}.
Since, in this case, we only had a single realization to estimate the error probability $\bar{e}_T(\bm{\nu},\mathfrak{A})$, we report  the success rate $R(\bm{\nu},\mathfrak{A})$ instead,  \ie the ratio between the number of times an algorithm provides a correct suggestion and the number of budget values we considered, formally defined as $R(\bm{\nu},\mathfrak{A}) \coloneqq \frac{1}{|\mathcal{T}|} \sum_{T \in \mathcal{T}} \mathds{1} \{\hat{I}^*(T) = i^*(T)\}$ (the larger, the better).

\textbf{Results.}~~The results are reported in Table~\ref{tab:imdb_res}.
The algorithm with the largest success rate $R(\bm{\nu},\mathfrak{A})$ is the \ucbeshort, while \succrejectshort provides the third best success rate.
Moreover, \restsurecella, the only algorithm providing a success rate larger than \succrejectshort, has issues with large time budgets since for $T \geq 5000$ is able to provide only $2$ correct guesses of the optimal arm over $6$ attempts.
Conversely, our algorithms progressively provide more and more correct guesses as the time budget $T$ increases.
The above results on a real-world dataset corroborate the evidence presented above that the proposed algorithms outperform state-of-the-art ones for the BAI problem in \settingnameshort.

\begin{figure}[t]
    \centering
    \resizebox{0.45\linewidth}{!}{\includegraphics{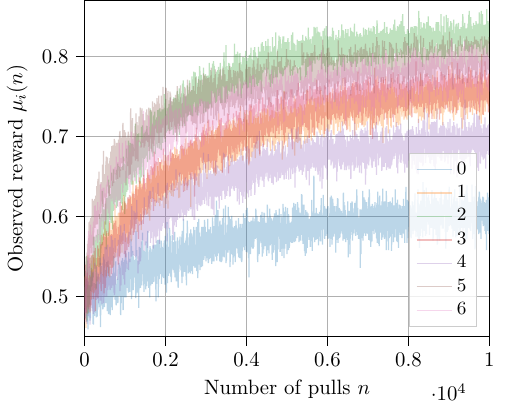}}
    \captionof{figure}{Rewards for the arms of the IMDB experiments.}
    \label{fig:imdb_arms}
\end{figure}

\begin{table}[t!]
   \centering
   \resizebox{\linewidth}{!}{
    \begin{tabular}{cc||cc|ccccccccc||c}
         \toprule
         &&& \multicolumn{9}{c}{$T$} && \multirow{3}{*}{$R(\mathfrak{U})$} \\
         && $500$ & $1000$ & $2000$ & $3000$ & $4000$ & $5000$ & $7000$ & $10000$ & $15000$ & $20000$ & $30000$ & \\
         \cmidrule{0-12} 
         & Optimal Arm & $5$ & $5$ & $2$ & $2$ & $2$ & $2$ & $2$ & $2$ & $2$ & $2$ & $2$ & \\
         \midrule
        \multirow{9}{*}{\rotatebox[origin=c]{90}{{Algorithms}}} & \ucbeshort (ours) & $6$ & $5$ & $2$ & $5$ & $2$ & $2$ & $2$ & $2$ & $2$ & $2$ & $2$ & $9/11$ \\
        & \succrejectshort (ours) & $5$ & $5$ & $5$ & $5$ & $5$ & $5$ & $5$ & $5$ & $2$ & $2$ & $2$ & $5/11$ \\
        \cmidrule{2-14}
        & \uniform & $5$ & $5$ & $5$ & $5$ & $5$ & $5$ & $5$ & $5$ & $5$ & $5$ & $5$ & $2/11$ \\
        & \uniformsmooth & $5$ & $5$ & $5$ & $5$ & $5$ & $5$ & $5$ & $5$ & $5$ & $2$ & $2$ & $4/11$ \\
        & \srbubeck & $5$ & $5$ & $5$ & $5$ & $5$ & $5$ & $5$ & $5$ & $5$ & $5$ & $2$ & $3/11$ \\
        & \ucbbubeck & $5$ & $5$ & $5$ & $5$ & $5$ & $5$ & $5$ & $5$ & $5$ & $5$ & $5$ & $2/11$ \\
        & \probone & $1$ & $5$ & $2$ & $5$ & $5$ & $5$ & $5$ & $1$ & $5$ & $6$ & $2$ & $3/11$ \\
        & \etccella & $5$ & $5$ & $5$ & $5$ & $5$ & $5$ & $5$ & $5$ & $5$ & $5$ & $2$ & $3/11$ \\
        & \restsurecella & $6$ & $5$ & $2$ & $2$ & $2$ & $1$ & $0$ & $2$ & $5$ & $0$ & $2$ & $6/11$ \\
         \bottomrule
    \end{tabular}}
    \vspace{0.3cm}
    \caption{Optimal arm for different time budgets on the IMDB dataset (first row) and corresponding recommendations provided by the algorithms (second to last row). In the last column, we compute the corresponding success rate.}
    \label{tab:imdb_res}
\end{table}

\end{document}